\newtheorem{theorem}{Theorem}
\newtheorem*{theorem*}{Theorem}
\newtheorem*{proof*}{proof}
\newtheorem{definition}{Definition}
\newtheorem*{definition*}{Definition}
\newtheorem{assumption}{Assumption}
\newtheorem*{assumption*}{Assumption}
\newtheorem{lemma}{Lemma}
\newtheorem*{lemma*}{Lemma}
\newtheorem{proposition}{Proposition}
\newtheorem*{proposition*}{Proposition}
\newtheorem*{corollary*}{Corollary}
\newtheorem{remark}{Remark}
\newcommand{\Dcal}{{\mathcal{D}}}
\newcommand{\Ical}{{\mathcal{I}}}
\newcommand{\Lcal}{{\mathcal{L}}}
\newcommand{\Mcal}{{\mathcal{M}}}
\newcommand{\Ncal}{{\mathcal{N}}}
\newcommand{\Ocal}{{\mathcal{O}}}
\newcommand{\Ebb}{{\mathbb{E}}}
\newcommand{\Pbb}{{\mathbb{P}}}
\newcommand{\Rbb}{{\mathbb{R}}}
\newcommand{\Sbb}{{\mathbb{S}}}
\newcommand{\Abf}{{\mathbf{A}}}
\newcommand{\Bbf}{{\mathbf{B}}}
\newcommand{\Cbf}{{\mathbf{C}}}
\newcommand{\Fbf}{{\mathbf{F}}}
\newcommand{\Gbf}{{\mathbf{G}}}
\newcommand{\Hbf}{{\mathbf{H}}}
\newcommand{\Ibf}{{\mathbf{I}}}
\newcommand{\Kbf}{{\mathbf{K}}}
\newcommand{\Lbf}{{\mathbf{L}}}
\newcommand{\Obf}{{\mathbf{O}}}
\newcommand{\Ubf}{{\mathbf{U}}}
\newcommand{\Wbf}{{\mathbf{W}}}
\newcommand{\ubf}{{\mathbf{u}}}
\newcommand{\vbf}{{\mathbf{v}}}
\newcommand{\wbf}{{\mathbf{w}}}
\newcommand{\xbf}{{\mathbf{x}}}
\newcommand{\ybf}{{\mathbf{y}}}
\newcommand{\zbf}{{\mathbf{z}}}
\newcommand{\Lambdabf}{{\bm{\Lambda}}}
\newcommand{\Phibf}{{\bm{\Phi}}}
\newcommand{\Psibf}{{\bm{\Psi}}}
\newcommand{\omegabf}{{\bm{\omega}}}
\newcommand{\zerobf}{{\mathbf{0}}}
\newcommand{\defeq}{\coloneqq}
\DeclareMathOperator*{\E}{\Ebb}
\DeclareMathOperator{\diag}{\mathrm{diag}}
\DeclareMathOperator{\tr}{\mathrm{tr}}
\newcommand{\iverson}[1]{{\left\llbracket{#1}\right\rrbracket}}
\newcommand{\set}[1]{\left\lbrace{#1}\right\rbrace}
\newcommand{\setcomp}[2]{\left\lbrace{#1} \relmiddle| {#2}\right\rbrace}
\newcommand{\abs}[1]{\left|{#1}\right|}
\newcommand{\norm}[1]{\left\lVert{#1}\right\rVert}
\newcommand{\inpr}[2]{\left\langle{#1},{#2}\right\rangle}
\newcommand{\pinv}[1]{{{#1}^{\dagger}}}
\newcommand{\relmiddle}[1]{\mathrel{}\middle#1\mathrel{}}
\newcommand{\diff}[2]{\frac{\rd{#1}}{\rd{#2}}}
\newcommand{\rd}{\mathrm{d}}
\renewcommand{\epsilon}{\varepsilon}
\newcommand{\Prob}[1]{\Pbb\left\{{#1}\right\}}
\newcommand{\stochO}{{\Ocal_{\Pbb}}}
\newcommand{\stocho}{{o_{\Pbb}}}
\renewcommand{\vec}[1]{{\mathrm{vec}({#1})}}
\newcommand{\eigmin}[1]{{\lambda_{\min}({#1})}}
\newcommand{\stopgrad}{{\mathrm{SG}}}
\newcommand{\aug}{{\mathrm{aug}}}
\newcommand{\loss}{{\Lcal}}
\newcommand{\losssq}{{\loss_{\mathrm{sq}}}}
\newcommand{\losscos}{{\loss_{\mathrm{cos}}}}
\newcommand{\frob}{{\mathrm{F}}}
\newcommand{\sym}{{\mathbb{S}\mathrm{ym}}}
\Crefname{assumption}{Assumption}{Assumptions}
\crefname{assumption}{Assump.}{Assumps.}
\Crefname{theorem}{Theorem}{Theorems}
\crefname{theorem}{Thm.}{Thms.}
\Crefname{proposition}{Proposition}{Propsisions}
\crefname{proposition}{Prop.}{Props.}
\Crefname{lemma}{Lemma}{Lemmas}
\crefname{lemma}{Lem.}{Lems.}
\title{Feature Normalization Prevents Collapse of Non-contrastive Learning Dynamics}
\author{%
  Han Bao \\
  The Institute of Statistical Mathematics \\
  \texttt{bao.han@ism.ac.jp}
}
\begin{document}

\maketitle

\begin{abstract}
  Contrastive learning is a self-supervised representation learning framework, where two positive views generated through data augmentation are made similar by an attraction force in a data representation space, while a repulsive force makes them far from negative examples.
  Non-contrastive learning, represented by BYOL and SimSiam, further gets rid of negative examples and improves computational efficiency.
  While learned representations may collapse into a single point due to the lack of the repulsive force at first sight, \cite{Tian2021ICML} revealed through the learning dynamics analysis that the representations can avoid collapse if data augmentation is sufficiently stronger than regularization.
  However, their analysis does not take into account commonly-used \emph{feature normalization}, a normalizer before measuring the similarity of representations, and hence excessively strong regularization may still collapse the dynamics, which is an unnatural behavior under the presence of feature normalization.
  Therefore, we extend the previous theory based on the L2 loss by considering the cosine loss instead, which involves feature normalization.
  We show that the cosine loss induces sixth-order dynamics (while the L2 loss induces a third-order one), in which a stable equilibrium dynamically emerges even if there are only collapsed solutions with given initial parameters.
  Thus, we offer a new understanding that feature normalization plays an important role in robustly preventing the dynamics collapse.
\end{abstract}

\section{Introduction}
\label{section:introduction}

Modern machine learning often owes to the success of self-supervised representation learning,
contrastive learning is popular among them, in which data augmentation generates two positive views from the original data and their encoded features are contrasted with negative samples \citep{Chopra2005CVPR,vdOord2018}.
In particular, \cite{Chen2020ICML} conducted large-scale contrastive learning with 10K+ negative samples to establish comparable downstream classification performance even to supervised learners.
The benefit of large-scale negative samples has been observed both theoretically \citep{Nozawa2021NeurIPS,Bao2022ICML} and empirically \citep{Chen2021CVPR,Tomasev2022}, but it is disadvantageous in terms of computational efficiency.

By contrast, non-contrastive learning trains a feature encoder with only positive views, leveraging additional implementation tricks.
The seminal work \cite{Grill2020NeurIPS} proposed BYOL to introduce the momentum encoder and apply gradient stopping for one encoder branch only.
The follow-up work \cite{Chen2021CVPR} showed that gradient stopping brings success into non-contrastive learning via a simplified architecture SimSiam.
Despite their empirical successes, non-contrastive learning lacks the repulsive force induced by negative samples and learned representations may apparently end up with \emph{complete collapse} so that all points are mapped to a constant.
According to folklore, the success is attributed to asymmetric architectures between the two branches \citep{Wang2022CVPR}.
\cite{Tian2021ICML} first tackled the question \emph{why non-contrastive learning does not collapse}, by specifically studying the learning dynamics of BYOL.
They tracked the eigenvalues of the encoder parameters and found that the eigenvalue dynamics have non-trivial equilibriums unless the regularization is overly strong.
To put it differently, the balance between data augmentation and regularization controls the existence of non-trivial solutions.
However, this analysis dismisses \emph{feature normalization} practically added to normalize the encoded positive views before computing their similarity.
As feature normalization blows up when encoded features approach zero, the analysis of \cite{Tian2021ICML} may not fully explain the behavior of the non-contrastive learning dynamics with strong regularization.
Indeed, our pilot study (\cref{figure:pilot}) reveals that SimSiam learning dynamics does not collapse with much heavier regularization than the default strength $\rho=10^{-4}$.
The mechanism remains unclear why non-contrastive learning dynamically eschews collapse under heavy regularization.
\begin{figure}[t]
  \centering
  \includegraphics[width=0.6\textwidth]{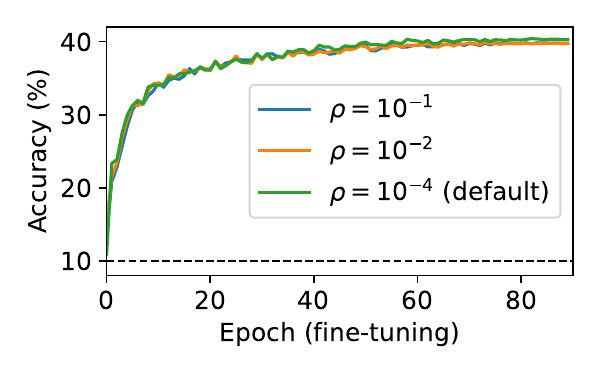}
  \caption{
    Linear probing accuracy of SimSiam representations of the CIFAR-10 dataset \citep{Krizhevsky2009techrep} is not driven to complete collapse by changing the weight decay intensity $\rho$---if the representation fell into complete collapse, the accuracy would stay at the chance rate ($10$\%, the horizontal dashed black line).
    The horizontal axis indicates fine-tuning epochs of the linear classifier.
    For non-contrastive pre-training, we used the ResNet-18 model \citep{He2016CVPR} with the initial learning rate $5\times 10^{-6}$, $500$ epochs, and different $\rho$ indicated in the legends.
    Other parameters and setup were inherited from the official implementation \citep{Chen2021CVPR}.
  }
  \label{figure:pilot}
\end{figure}

Therefore, we study the non-contrastive learning dynamics with feature normalization: an encoded feature $\Phibf\xbf$ for an input $\xbf \in \Rbb^d$ and encoder $\Phibf \in \Rbb^{h \times d}$ is normalized such as $\Phibf\xbf / \norm{\Phibf\xbf}_2$.
The main challenge is that the normalization yields a highly nonlinear dynamics because input random variables appear in the denominator of a loss, which makes the analysis of the expected loss convoluted.
This is a major reason why the existing studies on non-contrastive learning sticks to the L2-loss dynamics without the normalization \citep{Tian2021ICML,Wang2021arXiv,Pokle2022AISTATS,Wen2022NeurIPS,Liu2023ICLR,Tang2023ICML}.
Instead, we consider the high-dimensional limit $d, h \to \infty$, where the feature norm $\norm{\Phibf\xbf}_2$ concentrates around a constant regardless of $\xbf$, with proper initialization.
In this way, we can analyze the learning dynamics with feature normalization.
Under a synthetic setup, we derive the learning dynamics of encoder parameters (\cref{section:dynamics}), and disentangle it into the eigenvalue dynamics (\cref{section:eigenvalue_dynamics}).
The eigenvalue dynamics is sixth-order, and we find that a stable equilibrium emerges even if there is no stable equilibrium with the initial parametrization and regularization strength (\cref{section:regimes}).
This dynamics behavior is in contrast to the third-order dynamics of \cite{Tian2021ICML}, compared in \cref{section:comparison}.
We additionally observe how a stable equilibrium emerges through numerical simulation (\cref{section:experiments}).
Thus, we demonstrate how feature normalization prevents the complete collapse using a synthetic model, without resorting to the L2 loss formulation.

\section{Related work}
\label{section:related}

Recent advances in contrastive learning can be attributed to the InfoNCE loss \citep{vdOord2018}, which can be regarded as a multi-sample mutual information estimator between the two views \citep{Poole2019ICML,Song2020ICLR}.
\cite{Chen2020ICML} showed that large-scale contrastive representation learning can potentially perform comparably to supervised vision learners.
This empirical success owes to a huge number of negative samples, forming a repulsive force in contrastive learning.
Follow-up studies confirmed that larger negative samples are generally beneficial for downstream performance \citep{Chen2021CVPR,Tomasev2022},
and the phenomenon has been verified through theoretical analysis of the downstream classification error \citep{Nozawa2021NeurIPS,Wang2022ICLR,Bao2022ICML,Awasthi2022ICML},
whereas larger negative samples require heavier computation.

Non-contrastive learning is another framework without negative samples.
Although it may fail due to lack of the repulsive force,
BYOL \citep{Grill2020NeurIPS} and SimSiam \citep{Chen2021CVPR} introduced clever implementation tricks based on Siamese nets.
Other approaches conduct representation learning and clustering iteratively (e.g., SwAV \citep{Caron2020NeurIPS} and TCR \citep{Li2022}),
impose regularization on the covariance matrix (e.g., Barlow Twins \citep{Zbontar2021ICML}, W-MSE \citep{Ermolov2021ICML}, and VICReg \citep{Bardes2022ICLR}),
and leverage distillation (e.g., DINO \citep{Caron2021ICCV}).
While these methods succeed, we are still seeking theoretical understanding of \emph{why} non-contrastive dynamics does not collapse and \emph{what} non-contrastive dynamics learns.
For the latter, recent studies revealed that it implicitly learns a subspace \citep{Wang2021arXiv}, sparse signals \citep{Wen2021ICML}, a permutation matrix over latent variables \citep{Pokle2022AISTATS}, augmentation-invariant equivalent classes \citep{Dubois2022NeurIPS}, and a low-pass filter of parameter spectra \citep{Zhuo2023ICLR}.
Besides, contrastive supervision is theoretically useful for downstream classification under a simplified setup \citep{Bao2018ICML,Bao2022AISTATS}.

How does non-contrastive dynamics avoid collapse?
The seminal work \cite{Tian2021ICML} analyzed the BYOL/SimSiam dynamics and found that data augmentation behaves as a repulsive force to prevent eigenvalues of network parameters from collapsing unless regularization is not excessively strong.
We closely follow them and extend it to incorporate feature normalization.
Independently from us, non-contrastive learning dynamics has been studied:
\cite{Wen2022NeurIPS} revealed that the dynamics without the predictor shall collapse, but only its off-diagonal elements are assumed to be trainable.
\cite{Zhang2022ICLR} hypothesized based on their pilot study that an extra gradient term in the dynamics alleviates the dimensional collapse, which has not been formalized yet.
\cite{Wang2022} and \cite{Halvagal2023NeurIPS} tackled dynamics with feature normalization similar to us but with dynamics of representations instead of network parameters, which changes the training dynamics---parameters are learned but not representations directly.
Nonetheless, the latter \citep{Halvagal2023NeurIPS} revealed an interesting implicit bias so that non-zero eigenvalues converges closely to each other.
Let us mention a couple of studies on different collapse phenomena:
\cite{Richemond2023ICML} revealed that the BYOL predictor gradually increases \emph{rank} but with a fixed target net.
\cite{Li2022ECCV} revealed that downstream performance is predictable from the degree of \emph{dimensional} collapse.
\cite{Balestriero2022NeurIPS} and \cite{Liu2023ICLR} showed that non-contrastive dynamics including VICReg may cause \emph{dimensional} collapse based on the loss landscape.
To wrap up, we believe that studying \emph{parameter} dynamics is a direct approach towards understanding \emph{complete} collapse (i.e., all eigenvalues $\to 0$), and incorporating feature normalization is an important piece to get closer to practices.

Lastly, we mention a few articles studying normalization from different perspectives.
Whereas we focus on non-trivial equilibriums in self-supervised learning dynamics, \cite{Dukler2020ICML} and \cite{Wan2021NeurIPS} studied the convergence of general gradient descent with weight normalization, and \cite{Joudaki2023arXiv} studied how normalization prevents rank collapse of nonlinear MLPs at the infinite-depth limit via isometry.

\section{Model and loss functions}
\label{section:model}

\paragraph{Notations.}
The $n$-dimensional Euclidean space and hypersphere are denoted by $\Rbb^n$ and $\Sbb^{n-1}$, respectively.
The L2, Frobenius, and spectral norms are denoted by $\norm{\cdot}_2$, $\norm{\cdot}_\frob$, and $\norm{\cdot}$, respectively.
The $n \times n$ identity matrix is denoted by $\Ibf_n$,
or by $\Ibf$ whenever clear from the context.
For $\ubf, \vbf \in \Rbb^n$, $\inpr{\ubf}{\vbf} = \ubf^\top\vbf$ denotes the inner product.
For $\Abf, \Bbf \in \Rbb^{n_1 \times n_2}$, $\inpr{\Abf}{\Bbf}_\frob = \sum_{i,j} A_{i,j} B_{i,j}$ denotes the Frobenius inner product.
For a time-dependent matrix $\Abf$, we explicitly write $\Abf(t)$ if necessary.
The Moore--Penrose inverse of a matrix $\Abf$ is denoted by $\pinv{\Abf}$.
The set of $n \times n$ symmetric matrices is denoted by $\sym_n$.
The upper and lower asymptotic orders are denoted by $\Ocal(\cdot)$ and $\Omega(\cdot)$, respectively.
The stochastic orders indexed by $h$ are denoted by $\stochO(\cdot)$ and $\stocho(\cdot)$, respectively.

\paragraph{Model.}
We focus on SimSiam \citep{Chen2021CVPR} as a non-contrastive learner.
We first sample a $d$-dimensional anchor input $\xbf_0 \sim \Dcal$ and augment to two views $\xbf, \xbf' \sim \Dcal^\aug_{\xbf_0}$,
where $\Dcal^\aug_{\xbf_0}$ is the augmentation distribution.
While affine transforms or random maskings of images are common augmentations \citep{Chen2020ICML,He2022CVPR},
we assume the isotropic Gaussian augmentation $\Dcal^\aug_{\xbf_0} = \Ncal(\xbf_0, \sigma^2 \Ibf)$
to simplify, and let $\sigma^2$ be its intensity.
For the input distribution, we suppose the multivariate Gaussian $\Dcal = \Ncal(\zerobf, \Ibf)$
to devote ourselves to understanding dynamics, as in \cite{Saxe2014ICLR} and \cite{Tian2021ICML}.

Our network encoder consists of two layers:
representation net $\Phibf \in \Rbb^{h \times d}$ and projection head $\Wbf \in \Rbb^{h \times h}$,
where $h$ is the representation dimension.
For the two views $\xbf, \xbf'$, we obtain \emph{online} $\Phibf\xbf \in \Rbb^h$ and \emph{target} representation $\Phibf\xbf' \in \Rbb^h$,
and predict the target from the online representation by $\Wbf\Phibf\xbf \in \Rbb^h$.
Here, we ablate the exponential moving average used in BYOL for simplicity.

\paragraph{Loss functions.}
BYOL/SimSiam introduce \emph{asymmetry} of the two branches with the stop gradient operator, denoted by $\stopgrad(\cdot)$,
where parameters are regarded as constants during backpropagation \citep{Chen2021CVPR}.
\cite{Tian2021ICML} used the following \emph{L2 loss} to describe non-contrastive dynamics:
\begin{equation}
  \losssq(\Phibf, \Wbf) \defeq \frac{1}{2} \E_{\xbf_0} \E_{\xbf, \xbf' \mid \xbf_0}[\norm{\Wbf\Phibf\xbf - \stopgrad(\Phibf\xbf')}_2^2],
  \label{equation:l2_loss}
\end{equation}
where the expectations are taken over $\xbf, \xbf' \sim \Dcal^\aug_{\xbf_0}$ and $\xbf_0 \sim \Dcal$.
Thanks to the closed-form solution, the L2 loss has been prevailing in the existing analyses \citep{Wang2021arXiv,Tang2023ICML,Zhuo2023ICLR}.

We instead focus on the following \emph{cosine loss} to take feature normalization into account,
which is a key factor in the success of contrastive representation learning \citep{Wang2020ICML}:
\begin{equation}
  \losscos(\Phibf, \Wbf) \defeq \E_{\xbf_0} \E_{\xbf, \xbf' \mid \xbf_0}\left[-\frac{\inpr{\Wbf\Phibf\xbf}{\stopgrad(\Phibf\xbf')}}{\norm{\Wbf\Phibf\xbf}_2 \norm{\stopgrad(\Phibf\xbf')}_2}\right].
  \label{equation:cos_loss}
\end{equation}
Importantly, the cosine loss has been used in most practical implementations \citep{Grill2020NeurIPS,Chen2021CVPR},
including a reproductive research \citep{Hoppe2022} of simulations in \cite{Tian2021ICML}.
We can easily confirm that BYOL/SimSiam immediately collapse if we use the L2 loss experimentally.
Subsequently, the weight decay $R(\Phibf, \Wbf) \defeq \frac{\rho}{2}(\norm{\Phibf}_\frob^2 + \norm{\Wbf}_\frob^2)$ is added with a regularization strength $\rho > 0$.

\section{Non-contrastive dynamics in proportional limit}
\label{section:dynamics}

Let us focus on the cosine loss and derive its non-contrastive dynamics via the gradient flow.
See \cref{section:proofs} for the proofs of lemmas provided subsequently.
As the continuous limit of the gradient descent where learning rates are taken to be infinitesimal \citep{Saxe2014ICLR},
we characterize time evolution of the network parameters by the following simultaneous ordinary differential equation:
\begin{equation}
  \label{equation:gradient_flow}
  \dot\Phibf = - \nabla_\Phibf \{\losscos(\Phibf, \Wbf) + R(\Phibf, \Wbf)\}, \quad
  \dot\Wbf = - \nabla_\Wbf \{\losscos(\Phibf, \Wbf) + R(\Phibf, \Wbf)\}.
\end{equation}

To derive the dynamics, several assumptions are imposed.
\begin{assumption}[Symmetric projection]
  \label{assumption:w_symmetric}
  $\Wbf \in \sym_h$ holds during time evolution.
\end{assumption}
\begin{assumption}[Input distribution]
  \label{assumption:standard_mvn}
  $\Dcal = \Ncal(\zerobf, \Ibf)$.
\end{assumption}
\begin{assumption}[Proportional limit]
  \label{assumption:limit}
  $d, h \to \infty$, and $d / h \to \alpha$ for some $\alpha \in (0, \infty)$.
\end{assumption}
\begin{assumption}[Parameter initialization]
  \label{assumption:initialization}
  $\Phibf$ is initialized with $\sqrt{d} \cdot \Phibf(0)_{ij} \sim \Ncal(0, 1)$ for $i \in [h], j \in [d]$.
  $\Wbf$ is initialized with $\sqrt{h} \cdot \Wbf(0)_{ij} \sim \Ncal(0, 1)$ for $i, j \in [h]$.
\end{assumption}
\Cref{assumption:w_symmetric,assumption:standard_mvn} are inherited from \cite{Tian2021ICML} and make subsequent analyses transparent.
We empirically verify that the non-contrastive dynamics reasonably maintains the symmetry of $\Wbf$ during the training later (\cref{section:experiments}).
\Cref{assumption:limit} is a cornerstone to our analysis: the high-dimensional limit makes Gaussian random vectors concentrate on a sphere, which leads to a closed-form solution for the cosine loss dynamics.
We suppose that the hidden unit size $h=512$ (used in SimSiam) suffices.%
\footnote{
  The high-dimensional limit is used merely for invoking concentration inequalities, where $h=512$ suffices to control tail probabilities because the Orlicz norms usually remain moderately large like $2$, for example in \cref{lemma:frobenius_norm_bound}.
  Note, however, that representations at the high-dimensional limit would be arguable with the low-dimensional manifold assumption being in one's mind.
}
\Cref{assumption:initialization} is a standard initialization scale empirically in the He initialization \citep{He2015ICCV} and theoretically in the neural tangent kernel regime \citep{Jacot2018NeurIPS}.
This initialization scale maintains norms of the random matrices $\Phibf$ and $\Wbf\Phibf$ without vanishing or exploding under the proportional limit.

\begin{restatable}{lemma}{thmdynamics}
  \hyperlink{proof:dynamics_matrix_expected}{(proof $\blacktriangledown$)}
  \label{lemma:dynamics_matrix_expected}
  Parameter matrices $\Wbf$ and $\Phibf$ evolve as follows:
  \begin{equation}
    \label{equation:dynamics_matrix_expected}
    \Wbf^\top\dot\Wbf = \Hbf -\rho\Wbf\Wbf^\top, \quad
    \dot\Phibf\Phibf^\top\Wbf^\top = \Wbf^\top\Hbf - \rho\Phibf\Phibf^\top\Wbf^\top,
  \end{equation}
  where $\Hbf \defeq \E[\zbf'\omegabf^\top - (\omegabf^\top\zbf')\omegabf\omegabf^\top]$, $\zbf' \defeq \Phibf\xbf' / \norm{\Phibf\xbf'}_2$, and $\omegabf \defeq \Wbf\Phibf\xbf / \norm{\Wbf\Phibf\xbf}_2$.
  The expectation in $\Hbf$ is taken over $\xbf_0, \xbf$, and $\xbf'$.
\end{restatable}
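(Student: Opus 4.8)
The plan is to treat \cref{lemma:dynamics_matrix_expected} as a gradient computation followed by a cosmetic right-multiplication that reinstates the feature norm erased by the cosine normalization. Fix an augmented pair $(\xbf,\xbf')$ and write $\vbf \defeq \Phibf\xbf$, $\ubf \defeq \Wbf\Phibf\xbf$, so that $\omegabf = \ubf/\norm{\ubf}_2$, and $\zbf' \defeq \Phibf\xbf'/\norm{\Phibf\xbf'}_2$, which $\stopgrad$ freezes (the factor $\norm{\stopgrad(\Phibf\xbf')}_2$ in the denominator likewise contributes nothing to either gradient). The per-sample loss is $-\inpr{\omegabf}{\zbf'} = -\inpr{\ubf}{\zbf'}/\norm{\ubf}_2$, whose gradient in $\ubf$ is the tangent-space projection $-\gbf$ with $\gbf \defeq \norm{\ubf}_2^{-1}(\Ibf-\omegabf\omegabf^\top)\zbf' = \norm{\ubf}_2^{-1}\big(\zbf'-(\omegabf^\top\zbf')\,\omegabf\big)$. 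Because $\ubf$ is linear in $\Wbf$ and depends on $\Phibf$ only through $\vbf = \Phibf\xbf$, the chain rule gives per-sample gradients $-\gbf\vbf^\top$ in $\Wbf$ and $-\Wbf^\top\gbf\xbf^\top$ in $\Phibf$; adding $\nabla_\Wbf R = \rho\Wbf$, $\nabla_\Phibf R = \rho\Phibf$ and taking expectations over $\xbf_0,\xbf,\xbf'$, the gradient flow \eqref{equation:gradient_flow} becomes
\begin{equation*}
  \dot\Wbf = \E[\gbf\vbf^\top] - \rho\Wbf, \qquad \dot\Phibf = \Wbf^\top\E[\gbf\xbf^\top] - \rho\Phibf .
\end{equation*}

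The single algebraic fact that closes the argument is $\vbf^\top\Wbf^\top = (\Wbf\vbf)^\top = \ubf^\top = \norm{\ubf}_2\,\omegabf^\top$, together with $\norm{\ubf}_2\,\gbf = \zbf'-(\omegabf^\top\zbf')\,\omegabf$: the $\norm{\ubf}_2$ produced by the former cancels the $\norm{\ubf}_2^{-1}$ hidden in the latter. Right-multiplying the $\Wbf$-equation by $\Wbf^\top$ yields $\E[\gbf\vbf^\top]\Wbf^\top = \E[\norm{\ubf}_2\,\gbf\,\omegabf^\top] = \E[\zbf'\omegabf^\top-(\omegabf^\top\zbf')\omegabf\omegabf^\top] = \Hbf$, hence $\dot\Wbf\Wbf^\top = \Hbf - \rho\,\Wbf\Wbf^\top$, which is the first stated identity once \cref{assumption:w_symmetric} is invoked. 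Right-multiplying the $\Phibf$-equation by $\Phibf^\top\Wbf^\top$ turns $\xbf^\top\Phibf^\top\Wbf^\top$ into $(\Wbf\Phibf\xbf)^\top = \norm{\ubf}_2\,\omegabf^\top$, so again $\E[\gbf\xbf^\top]\Phibf^\top\Wbf^\top = \Hbf$ and $\dot\Phibf\Phibf^\top\Wbf^\top = \Wbf^\top\Hbf - \rho\,\Phibf\Phibf^\top\Wbf^\top$, the second identity.

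The step I expect to demand care is the interchange of $\nabla$ and $\E$ in the first display, since the per-sample gradients carry $\norm{\Wbf\Phibf\xbf}_2^{-1}$ (and the loss itself $\norm{\Phibf\xbf'}_2^{-1}$), which blows up where those features vanish. This is precisely where \cref{assumption:initialization} enters: it makes $\Phibf$ and $\Wbf\Phibf$ of full rank $\min(h,d)$, so that $\Phibf\xbf'$ and $\Wbf\Phibf\xbf$ are non-degenerate Gaussian vectors (\cref{assumption:standard_mvn}) of dimension $\ge 3$, whence $\norm{\Phibf\xbf'}_2^{-1},\norm{\Wbf\Phibf\xbf}_2^{-1}\in L^2$; Cauchy--Schwarz together with finiteness of Gaussian moments then dominates the per-sample gradients by an integrable function on a neighbourhood of the current parameters, so that $\nabla$ and $\E$ commute (as long as this rank condition persists along the flow). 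Apart from this technicality the computation is elementary; in particular no concentration or proportional-limit reasoning is needed here --- \cref{assumption:limit} is used only downstream, when $\Hbf$ is evaluated in closed form.
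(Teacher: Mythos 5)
Your derivation matches the paper's: both compute $-\nabla_\Wbf\losscos$ and $-\nabla_\Phibf\losscos$ by the quotient/chain rule and then right-multiply by $\Wbf^\top$ and $\Phibf^\top\Wbf^\top$ respectively so that the surviving $\norm{\Wbf\Phibf\xbf}_2^{-1}$ factor is absorbed into $\omegabf$. Your packaging via the tangent-space gradient $\gbf=\norm{\ubf}_2^{-1}(\Ibf-\omegabf\omegabf^\top)\zbf'$ is a tidy reorganization of the paper's term-by-term computation, and your remark on dominating the per-sample gradients to justify $\nabla\!\leftrightarrow\!\E$ is a reasonable technical addendum that the paper leaves implicit; note only that, like the paper's own proof, you actually derive $\dot\Wbf\Wbf^\top=\Hbf-\rho\Wbf\Wbf^\top$ rather than the displayed $\Wbf^\top\dot\Wbf$, a mismatch that \cref{assumption:w_symmetric} alone does not resolve since $\Wbf$ and $\dot\Wbf$ need not commute.
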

\Cref{equation:dynamics_matrix_expected} is derived from \cref{equation:gradient_flow} with the standard matrix calculus.
We will analyze \cref{equation:dynamics_matrix_expected} to see when the dynamics stably converges to a non-trivial solution.
To solve it, we need to evaluate $\Hbf$ first.
This involves expectations with $\zbf'$ and $\omegabf$,
which are normalized Gaussian vectors and cannot be straightforwardly evaluated.
Here, we take a step further by considering the proportional limit (\cref{assumption:limit}),
where norms of Gaussian vectors are concentrated.
This regime allows us to directly evaluate Gaussian random vectors instead of the normalized ones.
\begin{restatable}{lemma}{thmconci}
  \hyperlink{proof:norm_concentration}{(proof $\blacktriangledown$)}
  \label{lemma:norm_concentration}
  Let $\Psibf \defeq \Wbf\Phibf$.
  Under \cref{assumption:w_symmetric,assumption:initialization,assumption:standard_mvn,assumption:limit}, for a fixed $\xbf_0$, the norms of $\Phibf\xbf$ and $\Wbf\Phibf\xbf$ (as well as $\Phibf\xbf'$ and $\Wbf\Phibf\xbf'$) are concentrated:
  \begin{align*}
      \norm{\tfrac{1}{\sqrt{h\sigma^2}}\Phibf\xbf}_2^2 &= \norm{\tfrac{1}{\sqrt{h}}\Phibf}_\frob^2 + \norm{\tfrac{1}{\sqrt{h\sigma^2}}\Phibf\xbf_0}_2^2 + \stocho(1), \\
      \norm{\tfrac{1}{\sqrt{h^2\sigma^2}}\Psibf\xbf}_2^2 &= \norm{\tfrac{1}{\sqrt{h^2}}\Psibf}_\frob^2 + \norm{\tfrac{1}{\sqrt{h^2\sigma^2}}\Psibf\xbf_0}_2^2 + \stocho(1).
  \end{align*}
\end{restatable}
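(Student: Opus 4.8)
The plan is to condition on the network parameters and exploit the Gaussian structure of the augmentation distribution. Writing the augmented view as $\xbf = \xbf_0 + \sigma\gbf$ with $\gbf \sim \Ncal(\zerobf, \Ibf_d)$ drawn independently of the parameters (and $\xbf' = \xbf_0 + \sigma\gbf'$ likewise), I would first expand
\begin{equation*}
  \norm{\tfrac{1}{\sqrt{h\sigma^2}}\Phibf\xbf}_2^2
  = \underbrace{\tfrac{1}{h\sigma^2}\norm{\Phibf\xbf_0}_2^2}_{\text{(i)}}
  + \underbrace{\tfrac{2}{h\sigma}\inpr{\Phibf^\top\Phibf\xbf_0}{\gbf}}_{\text{(ii)}}
  + \underbrace{\tfrac{1}{h}\gbf^\top\Phibf^\top\Phibf\gbf}_{\text{(iii)}} ,
\end{equation*}
so that, since (i) is precisely the term $\norm{\tfrac{1}{\sqrt{h\sigma^2}}\Phibf\xbf_0}_2^2$ retained on the right-hand side, it suffices to prove (ii) $= \stocho(1)$ and (iii) $= \tfrac{1}{h}\norm{\Phibf}_\frob^2 + \stocho(1)$. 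The second display is handled by the identical decomposition with $\Psibf = \Wbf\Phibf$ in place of $\Phibf$ and the scaling $h^2$ in place of $h$.

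For term (iii) I would invoke the Hanson--Wright inequality: conditionally on $\Phibf$, the quadratic form $\gbf^\top\Phibf^\top\Phibf\gbf$ has mean $\tr(\Phibf^\top\Phibf) = \norm{\Phibf}_\frob^2$ and sub-exponential fluctuations governed by $\norm{\Phibf^\top\Phibf}_\frob$ and $\norm{\Phibf}^2$. After dividing by $h$, the relevant scale is $\tfrac1h\norm{\Phibf^\top\Phibf}_\frob \le \tfrac1h\norm{\Phibf}\,\norm{\Phibf}_\frob$; under \cref{assumption:initialization,assumption:limit} the singular values of $\Phibf$ are $\Ocal(1)$ and $\norm{\Phibf}_\frob^2 = \Ocal(h)$ at initialization, and these bounds persist along the flow by \cref{lemma:frobenius_norm_bound}, so $\tfrac1h\norm{\Phibf^\top\Phibf}_\frob = \Ocal(h^{-1/2})$. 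Hence for every fixed $t>0$, $\Prob{\big|\tfrac1h\gbf^\top\Phibf^\top\Phibf\gbf - \tfrac1h\norm{\Phibf}_\frob^2\big| > t} \le 2\exp(-c\min\{t^2 h,\, th\}) \to 0$, which gives (iii). For term (ii), conditionally on $\Phibf$ and $\xbf_0$ it is a centered Gaussian with variance $\tfrac{4}{h^2\sigma^2}\norm{\Phibf^\top\Phibf\xbf_0}_2^2 \le \tfrac{4}{h^2\sigma^2}\norm{\Phibf}^4\norm{\xbf_0}_2^2 = \Ocal(\norm{\xbf_0}_2^2/h^2)$, which is $\Ocal(h^{-1}) \to 0$ since $\norm{\xbf_0}_2^2 = \Ocal(h)$ for the inputs in question (e.g. $\Ocal(d)=\Ocal(h)$ when $\xbf_0\sim\Dcal$); a one-line Gaussian tail bound then yields (ii) $= \stocho(1)$. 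Combining these two estimates proves the first identity, and repeating the argument for $\Psibf$ — using $\norm{\Psibf} \le \norm{\Wbf}\,\norm{\Phibf} = \Ocal(1)$ and $\norm{\Psibf}_\frob^2 = \Ocal(h^2)$ — proves the second.

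I expect the main obstacle to be not the probabilistic estimates, which are a routine application of Hanson--Wright plus a Gaussian tail bound, but rather securing the operator-norm and Frobenius-norm control on $\Phibf$ and $\Psibf$ \emph{uniformly along the gradient flow} rather than merely at $t=0$, where it is a classical random-matrix fact. One must check that the a priori bounds of \cref{lemma:frobenius_norm_bound} are strong enough that $\tfrac1h\norm{\Phibf^\top\Phibf}_\frob$ and $\norm{\xbf_0}_2^2/h^2$ are genuinely $o(1)$ and not merely $\Ocal(1)$; this is exactly what upgrades the remainder from $\stochO(1)$ to the claimed $\stocho(1)$.
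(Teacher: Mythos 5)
Your decomposition of $\norm{\frac{1}{\sqrt{h\sigma^2}}\Phibf\xbf}_2^2$ into the three terms (i)--(iii) (with $\gbf = (\xbf-\xbf_0)/\sigma$), the use of Hanson--Wright to center (iii) at $\frac{1}{h}\norm{\Phibf}_\frob^2$, and the Gaussian tail bound on the cross term (ii) all match the paper's own proof step for step; the probabilistic estimates are indeed routine and you carry them out correctly.

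The genuine gap is precisely the one you flag at the end but do not close, and the citation you offer for it does not fix it. You write that the operator- and Frobenius-norm controls on $\Phibf$ and $\Psibf$ ``persist along the flow by \cref{lemma:frobenius_norm_bound}.'' That lemma is a static concentration result for a random matrix with independent sub-Weibull entries at a \emph{single} time instant: it is used in the paper only to establish the initialization-time facts (cf.~\cref{lemma:spectral_norm_init,lemma:frobenius_norm_init,lemma:l2_norm_init}). It says nothing about the evolution $\Phibf(t)$, whose entries are neither independent nor sub-Weibull at $t>0$. The paper fills this gap with an entire suite of deterministic integral inequalities proved via the Gr\"{o}nwall--Bellman theorem (\cref{lemma:integral_inequality_frobenius_norm_phi,lemma:integral_inequality_spectral_norm_phi,lemma:integral_inequality_tr_w,lemma:integral_inequality_l2_norm_phix,lemma:integral_inequality_frobenius_norm_phiw,lemma:integral_inequality_spectral_norm_phiw,lemma:integral_inequality_l2_norm_wphix}), which exploit the specific structure of the cosine-loss gradient flow (in particular $\abs{\omegabf^\top\zbf'}\le 1$ since $\zbf',\omegabf\in\Sbb^{h-1}$) to show $\norm{\Phibf(t)}$, $\norm{\Phibf^\top\Phibf(t)\xbf_0}_2$, $\norm{\Wbf\Phibf(t)}$, etc., grow at most like $\mathrm{poly}(t)\exp(\Ocal(\rho t))$ from their initial values, hence remain $\stochO(1)$ after the appropriate $h$-scaling. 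Without those bounds, your claim that $\frac{1}{h}\norm{\Phibf^\top\Phibf(t)}_\frob = \Ocal(h^{-1/2})$ and $\frac{1}{h^2}\norm{\Phibf^\top\Phibf(t)\xbf_0}_2^2 = \Ocal(h^{-1})$ for $t>0$ is unsupported, and the $\stocho(1)$ conclusion does not follow.
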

\begin{restatable}{lemma}{thmconcii}
  \hyperlink{proof:norm_concentration_x}{(proof $\blacktriangledown$)}
  \label{lemma:norm_concentration_x}
  Let $\Psibf \defeq \Wbf\Phibf$.
  Under \cref{assumption:w_symmetric,assumption:initialization,assumption:standard_mvn,assumption:limit}, the following concentrations are established:
  \begin{equation*}
    \norm{\tfrac{1}{\sqrt{h\sigma^2}}\Phibf\xbf_0}_2 = \norm{\tfrac{1}{\sqrt{h\sigma^2}}\Phibf}_\frob + \stocho(1), \quad
    \norm{\tfrac{1}{\sqrt{h^2\sigma^2}}\Psibf\xbf_0}_2 = \norm{\tfrac{1}{\sqrt{h^2\sigma^2}}\Psibf}_\frob + \stocho(1).
  \end{equation*}
\end{restatable}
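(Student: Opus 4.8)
The plan is to notice that, once we condition on the matrix $\Phibf$ (respectively $\Psibf = \Wbf\Phibf$), the squared feature norm is a Gaussian quadratic form in the fresh input: since $\xbf_0 \sim \Ncal(\zerobf, \Ibf_d)$, we have $\norm{\Phibf\xbf_0}_2^2 = \xbf_0^\top(\Phibf^\top\Phibf)\xbf_0$, whose mean is $\tr(\Phibf^\top\Phibf) = \norm{\Phibf}_\frob^2$ and whose variance is $2\norm{\Phibf^\top\Phibf}_\frob^2$. Writing $\lambda_1 \ge \cdots \ge \lambda_h \ge 0$ for the eigenvalues of $\Phibf^\top\Phibf$, one has $\norm{\Phibf^\top\Phibf}_\frob^2 = \sum_i \lambda_i^2 \le \lambda_1 \sum_i \lambda_i = \norm{\Phibf}^2 \norm{\Phibf}_\frob^2$. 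Under \cref{assumption:initialization,assumption:standard_mvn,assumption:limit}, classical spectral bounds for Gaussian random matrices in the proportional limit give the a priori estimates $\norm{\Phibf} = \stochO(1)$ and $\norm{\Phibf}_\frob^2 = \stochO(h)$, hence $\norm{\Psibf} \le \norm{\Wbf}\,\norm{\Phibf} = \stochO(1)$ and $\norm{\Psibf}_\frob^2 \le \norm{\Wbf}^2 \norm{\Phibf}_\frob^2 = \stochO(h^2)$. Consequently the variance of $\tfrac{1}{h\sigma^2}\norm{\Phibf\xbf_0}_2^2$ is $\stochO(1/h)$, and that of $\tfrac{1}{h^2\sigma^2}\norm{\Psibf\xbf_0}_2^2$ is even smaller; on the high-probability event where the a priori bounds hold, Chebyshev (or Hanson--Wright, for exponential tails) together with a union bound over the complementary event yields the squared-norm concentrations $\tfrac{1}{h\sigma^2}\norm{\Phibf\xbf_0}_2^2 = \tfrac{1}{h\sigma^2}\norm{\Phibf}_\frob^2 + \stocho(1)$ and $\tfrac{1}{h^2\sigma^2}\norm{\Psibf\xbf_0}_2^2 = \tfrac{1}{h^2\sigma^2}\norm{\Psibf}_\frob^2 + \stocho(1)$.

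To turn these into statements about norms rather than squared norms, I would invoke the elementary inequality $\abs{\sqrt{a} - \sqrt{b}} \le \sqrt{\abs{a-b}}$, valid for all $a, b \ge 0$, applied with $a = \tfrac{1}{h\sigma^2}\norm{\Phibf\xbf_0}_2^2$ and $b = \tfrac{1}{h\sigma^2}\norm{\Phibf}_\frob^2$: since $\abs{a-b} = \stocho(1)$, also $\abs{\sqrt a - \sqrt b} = \stocho(1)$, which is exactly the first claimed identity, and the $\Psibf$ identity follows in the same way. (If one wants an explicit rate rather than merely $\stocho(1)$, write $\sqrt a - \sqrt b = (a-b)/(\sqrt a + \sqrt b)$ and use, near initialization, that $\sqrt b = \norm{\tfrac{1}{\sqrt{h\sigma^2}}\Phibf}_\frob$ is bounded away from $0$.) A small bookkeeping remark closes the argument: the conditioning on $\Phibf$ and $\Psibf$ is harmless, because the expected dynamics of \cref{lemma:dynamics_matrix_expected} carries no sample randomness, so $\Phibf(t)$ and $\Psibf(t)$ are deterministic functions of the initial parameters and independent of the fresh $\xbf_0$; moreover these estimates are precisely what is needed to discharge the residual $\norm{\Phibf\xbf_0}_2^2$ and $\norm{\Psibf\xbf_0}_2^2$ terms left standing in \cref{lemma:norm_concentration}.

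I expect the only genuine obstacle to be the a priori operator-norm control $\norm{\Phibf} = \stochO(1)$ and $\norm{\Wbf} = \stochO(1)$, since it is exactly this that makes the quadratic-form variance negligible relative to the mean. At $t = 0$ it is immediate from the standard spectral bounds for Gaussian matrices in the proportional limit; if the lemma is to be read as holding along the entire trajectory, it must be supplied by a separate spectral-norm bound on the evolving parameters (in the spirit of \cref{lemma:frobenius_norm_bound}). Beyond that, everything is a second-moment computation combined with the concavity of the square root.
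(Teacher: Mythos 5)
Your proposal is correct and follows essentially the same route as the paper: a Gaussian quadratic-form concentration for $\xbf_0^\top\Phibf^\top\Phibf\xbf_0$ whose deviation is controlled by the spectral norm of the evolving matrix. The paper applies the Hanson--Wright inequality in its norm form, which bounds $\bigl|\norm{\tfrac{1}{\sqrt{h\sigma^2}}\Phibf\xbf_0}_2 - \norm{\tfrac{1}{\sqrt{h\sigma^2}}\Phibf}_\frob\bigr|$ directly by $\sqrt{C_0\norm{\Phibf}^2\log(2/\delta)/(h\sigma^2)}$, so it never needs your square-root conversion step; your detour through second moments, Chebyshev, and $\abs{\sqrt{a}-\sqrt{b}}\le\sqrt{\abs{a-b}}$ is valid but strictly more roundabout. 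The one substantive caveat is the ingredient you explicitly defer: the lemma is invoked at all times $t$ (it feeds \cref{lemma:evaluation_expectation}), so the a priori controls $\tfrac{1}{h}\norm{\Phibf(t)}^2 = \stocho(1)$ and $\tfrac{1}{h^2}\norm{\Wbf\Phibf(t)}^2 = \stocho(1)$ must hold uniformly along the trajectory, not just at initialization. This is where the bulk of the paper's work actually lies: it derives Gr\"{o}nwall--Bellman integral inequalities for the dynamics (\cref{lemma:integral_inequality_spectral_norm_phi,lemma:integral_inequality_spectral_norm_phiw}) to propagate the initialization-scale bounds of \cref{lemma:spectral_norm_init} forward in time, at the price of an $\exp(\Ocal(\rho t))$ factor that is absorbed into $\stocho(1)$ for fixed $t$. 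You correctly identify this as the only genuine obstacle, but your proposal does not supply it, so as written the argument is complete only at $t=0$.
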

\Cref{lemma:norm_concentration,lemma:norm_concentration_x} are based on the \emph{Hanson--Wright inequality} \citep[Theorem 6.3.2]{Vershynin2018}, a concentration inequality for order-$2$ Gaussian chaos.
For example, $\left\|\frac{1}{\sqrt{h\sigma^2}}\Phibf\xbf\right\|_2^2$ can be decomposed into a sum of order-$2$ Gaussian chaos, which is bounded with the Hanson--Wright inequality with high probability.
By combining \cref{lemma:norm_concentration,lemma:norm_concentration_x} with the standard matrix calculus, we can express normalizers $\norm{\Phibf\xbf'}_2^{-1}$ and $\norm{\Wbf\Phibf\xbf}_2^{-1}$ in $\Hbf$ into simpler forms,
and obtain a concise expression of $\Hbf$ consequently.
\begin{restatable}{lemma}{thmevalexpt}
  \hyperlink{proof:evaluation_expectation}{(proof $\blacktriangledown$)}
  \label{lemma:evaluation_expectation}
  Let $\Psibf \defeq \Wbf\Phibf$.
  Assume that $\norm{\Phibf}_\frob$ and $\norm{\Psibf}_\frob$ are bounded away from zero.
  Under \cref{assumption:w_symmetric,assumption:initialization,assumption:standard_mvn,assumption:limit}, $\Hbf$ can be expressed as follows:
  \begin{equation*}
    \Hbf = \frac{\tilde\Phibf\tilde\Psibf^\top - 2\tilde\Psibf\tilde\Phibf^\top\tilde\Psibf\tilde\Psibf^\top - \tr(\tilde\Phibf^\top\tilde\Psibf)\tilde\Psibf\tilde\Psibf^\top}{1+\sigma^2} + \stocho(1),
  \end{equation*}
  where $\tilde\Phibf \defeq \Phibf/\norm{\Phibf}_\frob$ and $\tilde\Psibf \defeq \Psibf/\norm{\Psibf}_\frob$.
\end{restatable}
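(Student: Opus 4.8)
The plan is to substitute the concentration estimates of \cref{lemma:norm_concentration,lemma:norm_concentration_x} into the defining formula $\Hbf=\E[\zbf'\omegabf^\top-(\omegabf^\top\zbf')\omegabf\omegabf^\top]$ so as to replace the random normalizers $\norm{\Phibf\xbf'}_2$ and $\norm{\Wbf\Phibf\xbf}_2$ by deterministic scalars, and then to evaluate the remaining Gaussian moments exactly. Chaining the two lemmas (the second one rewrites $\norm{\Phibf\xbf_0}_2$ in terms of $\norm{\Phibf}_\frob$ alone) together with the hypothesis that $\norm{\Phibf}_\frob,\norm{\Psibf}_\frob$ are bounded away from zero gives $\norm{\Phibf\xbf'}_2=\sqrt{1+\sigma^2}\,\norm{\Phibf}_\frob\,(1+\stocho(1))$ and $\norm{\Wbf\Phibf\xbf}_2=\sqrt{1+\sigma^2}\,\norm{\Psibf}_\frob\,(1+\stocho(1))$, and likewise with $\xbf$ and $\xbf'$ interchanged. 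Since $\zbf'$ and $\omegabf$ are \emph{unit} vectors, the integrands $\zbf'\omegabf^\top$ and $(\omegabf^\top\zbf')\omegabf\omegabf^\top$ have operator norm at most $1$, so on the high-probability event where \cref{lemma:norm_concentration,lemma:norm_concentration_x} hold the $\stocho(1)$ relative error in the normalizers can be pulled through the expectation by bounded convergence, reducing $\Hbf$ to
\begin{equation*}
  \Hbf=\frac{\E[\Phibf\xbf'(\Wbf\Phibf\xbf)^\top]}{(1+\sigma^2)\norm{\Phibf}_\frob\norm{\Psibf}_\frob}-\frac{\E\bigl[((\Wbf\Phibf\xbf)^\top\Phibf\xbf')\,\Wbf\Phibf\xbf(\Wbf\Phibf\xbf)^\top\bigr]}{(1+\sigma^2)^2\norm{\Phibf}_\frob\norm{\Psibf}_\frob^3}+\stocho(1).
\end{equation*}

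Next I would evaluate the two expectations. Writing $\xbf=\xbf_0+\sigma\gbf$, $\xbf'=\xbf_0+\sigma\gbf'$ with independent standard Gaussians $\gbf,\gbf'$ independent of $\xbf_0$ yields $\E[\xbf'\xbf^\top]=\Ibf$, hence $\E[\Phibf\xbf'(\Wbf\Phibf\xbf)^\top]=\Phibf\Phibf^\top\Wbf^\top=\Phibf\Psibf^\top$, which already produces the $\tilde\Phibf\tilde\Psibf^\top/(1+\sigma^2)$ term. For the second expectation I would integrate out $\gbf'$ first: it enters only linearly, through $\xbf'$ inside the scalar factor, so that factor collapses to $(\Wbf\Phibf\xbf)^\top\Phibf\xbf_0$ and we are left with $\Psibf\,\E[(\xbf^\top\Psibf^\top\Phibf\xbf_0)\,\xbf\xbf^\top]\,\Psibf^\top$. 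The inner matrix is a fourth moment of the jointly Gaussian pair $(\xbf,\xbf_0)$, for which $\E[\xbf\xbf^\top]=(1+\sigma^2)\Ibf$, $\E[\xbf\xbf_0^\top]=\Ibf$, $\E[\xbf_0\xbf_0^\top]=\Ibf$; applying Isserlis' theorem entrywise (three pairings) gives $(1+\sigma^2)\bigl(\tr(\Psibf^\top\Phibf)\Ibf+\Psibf^\top\Phibf+\Phibf^\top\Psibf\bigr)$, so the second expectation equals $(1+\sigma^2)\,\Psibf\bigl(\tr(\Psibf^\top\Phibf)\Ibf+\Psibf^\top\Phibf+\Phibf^\top\Psibf\bigr)\Psibf^\top$.

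Finally I would invoke \cref{assumption:w_symmetric}: with $\Wbf=\Wbf^\top$ the matrix $\Psibf^\top\Phibf=\Phibf^\top\Wbf\Phibf$ is symmetric, so $\Psibf^\top\Phibf=\Phibf^\top\Psibf$ and the two cubic pieces $\Psibf\Psibf^\top\Phibf\Psibf^\top$ and $\Psibf\Phibf^\top\Psibf\Psibf^\top$ coincide, turning the second expectation into $(1+\sigma^2)\bigl(\tr(\Phibf^\top\Psibf)\,\Psibf\Psibf^\top+2\Psibf\Phibf^\top\Psibf\Psibf^\top\bigr)$. Dividing by $(1+\sigma^2)^2\norm{\Phibf}_\frob\norm{\Psibf}_\frob^3$ and rewriting everything through $\tilde\Phibf=\Phibf/\norm{\Phibf}_\frob$, $\tilde\Psibf=\Psibf/\norm{\Psibf}_\frob$ (using $\tr(\Phibf^\top\Psibf)=\norm{\Phibf}_\frob\norm{\Psibf}_\frob\tr(\tilde\Phibf^\top\tilde\Psibf)$) collapses the whole expression to the stated form.

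The step I expect to be the main obstacle is the passage through the expectation in the normalizer substitution: \cref{lemma:norm_concentration,lemma:norm_concentration_x} are stated for a \emph{fixed} $\xbf_0$, so one must argue that conditioning on these concentration events is compatible with integrating over $\xbf_0\sim\Ncal(\zerobf,\Ibf)$ and over the augmentation noise, controlling the residual contribution from the complementary (low-probability) events. This is precisely where boundedness of $\norm{\Phibf}_\frob$ and $\norm{\Psibf}_\frob$ away from zero is used, since it prevents the denominators from amplifying the $\stocho(1)$ fluctuations; the fact that $\zbf'$ and $\omegabf$ are unit vectors keeps the integrands bounded and makes the dominated-convergence argument go through. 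The Isserlis computation, though the longest part, is mechanical.
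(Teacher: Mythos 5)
Your proposal is correct and follows essentially the same route as the paper: replace the normalizers via the two concentration lemmas, reduce $\Hbb_1$ and $\Hbb_2$ to unnormalized Gaussian moments, and finish with the symmetry of $\Psibf^\top\Phibf$. The only cosmetic difference is that you apply Isserlis' theorem directly to the joint Gaussian $(\xbf,\xbf_0)$ whereas the paper conditions on $\xbf_0$ and invokes the Petersen matrix-cookbook identities (which is the same computation); your explicit note that the $\stocho(1)$ errors pass through the expectation because $\zbf',\omegabf$ are unit vectors is a valid concern the paper leaves implicit.
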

Hence, with the reparametrization $\Fbf \defeq \Phibf\Phibf^\top$, we drop the asymptotically vanishing term and replace $\Hbf$ with the following $\hat\Hbf$:
\begin{equation*}
  \hat\Hbf = \frac{1}{1+\sigma^2} \left( \frac{\Fbf\Wbf}{N_\Phi N_\Psi} - \frac{2\Wbf\Fbf\Wbf\Fbf\Wbf}{N_\Phi N_\Psi^3} - \frac{N_\times \Wbf\Fbf\Wbf}{N_\Psi^2} \right),
\end{equation*}
where we define $N_\Phi \defeq \norm{\Phibf}_\frob$, $N_\Psi \defeq \norm{\Psibf}_\frob$, and $N_\times \defeq \tr(\Phibf^\top\Psibf)/N_\Phi N_\Psi$.

\begin{remark}[$h$ in asymptotic analysis]
  To make asymptotic terms $\stocho(1)$ in \cref{lemma:norm_concentration,lemma:norm_concentration_x} vanishing, we require $h=\Omega(\exp(\rho t))$, which can be seen by, for example, \cref{equation:proof:hanson_wright} in the appendix.
  By discretizing the continuous dynamics, the continuous time $t$ depends on the discrete time $\bar t$ (i.e., the number of total updates) and step size $\gamma$ via $t=\gamma\bar t$.
  In our simulation in \cref{section:experiments}, we use $(\bar t,\gamma,\rho)=(\num{3000},\num{0.05},\num{0.005})$, leading to $\rho t\lesssim\num{1}$.
  Under these choices, the representation dimension $h$ still stays reasonably small, though we admit this requirement as a limitation of our analysis.
\end{remark}

\begin{remark}[isometric representation]
  According to \cref{lemma:norm_concentration}, the learned representation $\Phibf\xbf$ asymptotically has the same norm regardless of the input $\xbf$.
  This asymptotically isometric behavior is due to the linear encoder, and an expected consequence of our model because we assume the isotropic input covariance (\cref{assumption:standard_mvn}) and focus on study of the representational collapse solely.
  To investigate the non-isometry of representation learning, we need to move on to the non-linear encoder, which is beyond the scope of this article.
\end{remark}

\section{Analysis of non-contrastive dynamics}
\label{section:analysis}

\begin{figure*}[t]
  \includegraphics[width=0.49\textwidth]{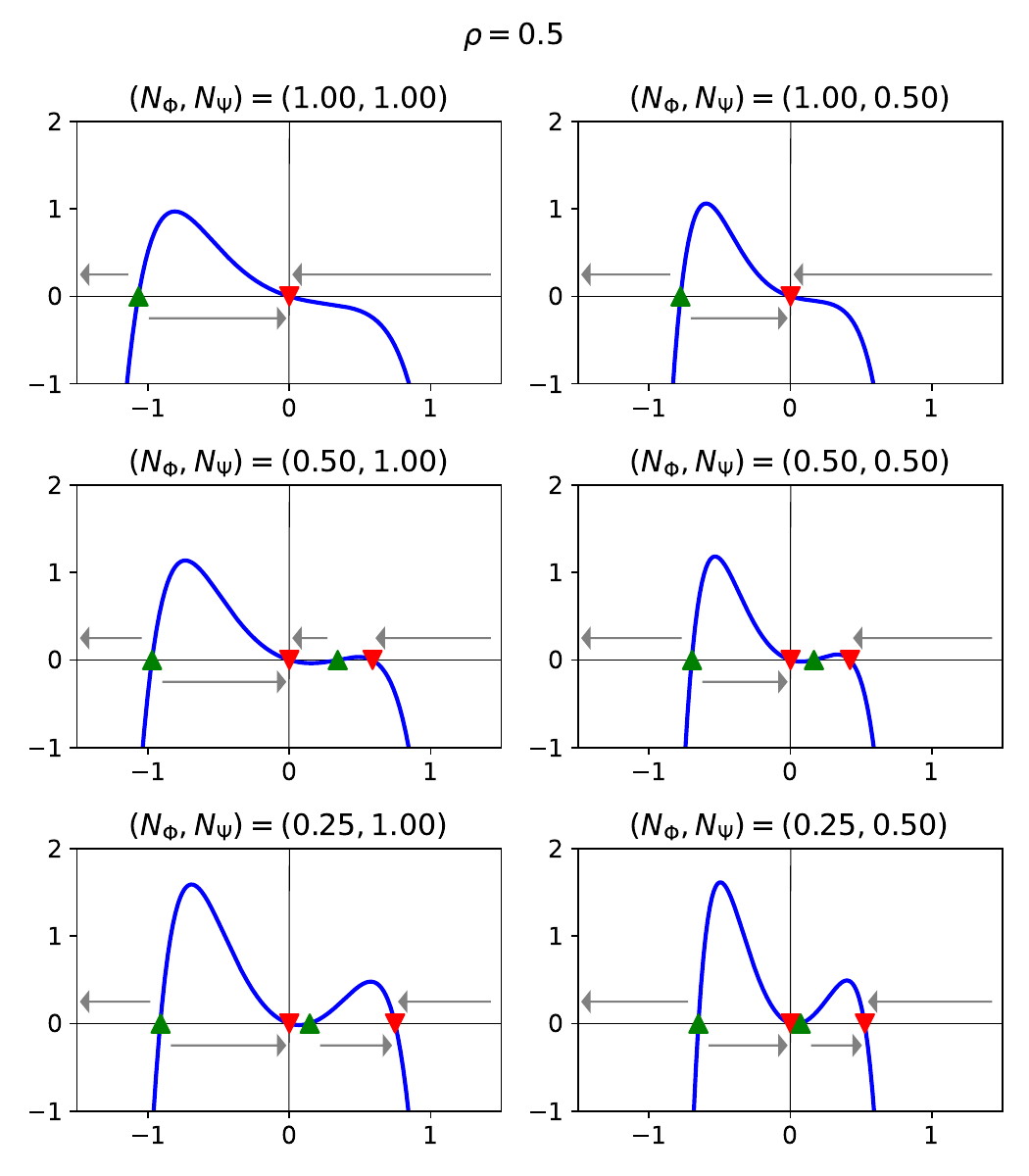}
  \includegraphics[width=0.49\textwidth]{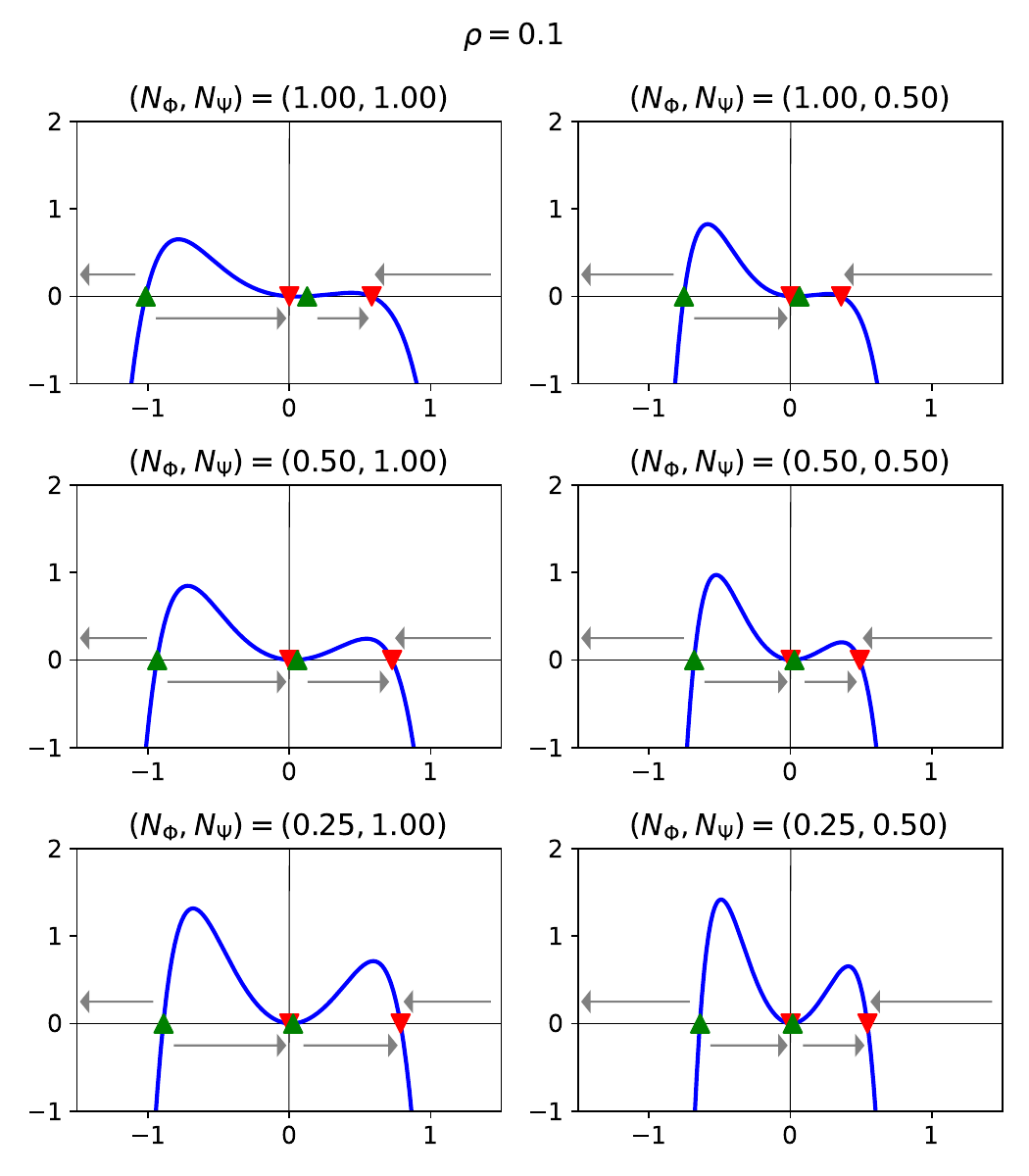}
  \caption{
    Numerical illustrations of the dynamics \cref{equation:dynamics_invariant_parabola} with different values of $(\rho,N_\Phi,N_\Psi)$, where vertical and horizontal axes denote $w_j$ and $\dot{w_j}$, respectively, where $\rho$ is the weight decay parameter, $N_\Phi$ and $N_\Psi$ are the norms of $\Phibf$ and $\Psibf$, respectively.
    The left two columns are illustrated for $\rho=0.5$, while right two columns for $\rho=0.1$.
    Red \textcolor{red!90!black}{$\blacktriangledown$} and green \textcolor{green!50!black}{$\blacktriangle$} indicate stable (namely, $\dot{w_j} < 0$) and unstable equilibrium (namely, $\dot{w_j} > 0$) points, respectively \citep{Hirsch2012}.
    For other parameters, we chose $N_\times = 1$ and $\sigma^2 = 0.1$ for illustration.
  }
  \label{figure:stability}
\end{figure*}

This section aims to analyze the dynamics \eqref{equation:dynamics_matrix_expected} to see when the dynamics has a stable equilibrium.

\subsection{Eigendecomposition of dynamics}
\label{section:eigenvalue_dynamics}
To analyze the stability of the dynamics~\eqref{equation:dynamics_matrix_expected}, we disentangle it into the eigenvalues.
We first show the condition where the eigenspaces of $\Wbf$ and $\Fbf$ align with each other.
Note that two commuting matrices can be simultaneously diagonalized.
\begin{restatable}{proposition}{thmcommutator}
  \hyperlink{proof:commutator_dynamics}{(proof $\blacktriangledown$)}
  \label{proposition:commutator_dynamics}
  Suppose $\Wbf$ is non-singular.
  Under the dynamics~\eqref{equation:dynamics_matrix_expected} with $\Hbf = \hat\Hbf$, the commutator $\Lbf(t) \defeq [\Fbf, \Wbf] \defeq \Fbf\Wbf - \Wbf\Fbf$ satisfies
  $\diff{\vec{\Lbf(t)}}{t} = -\Kbf(t)\vec{\Lbf(t)}$, where
  \begin{equation*}
    \begin{aligned}
      \Kbf(t) &\defeq 2\frac{\Wbf \oplus \Wbf\Fbf\Wbf + \Wbf^2(\Fbf\Wbf \oplus \Ibf_d)}{(1+\sigma^2) N_\Phi N_\Psi^3}\\&\phantom{\defeq} + \frac{(\Wbf^{-1}) \oplus \Fbf - (\Wbf - N_\times \Wbf^2) \oplus \Ibf_d}{(1+\sigma^2) N_\Phi N_\Psi} + 3\rho\Ibf_d,
    \end{aligned}
  \end{equation*}
  and $\Abf \oplus \Bbf \defeq \Abf \otimes \Bbf + \Bbf \otimes \Abf$ denotes the sum of the two Kronecker products.

  If $\inf_{t\ge 0} \eigmin{\Kbf(t)} \ge \lambda_0 > 0$ for some $\lambda_0 > 0$, then $\norm{\Lbf(t)}_\frob \to 0$ as $t \to \infty$.
\end{restatable}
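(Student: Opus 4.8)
The plan is to convert the matrix dynamics \eqref{equation:dynamics_matrix_expected} (with $\Hbf=\hat\Hbf$) into a \emph{homogeneous} linear ODE for $\Lbf$, read off the coefficient operator $\Kbf(t)$ by vectorizing, and then obtain the decay from a Lyapunov/Grönwall estimate. First I would solve for the velocities: with $\Wbf^\top=\Wbf$ (\cref{assumption:w_symmetric}) and $\Wbf$ non-singular, the first relation in \eqref{equation:dynamics_matrix_expected} gives $\dot\Wbf=\Wbf^{-1}\hat\Hbf-\rho\Wbf$, and the second gives $\dot\Phibf\Phibf^\top=\Wbf\hat\Hbf\Wbf^{-1}-\rho\Fbf$, hence $\dot\Fbf=\dot\Phibf\Phibf^\top+(\dot\Phibf\Phibf^\top)^\top=\Wbf\hat\Hbf\Wbf^{-1}+\Wbf^{-1}\hat\Hbf^\top\Wbf-2\rho\Fbf$. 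Substituting the closed form of $\hat\Hbf$, both $\dot\Wbf$ and $\dot\Fbf$ become Laurent polynomials in $\Fbf$ and $\Wbf$ with coefficients built from the scalars $N_\Phi,N_\Psi,N_\times,\rho,\sigma^2$; note that $\hat\Hbf-\hat\Hbf^\top$ is proportional to $\Fbf\Wbf-\Wbf\Fbf=\Lbf$ (the last two monomials of $\hat\Hbf$ are symmetric), so the mild inconsistency of $\dot\Wbf$ with \cref{assumption:w_symmetric} is itself of order $\Lbf$.

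Next I would differentiate $\Lbf=[\Fbf,\Wbf]$ and substitute: $\dot\Lbf=[\dot\Fbf,\Wbf]+[\Fbf,\dot\Wbf]$. The structural point is that every monomial in $\dot\Fbf$ and $\dot\Wbf$ is a product of factors from $\{\Fbf,\Wbf,\Wbf^{-1}\}$, so the Leibniz rule for commutators, $[\Abf\Bbf,\Cbf]=\Abf[\Bbf,\Cbf]+[\Abf,\Cbf]\Bbf$, together with $[\Fbf,\Wbf]=\Lbf$, $[\Wbf,\Wbf]=0$, and $[\Fbf,\Wbf^{-1}]=-\Wbf^{-1}\Lbf\Wbf^{-1}$, lets me pull exactly one copy of $\Lbf$ out of each term; there is no term free of $\Lbf$ (as there cannot be: when $\Fbf$ and $\Wbf$ commute, $\hat\Hbf$ is a symmetric polynomial in them, so $\dot\Fbf,\dot\Wbf$ commute with $\Fbf,\Wbf$ and $\dot\Lbf=0$). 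This gives $\dot\Lbf=-\sum_i\Mbf_i\Lbf\Nbf_i$ with explicit $\Mbf_i,\Nbf_i$. Applying $\vec{\Abf\Xbf\Bbf}=(\Bbf^\top\otimes\Abf)\vec{\Xbf}$ and $\Fbf^\top=\Fbf$, $\Wbf^\top=\Wbf$ turns each symmetric pair $\Mbf\Lbf\Nbf+\Nbf^\top\Lbf\Mbf^\top$ into a Kronecker sum $\Abf\oplus\Bbf$; grouping by the three monomials of $\hat\Hbf$ and by the decay terms (the $3\rho\Ibf$ coming from $[-2\rho\Fbf,\Wbf]=-2\rho\Lbf$ and $[\Fbf,-\rho\Wbf]=-\rho\Lbf$) yields exactly $\diff{\vec{\Lbf(t)}}{t}=-\Kbf(t)\vec{\Lbf(t)}$ with the stated $\Kbf(t)$.

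For the decay claim, put $V(t)\defeq\tfrac12\norm{\Lbf(t)}_\frob^2=\tfrac12\norm{\vec{\Lbf(t)}}_2^2$. Then $\dot V=\inpr{\vec{\Lbf(t)}}{\diff{\vec{\Lbf(t)}}{t}}=-\vec{\Lbf(t)}^\top\Kbf(t)\vec{\Lbf(t)}\le-\eigmin{\Kbf(t)}\norm{\vec{\Lbf(t)}}_2^2\le-2\lambda_0 V(t)$, the eigenvalue bound being applied to the symmetric part of $\Kbf(t)$ (all that the quadratic form sees). Grönwall's inequality then gives $V(t)\le V(0)\,e^{-2\lambda_0 t}$, so $\norm{\Lbf(t)}_\frob\le\norm{\Lbf(0)}_\frob\,e^{-\lambda_0 t}\to0$; in particular $\Lbf(0)=0$ forces $\Lbf\equiv0$ by uniqueness of solutions to the linear ODE.

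The main obstacle is the bookkeeping in the second step: expanding the commutators for the degree-five monomial $\Wbf\Fbf\Wbf\Fbf\Wbf$ and its conjugates $\Wbf^2\Fbf\Wbf\Fbf,\Fbf\Wbf\Fbf\Wbf^2$ that enter $\dot\Fbf$ (and the analogous contributions from $[\Fbf,\dot\Wbf]$), carefully tracking the $\Wbf^{-1}$ factors introduced when solving for the velocities — this is precisely why the non-singularity of $\Wbf$ is required — and reassembling all the sandwiched-$\Lbf$ terms, after vectorization, into exactly the $\oplus$-combinations written in $\Kbf(t)$, including checking how $N_\times=\tr(\Phibf^\top\Psibf)/(N_\Phi N_\Psi)$ merges into the denominators. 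Once the homogeneous linear ODE is in hand, the Lyapunov step is routine.
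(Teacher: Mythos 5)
Your proposal follows essentially the same route as the paper: solve for $\dot\Wbf$ and $\dot\Fbf$ using the symmetry and non-singularity of $\Wbf$, expand $\dot\Lbf=[\dot\Fbf,\Wbf]+[\Fbf,\dot\Wbf]$ via the Leibniz rule for commutators so that each monomial of $\hat\Hbf$ contributes a sandwiched copy of $\Lbf$, vectorize into the $\oplus$-form of $\Kbf(t)$, and conclude decay (the paper delegates this last step to \citet[Lemma~2]{Tian2021ICML}, which is exactly the Lyapunov--Gr\"{o}nwall estimate you write out). Your structural remark that no $\Lbf$-free term can survive—because $\hat\Hbf$ commutes with $\Fbf$ and $\Wbf$ whenever they commute—is a correct and useful sanity check that the paper leaves implicit.
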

The derivation of the dynamics of~$\Lbf(t)$ follows from the standard matrix calculus.
After deriving the dynamics, we leverage \citet[Lemma~2]{Tian2021ICML} to establish~$\|\Lbf(t)\|_\frob\to0$.
\Cref{proposition:commutator_dynamics} is a variant of \cite[Theorem 3]{Tian2021ICML} for the dynamics~\eqref{equation:dynamics_matrix_expected}.
Consequently, we see that 
$\Wbf$ and $\Fbf$ are simultaneously diagonalizable at the equilibrium $\norm{\Lbf(t)}_\frob = \norm{[\Fbf, \Wbf]}_\frob = 0$.
We then approximately deal with the dynamics~\eqref{equation:dynamics_matrix_expected}.
\begin{assumption}[Always commutative]
  \label{assumption:equilibrium_commutator}
  $\norm{[\Fbf, \Wbf]}_\frob \equiv 0$ for $\forall t \ge 0$.
\end{assumption}
We test the assumption in \cref{section:experiments}, where we see that the commutator can be regarded as being nearly zero.
Let $\Ubf$ be the common eigenvectors of $\Fbf$ and $\Wbf$,
then $\Wbf = \Ubf\Lambdabf_W\Ubf^\top$ and $\Fbf = \Ubf\Lambdabf_F\Ubf^\top$,
where $\Lambdabf_W = \diag[w_1, w_2, \dots, w_h]$ and $\Lambdabf_F = \diag[f_1, f_2, \dots, f_h]$.
By extending the discussion of \cite[Appendix B.1]{Tian2021ICML}, we can show that $\Ubf$ would not change over time.
\begin{restatable}{proposition}{thmeigenspace}
  \hyperlink{proof:stable_eigenspace}{(proof $\blacktriangledown$)}
  \label{proposition:stable_eigenspace}
  Suppose $\Wbf$ is non-singular.
  Under the dynamics~\eqref{equation:dynamics_matrix_expected} with $\Hbf = \hat\Hbf$, we have $\dot\Ubf = \Obf$.
\end{restatable}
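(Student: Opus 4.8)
The plan is to differentiate the spectral decomposition $\Fbf=\Ubf\Lambdabf_F\Ubf^\top$ (equivalently work with $\Wbf=\Ubf\Lambdabf_W\Ubf^\top$) along the dynamics~\eqref{equation:dynamics_matrix_expected} and show that the off-diagonal part of the rotation generator vanishes. Concretely, write $\Ubf^\top\dot\Ubf = \Sbf$, which is skew-symmetric because $\Ubf^\top\Ubf=\Ibf$. Then $\dot\Fbf = \Ubf(\dot\Lambdabf_F + \Sbf\Lambdabf_F - \Lambdabf_F\Sbf)\Ubf^\top = \Ubf(\dot\Lambdabf_F + [\Sbf,\Lambdabf_F])\Ubf^\top$. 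Conjugating by $\Ubf^\top$, the $(i,j)$ off-diagonal entry ($i\ne j$) of $\Ubf^\top\dot\Fbf\Ubf$ equals $(f_j-f_i)S_{ij}$, and similarly the off-diagonal entries of $\Ubf^\top\dot\Wbf\Ubf$ equal $(w_j-w_i)S_{ij}$. So it suffices to show that $\Ubf^\top\dot\Fbf\Ubf$ (or $\Ubf^\top\dot\Wbf\Ubf$) is diagonal: then on any eigenspace where the eigenvalues are simple, $S_{ij}=0$, and on a degenerate eigenspace $S$ can be taken to be zero by the usual gauge choice, giving $\dot\Ubf=\Ubf\Sbf=\Obf$.

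Next I would read off $\dot\Wbf$ and $\dot\Fbf$ explicitly from~\eqref{equation:dynamics_matrix_expected} with $\Hbf=\hat\Hbf$. From $\Wbf^\top\dot\Wbf = \hat\Hbf - \rho\Wbf\Wbf^\top$ and the symmetry of $\Wbf$ (\cref{assumption:w_symmetric}) and non-singularity, we get $\dot\Wbf = \Wbf^{-1}\hat\Hbf - \rho\Wbf$; for $\Fbf=\Phibf\Phibf^\top$ we have $\dot\Fbf = \dot\Phibf\Phibf^\top + \Phibf\dot\Phibf^\top$, and the second equation in~\eqref{equation:dynamics_matrix_expected} gives $\dot\Phibf\Phibf^\top\Wbf^\top = \Wbf^\top\hat\Hbf - \rho\Fbf\Wbf^\top$, hence $\dot\Phibf\Phibf^\top = \Wbf^{-\top}\Wbf^\top\hat\Hbf\Wbf^{-\top}\cdots$ — more cleanly, using $\Wbf$ symmetric, $\dot\Phibf\Phibf^\top = \hat\Hbf\Wbf^{-1}\cdot\Wbf - \rho\Fbf$ after right-multiplying by $\Wbf^{-\top}$, so $\dot\Fbf = \hat\Hbf + \hat\Hbf^\top$ modulo the symmetric $-2\rho\Fbf$ term (I would carry the bookkeeping carefully; the point is $\dot\Fbf$ is a symmetric polynomial expression in $\Fbf$, $\Wbf$, $\hat\Hbf$, and $\Wbf^{-1}$). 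Now invoke \cref{assumption:equilibrium_commutator}: $\Fbf$ and $\Wbf$ commute, so under $\Ubf^\top(\cdot)\Ubf$ every factor — $\Lambdabf_F$, $\Lambdabf_W$, $\Lambdabf_W^{-1}$, and hence $\Ubf^\top\hat\Hbf\Ubf$ built out of $\Fbf\Wbf$, $\Wbf\Fbf\Wbf\Fbf\Wbf$, $\Wbf\Fbf\Wbf$ via the formula for $\hat\Hbf$ together with the scalars $N_\Phi=\norm{\Phibf}_\frob$, $N_\Psi=\norm{\Psibf}_\frob$, $N_\times$ — becomes diagonal. A product and sum of diagonal matrices is diagonal, so $\Ubf^\top\dot\Fbf\Ubf$ is diagonal, which is exactly what we needed.

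The one genuine subtlety, and the step I expect to be the main obstacle, is handling degenerate eigenvalues: when $f_i=f_j$ for $i\ne j$ the equation $(f_j-f_i)S_{ij}=0$ is vacuous and one must argue separately that $S_{ij}$ can be chosen to vanish. The resolution is that within a degenerate eigenspace the choice of orthonormal eigenbasis is a free gauge; since $\dot\Fbf$ (and $\dot\Wbf$) restricted to that eigenspace is a scalar multiple of the identity — it is diagonal with equal entries, by the argument above and because $\Wbf,\Fbf$ act as scalars there — the eigenspace itself is invariant and we may fix the basis once and for all, setting the corresponding block of $\Sbf$ to zero. (This mirrors the treatment in \cite[Appendix B.1]{Tian2021ICML}, which the excerpt explicitly says we are extending.) A minor secondary point is that we must know $\Wbf$ stays non-singular along the flow so that $\Wbf^{-1}$ in $\dot\Wbf$ and in $\hat\Hbf$-related manipulations is well-defined; this is assumed in the statement. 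With these two points dispatched, $\Sbf=\Obf$ and therefore $\dot\Ubf=\Obf$.
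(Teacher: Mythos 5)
Your proposal is correct and follows essentially the same route as the paper's proof: verify that $\Ubf^\top\dot\Wbf\Ubf$ and $\Ubf^\top\dot\Fbf\Ubf$ are diagonal because $\hat\Hbf$ is a polynomial in the commuting matrices $\Fbf$, $\Wbf$, $\Wbf^{-1}$, and then conclude $\dot\Ubf=\Obf$. The paper simply delegates the skew-symmetric-generator argument and the degenerate-eigenvalue gauge issue to \cite[Appendix~B.1]{Tian2021ICML}, whereas you spell them out explicitly.
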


With \cref{assumption:equilibrium_commutator} and \cref{proposition:stable_eigenspace},
we decompose \eqref{equation:dynamics_matrix_expected} with $\Hbf = \hat\Hbf$ into the eigenvalues.
\begin{equation}
  \label{equation:dynamics_eigenvalue}
  \begin{cases}
    \dot w_j &= -D_j - \rho w_j, \\
    \dot f_j &= -2D_jw_j - 2\rho f_j, \\
  \end{cases}
\end{equation}
where
\begin{equation*}
  D_j = \frac{1}{1+\sigma^2}\left[\frac{2}{N_\Phi N_\Psi^3}f_j^2w_j^2 + \frac{N_\times}{N_\Psi^2} f_jw_j - \frac{1}{N_\Phi N_\Psi} f_j\right].
\end{equation*}
The eigenvalue dynamics~\eqref{equation:dynamics_eigenvalue} is far more interpretable than the matrix dynamics~\eqref{equation:dynamics_matrix_expected}.
Subsequently, we investigate when learned representations collapse through the stability analysis of the eigenvalue dynamics.
By reducing the matrix dynamics to eigenvalue dynamics, we can leverage common tools of stability analysis in the field of complex systems.

\subsection{Cosine-loss dynamics dynamically yields a stable and non-collapsed equilibrium}
\label{section:regimes}
We are interested in how the eigenvalue avoids collapse with feature normalization.
For this purpose, we investigate the equilibrium points of the eigenvalue dynamics~\eqref{equation:dynamics_eigenvalue}.

\begin{figure*}[t]
  \centering
  \input{figure/regime}
  \caption{
    Schema of Collapse, Acute, and Stable regimes of the cosine-loss eigenvalue dynamics~\cref{equation:dynamics_invariant_parabola}.
    Red \textcolor{red!90!black}{$\blacktriangledown$} and green \textcolor{green!50!black}{$\blacktriangle$} indicate stable (namely, $\dot{w_j} < 0$) and unstable equilibrium (namely, $\dot{w_j} > 0$) points, respectively.
    The black $\blacklozenge$ denotes the saddle point.
    \colorbox{red!20}{\strut Red}, \colorbox{black!20}{\strut gray}, and {\strut blue} backgrounds indicate ranges where the eigenvalue will diverge to $-\infty$, collapse to $0$, and converge to the stable equilibrium, respectively.
    As $N_\Phi$ (norm of $\Phibf$) and $N_\Psi$ (norm of $\Psibf$) become smaller, the dynamics bifurcates in the direction <<Collapse $\to$ Acute $\to$ Stable>>,
    and as $N_\Phi$ and $N_\Psi$ become larger, the dynamics bifurcates in the opposite direction <<Stable $\to$ Acute $\to$ Collapse>>.
  }
  \label{figure:regime}
\end{figure*}

\begin{figure*}[t]
  \centering
  \input{figure/l2}
  \caption{
    Schema of three eigenvalue dynamics in the L2 loss case.
    Each figure illustrates the eigenvalue corresponding \emph{fixed} weight decay $\rho$.
    The meaning of ($\blacktriangle, \blacktriangledown, \blacklozenge$) and background colors can be found in the caption of \cref{figure:regime}.
    The figure borrows the illustration of \cite[Fig.~4]{Tian2021ICML}.
    Importantly, $w_j$ cannot avoid collapse when a given weight decay is too strong so that we are in Case (c), which does not align the practice.
  }
  \label{figure:l2_dynamics}
\end{figure*}

\paragraph*{Invariant parabola.}
By simple algebra, $\dot{f_j} - 2w_j\dot{w_j} = -2\rho(f_j - w_j^2)$.
Noting that $\diff{}{t}(f_j - w_j^2) = \dot{f_j} - 2w_j\dot{w_j}$ and integrating both ends, we encounter the following relation:
\begin{equation}
  \label{equation:asymptotic_parabola}
  f_j(t) = w_j^2(t) + c_j \exp(-2\rho t),
\end{equation}
where $c_j \defeq f_j(0) - w_j^2(0)$ is the initial condition.
\Cref{equation:asymptotic_parabola} elucidates that the dynamics of $(w_j(t), f_j(t))$ asymptotically converges to the parabola $f_j(t) = w_j^2(t)$ as $t \to \infty$
when regularization $\rho > 0$ exists.
The information of initialization $c_j$ shall be forgotten.
Stronger regularization yields faster convergence to the parabola.
We reasonably expect that this exponential convergence is much faster than the drifts of $w_j$ and $f_j$ so that they are constrained to the parabola quickly.

\paragraph*{Dynamics on invariant parabola.}
We now focus on the dynamics on the invariant parabola.
Substituting $f_j(t) = w_j^2(t)$ into $w_j$-dynamics in \cref{equation:dynamics_eigenvalue} yields the following dynamics:
\begin{equation}
  \label{equation:dynamics_invariant_parabola}
  \begin{aligned}
    &\text{(Cos-loss dynamics)} \\
    &\dot{w_j} = -\frac{2}{(1+\sigma^2)N_\Phi N_\Psi^3}w_j^6 - \frac{N_\times}{(1+\sigma^2)N_\Psi^2}w_j^3 + \frac{1}{(1+\sigma^2)N_\Phi N_\Psi}w_j^2 - \rho w_j.
  \end{aligned}
\end{equation}

We illustrate the dynamics~\eqref{equation:dynamics_invariant_parabola} with different parameter values in \cref{figure:stability}.
This dynamics always has $w_j = 0$ as an equilibrium point, and the number of equilibrium points varies between two and four.
Notably, \cref{equation:dynamics_invariant_parabola} is a \emph{sixth-order} non-linear ODE (in $w_j$),
whereas the L2 loss dynamics \citep[Eq.~(16)]{Tian2021ICML} induces a \emph{third-order} non-linear eigenvalue dynamics, as we will show in \cref{section:comparison}.
From \cref{figure:stability}, we can classify into three regimes (refer to \cref{figure:regime} together).
The detailed derivation of these regimes can be found in \cref{section:regime_shift}.

\paragraph{\raisebox{-1pt}{\Sadey} Collapse regime.}
When $\rho$, $N_\Phi$, and $N_\Psi$ are large altogether, the dynamics only has two equilibrium points.
For example, see the plots in \cref{figure:stability} with $(\rho,N_\Phi,N_\Psi) \in \set{(0.5,1.0,1.0), (0.5,1.0,0.5)}$.
In this regime, $w_j=0$ is the only stable equilibrium, causing the collapsed dynamics.
This regime is brittle because the stable equilibrium $w_j=0$ blows up the normalizers $N_\Phi^{-1}$ and $N_\Psi^{-1}$ in the dynamics~\eqref{equation:dynamics_invariant_parabola}.
As $w_j$ shrinks, the values $N_\Phi$ and $N_\Psi$ shrink together, too, which brings the dynamics into the next two regimes.

\paragraph{\raisebox{-1pt}{\Neutrey} Acute regime.}
When $N_\Phi$ and $N_\Psi$ become smaller than those in Collapse, two new equilibrium points emerge and the number of equilibrium points is four in total.
For example, see the plots in \cref{figure:stability} with $(\rho,N_\Phi,N_\Psi) \in \set{(0.5,0.5,0.5), (0.1,1.0,1.0)}$.
Let $w_\blacktriangle^{(-)}$, $w_\blacktriangledown^{(0)} (= 0)$, $w_\blacktriangle^{(+)}$, and $w_\blacktriangledown^{(+)}$ denote the equilibrium points from smaller to larger ones, respectively, namely, $w_\blacktriangle^{(-)} < w_\blacktriangledown^{(0)} = 0 < w_\blacktriangle^{(+)} < w_\blacktriangledown^{(+)}$ (see \cref{figure:regime}).
Note that $w_j = w_\blacktriangle^{(-)}, w_\blacktriangle^{(+)}$ are unstable and $w_j = w_\blacktriangledown^{(0)}, w_\blacktriangledown^{(+)}$ are stable \citep{Hirsch2012}.
In this regime, the eigenvalue initialized larger than $w_\blacktriangle^{(+)}$ converge to non-degenerate point $w_\blacktriangledown^{(+)}$.
However, the eigenvalue degenerates to $w_\blacktriangledown^{(0)}$ if initialization is in the range $[w_\blacktriangle^{(-)}, w_\blacktriangle^{(+)}]$ (close to zero),
and diverges if initialization has large negative value $< w_\blacktriangle^{(-)}$.
If the eigenvalue degenerates, the values $N_\Phi$ and $N_\Psi$ further shrink and then the regime enters the final one;
if the eigenvalue diverges, $N_\Phi$ and $N_\Psi$ inflate and the regime goes back to the previous Collapse.

\paragraph{\raisebox{-1pt}{\Laughey} Stable regime.}
When $N_\Phi$ and $N_\Psi$ are further smaller than those in Acute, the middle two equilibrium points $w_\blacktriangledown^{(0)}$ and $w_\blacktriangle^{(+)}$ approach and form a saddle point.
For example, see the plots in \cref{figure:stability} with $(\rho,N_\Phi,N_\Psi) \in \set{(0.5,0.25,0.5), (0.1,0.25,0.5)}$.
Denote this saddle point by $w_\blacklozenge$.
The dynamics has a unstable equilibrium $w_\blacktriangle^{(-)}$, a saddle point $w_\blacklozenge$, and a stable equilibrium $w_\blacktriangledown^{(+)}$, from smaller to larger ones.
In this regime, the eigenvalue stably converges to the non-degenerate point $w_j = w_\blacktriangledown^{(+)}$ unless the initialization is smaller than $w_\blacktriangle^{(-)}$.

\begin{remark}
  $w_\blacktriangledown^{(0)} = w_\blacktriangle^{(+)}$ never occurs and neither does the Stable regime because the dynamics diverges as $N_\Phi, N_\Psi \to 0$.
  Nonetheless, this approximately occurs with realistic parameters such as $(\rho,N_\Phi,N_\Psi)=(0.1,0.25,0.5)$.
\end{remark}

\paragraph*{Three regimes prevent collapse.}
To wrap up, we argue that the dynamics \eqref{equation:dynamics_invariant_parabola} eventually converges to the stable equilibrium $w_\blacktriangledown^{(+)}$ that exists in Acute and Stable regimes, even if the initial regime is Collapse.
We illustrate the three regimes and this concept in \cref{figure:regime}.
As we see in the numerical experiments (\cref{section:experiments}), the parameter initialization (\cref{assumption:initialization}) hardly makes the initial eigenvalue smaller than $w_\blacktriangle^{(-)}$:
indeed, we simulated the initial eigenvalue distributions in \cref{figure:eigeninit}, which indicates that the eigenvalues are sufficiently larger than $w_\blacktriangle^{(-)}$.
Therefore, the learning dynamics has stable equilibriums and successfully stabilizes.
\begin{figure}[t]
  \centering
  \includegraphics[width=0.48\textwidth]{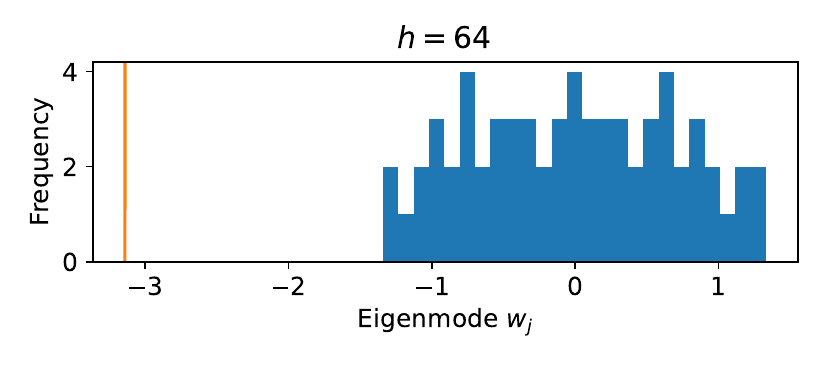}
  \includegraphics[width=0.48\textwidth]{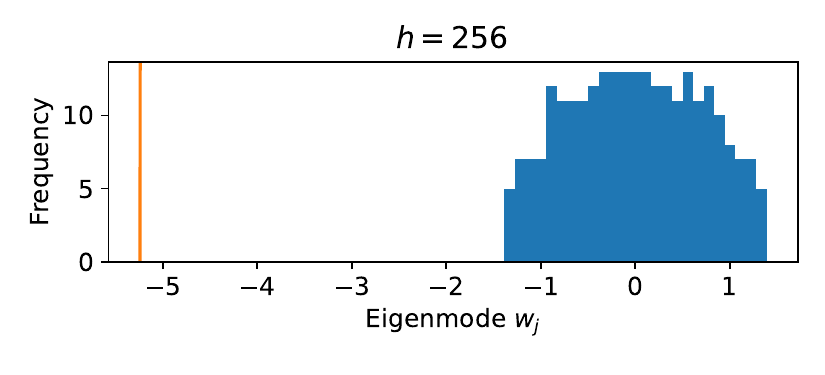}
  \caption{
    Numerical simulation of eigenvalue distributions of $\Wbf$.
    In each figure, we generate $\Wbf$ and $\Phibf$ by the initialization of \cref{assumption:initialization}, and illustrate the histogram of eigenvalues of $\Wbf$.
    The vertical line indicates the value of $w_\blacktriangle^{(-)}$, the negative unstable equilibrium point of $w_j$-dynamics \eqref{equation:dynamics_invariant_parabola},
    computed by the binary search and numerical root finding.
    For parameters, we chose $\rho=0.05$, $\sigma^2=1.0$, $d=2048$, and $h \in \set{64, 256}$.
  }
  \label{figure:eigeninit}
\end{figure}

Importantly, \emph{the cosine loss dynamics~\eqref{equation:dynamics_invariant_parabola} can stabilize and would not collapse to zero regardless of the regularization strength $\rho$},
which is in stark contrast to the L2 loss dynamics, as detailed in \cref{section:comparison}.
This observation tells us the importance of feature normalization to prevent representation collapse in non-contrastive self-supervised learning.
Note that the eigenvalue $w_j$ can explode to the negative infinity even in Stable regime (if it is initialized smaller than $w_\blacktriangle^{(-)}$), which is an interesting observation from the dynamics analysis beyond a simple observation that having feature normalization would never collapse the solution towards zero.

\subsection{L2-loss dynamics cannot escape collapse with intense weight decay}
\label{section:comparison}

Whereas we focused on the study of the cosine loss dynamics, \cite{Tian2021ICML} and many earlier studies engaged in the L2 loss dynamics, which is simple but does not entail feature normalization.
Here, we compare the cosine and L2 loss dynamics to see how feature normalization plays a crucial role.

Let us review the dynamics of \cite{Tian2021ICML}.
We inherit \cref{assumption:w_symmetric} (symmetric projector), \cref{assumption:standard_mvn} (standard normal input), and \cref{assumption:equilibrium_commutator} ($\Fbf$ and $\Wbf$ are commutative).
Under this setup, \cite{Tian2021ICML} analyzed the non-contrastive dynamics \eqref{equation:dynamics_matrix_expected} with the L2 loss \eqref{equation:l2_loss}, and revealed that the eigenvalues of $\Wbf$ and $\Fbf$ (denoted by $w_j$ and $f_j$, respectively) asymptotically converges to the invariant parabola $f_j(t) = w_j^2(t)$ (see \cref{equation:asymptotic_parabola}),
where the $w_j$-dynamics reads:
\begin{equation}
  \text{(L2-loss dynamics)} \qquad
  \dot{w_j} = w_j^2 \{1 - (1+\sigma^2)w_j\} - \rho w_j.
  \label{equation:eigenvalue_dynamics_l2}
\end{equation}
Compare the L2-loss dynamics~\eqref{equation:eigenvalue_dynamics_l2} (third-order) and the cosine-loss dynamics~\eqref{equation:dynamics_invariant_parabola} (sixth-order).
Even if we leverage the norm constancy $N_\Phi \equiv \|\Phibf\|_\frob$ and $N_\Psi \equiv \|\Psibf\|_\frob$ at the proportional limit, the cosine-loss does not reduce to the L2-loss dynamics just because the cosine loss entails gradients~\eqref{equation:dynamics_matrix_expected} with higher nonlinearity.
Note that we omit the exponential moving average of the online representation in BYOL ($\tau = 1$) and use the same learning rate for the predictor and online nets ($\alpha = 1$) in \cite{Tian2021ICML} for comparison to our dynamics~\eqref{equation:dynamics_invariant_parabola}.

The behaviors of the two dynamics are compared in \cref{figure:regime} (cosine loss) and \cref{figure:l2_dynamics} (L2 loss).
One of the most important differences is that the cosine loss dynamics has the saddle-node bifurcation depending on $N_\Phi$, $N_\Psi$, and $N_\times$,
while the L2 loss dynamics does not have such a bifurcation.
Thus, the L2 loss dynamics \eqref{equation:eigenvalue_dynamics_l2} and its time evolution are solely determined by a given regularization strength $\rho$ (see three plots in \cref{figure:l2_dynamics}).
That being said, \emph{the eigenvalue cannot stably converges but collapses to zero if the L2 loss dynamics is excessively regularized} such that $\rho > \frac{1}{4(1+\sigma^2)}$.
On the contrary, the cosine loss with a strong regularization may initially make the dynamics fall into the Collapse regime, where no meaningful stable equilibrium exists,
but the regime gradually bifurcates to Acute as the eigenvalue (and the norms $N_\Phi$ and $N_\Psi$ accordingly) approaches zero.
Such a bifurcation owes to feature normalization involved in the cosine loss.

\subsection{Numerical experiments}
\label{section:experiments}

\begin{figure*}[t]
  \centering
  \includegraphics[width=0.325\textwidth]{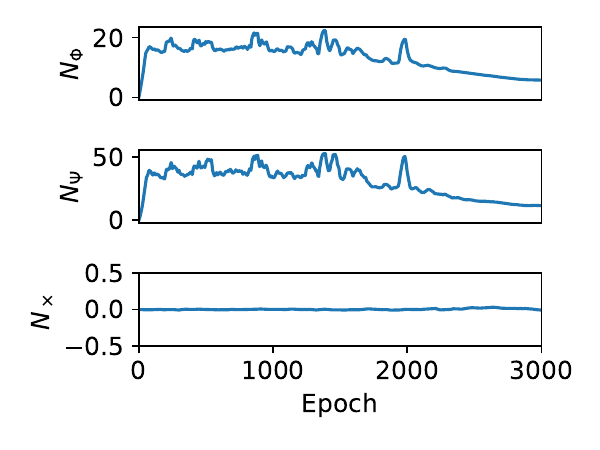} \hfill
  \includegraphics[width=0.325\textwidth]{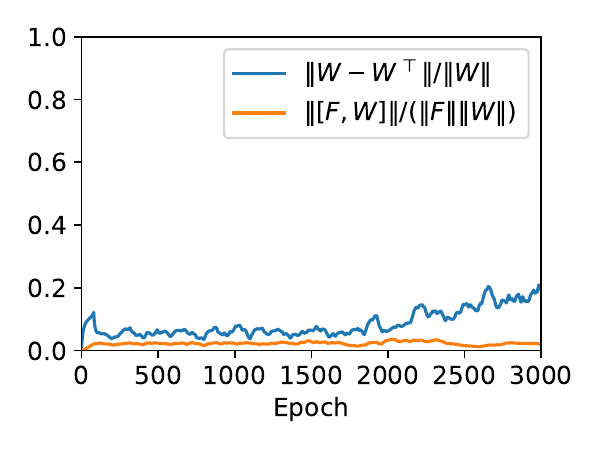} \hfill
  \includegraphics[width=0.325\textwidth]{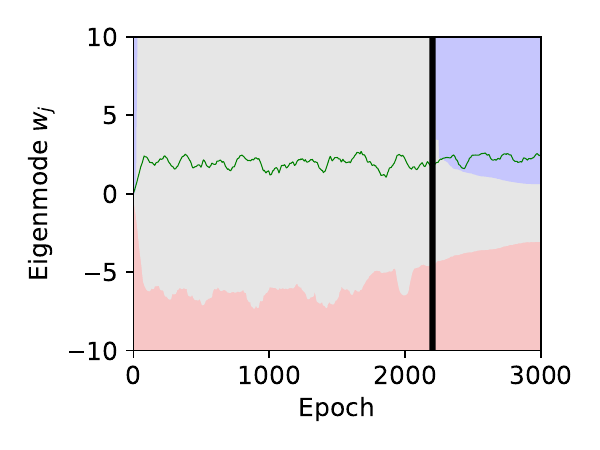}
  \caption{
    Numerical simulation of the SimSiam model.
    \textbf{(Left)} Time evolution of $N_\Phi$ (norm of $\Psibf$), $N_\Psi$ (norm of $\Psibf$), and $N_\times$ ($=\tr(\Phibf^\top\Psibf)/N_\Phi N_\Psi$). We see the gradual shrinkages of $N_\Phi$ and $N_\Psi$.
    \textbf{(Center)} Asymmetry of the projection head $\Wbf$ (measured by the relative error of $\Wbf - \Wbf^\top$) and non-commutativity of $\Fbf$ and $\Wbf$ (measured by the relative error of the commutator $[\Fbf,\Wbf]$). The relative errors stay close to zero during time evolution (cf. \cref{assumption:w_symmetric,assumption:equilibrium_commutator}).
    \textbf{(Right)} The leading eigenvalue of the projection head $w_j$ (green line), with background colors illustrating three intervals where \colorbox{red!20}{\raisebox{0pt}[0.8ex]{$w_j$ diverges}}, \colorbox{black!20}{\raisebox{0pt}[0.8ex]{$w_j$ collapses}}, and \colorbox{blue!20}{\raisebox{0pt}[0.8ex]{$w_j$ stably converges}} at each epoch. The regime boundaries are numerically computed by the binary search and root finding of \eqref{equation:dynamics_invariant_parabola}. Each color corresponds to those in \cref{figure:regime}. The vertical black line indicates the bifurcation from Collapse ($\mathrm{epoch} < 2200$) to Acute ($\mathrm{epoch} > 2200$).
  }
  \label{figure:experiment}
\end{figure*}

We conducted a simple numerical simulation of the SimSiam model using the official implementation available at \url{https://github.com/facebookresearch/simsiam}.
We tested the linear model setup shown in \cref{section:model},
with linear representation net $\Phibf$ and linear projection head $\Wbf$, and the representation dimension was set to $h=64$.
Data are generated from the $512$-dimensional ($d=512$) standard multivariate normal (\cref{assumption:standard_mvn}) and data augmentation follows isotropic Gaussian noise $\Dcal_{\xbf_0}^\mathrm{aug}$, with variance $\sigma^2 = 1.0$.
The learning rate of the momentum SGD was initially set to $0.05$ and scheduled by the cosine annealing.
The regularization strength was set to $\rho=0.005$.
For the other implementation details, we followed the official implementation.

The results are shown in \cref{figure:experiment}.
In the left figure, we illustrate how $N_\Phi$, $N_\Psi$, $N_\times$ drift over the time.
As seen, $N_\Phi$ and $N_\Psi$ gradually shrinks along the time, which theoretically leads to the saddle-node bifurcation.
This bifurcation is empirically observed in \cref{figure:experiment} (Right).
In this figure, we compute the theoretical intervals where $w_j$ diverges, collapses and stably converges by numerically solving the equilibrium equation $\dot{w_j}=0$ with the dynamics \eqref{equation:dynamics_invariant_parabola}.
At each epoch, the background colors corresponds to the theoretical intervals.
The regularization strength $\rho=0.005$ used in this experiment is rather larger than the default SimSiam regularization strength $\rho=10^{-4}$,
which leads to the Collapse regime initially (when $\mathrm{epoch} < 2200$) but gradually bifurcates to the Acute regime (when $\mathrm{epoch} > 2200$).
Thus, we observed how the eigenvalue escapes from the Collapse regime.
Lastly, we quickly confirm the validity of \cref{assumption:w_symmetric,assumption:equilibrium_commutator} by measuring the asymmetry of the projection head $\Wbf$ and commutativity of $\Fbf$ and $\Wbf$ in \cref{figure:experiment} (Center),
which suggests that the assumptions can be said reasonable in general.
More empirical analyses (together with the other eigenvalues; additionally, the simulation with ResNet-18 encoder) can be found in \cref{section:additional_experiments}.

\section{Conclusion and limitations}
\label{section:conclusion}

In this work, we questioned how to describe non-contrastive dynamics without complete collapse.
The existing theory \citep{Tian2021ICML} leverages the simplicity of the L2 loss to analytically derive the dynamics of the two-layer non-contrastive learning, while regularization unexpectedly affects collapse.
This may indicate a drawback of the L2 loss analysis, though their theoretical model is transparent.
Alternatively, we focused on the cosine loss, which involves feature normalization
and derived the corresponding eigenvalue dynamics.
Despite that the dynamics may fall into the Collapse regime for too strong regularization, the shrinkage of the eigenvalues brings the regime into non-collapse ones.
Thus, we witnessed the importance of feature normalization.
Technically, we leveraged the proportional limit, which allows us to focus on concentrated feature norms.
We believe that a similar device may enhance theories of related architectures, including self-supervised learning based on covariance regularization such as Barlow Twins and VICReg.

This work is limited in two ways.
First, we do not answer what non-contrastive dynamics learns.
While downstream performances of contrastive learning have been theoretically analyzed through the lens of the learning theoretic viewpoint \citep{Saunshi2019ICML,Nozawa2021NeurIPS,Wang2022ICLR,Bao2022ICML} and the smoothness of loss landscapes \citep{Liu2023ICML},
we have far less understanding of non-contrastive learning for the time being.
Second, our analysis hinges on dynamics at a fixed time $t$, and we do not solve \eqref{equation:dynamics_matrix_expected}, which is challenging but interesting from the perspective of dynamics.

\section*{Acknowledgments}
HB appreciates Yoshihiro Nagano for providing numerous insights at the initial phase of this research.
A part of the experiments of this research was conducted using Wisteria/Aquarius in the Information Technology Center, The University of Tokyo.

\bibliographystyle{alpha}
\bibliography{reference}

\newpage
\appendix
\onecolumn

\begin{center}
  \fontsize{14.4pt}{20pt}
  \selectfont
  \rule[5pt]{\columnwidth}{1pt} \\
  \textsc{
    Appendix
  } \\
  \rule[0pt]{\columnwidth}{1pt}
\end{center}

\section{Technical lemmas}
\label{section:technical_lemmas}

\subsection{Sub-Weibull distributions}
In this subsection, we give a brief introduction to \emph{sub-Weibull distributions} \citep{Hao2019NeurIPS,Vladimirova2020},
which is a generalization of seminal sub-Gaussian and sub-exponential random variables.
First, we define sub-Weibull distributions.
\begin{definition}[\cite{Hao2019NeurIPS}]
  \label{definition:sub_weibull}
  For $\beta > 0$, we define $X$ as a sub-Weibull random variable with the $\psi_\beta$-norm if it entails a bounded $\psi_\beta$-norm, defined as follows:
  \begin{equation*}
    \norm{X}_{\psi_\beta} \defeq \inf\setcomp{C \in (0, \infty)}{\E[\exp(\abs{X}^\beta / C^\beta)] \le 2}.
  \end{equation*}
\end{definition}
We occasionally call \emph{$\beta$-sub-Weibull} to specify the corresponding $\psi_\beta$-norm explicitly.
Obviously, $\beta = 2$ and $\beta = 1$ recover sub-Gaussian and sub-exponential distributions, respectively.
Among equivalent definitions of sub-Weibull distributions, we often use the following conditions.
\begin{proposition}[\cite{Vladimirova2020}]
  \label{proposition:sub_weibull}
  Let $X$ be a sub-Weibull random variable. Then, the following conditions are equivalent:
  \begin{enumerate}
    \item The tails of $X$ satisfy
    \begin{equation*}
      \exists K_1 > 0 \quad \text{such that} \quad \Prob{|X| \ge \epsilon} \le 2\exp\left(-(\epsilon/K_1)^\beta\right) \quad \text{for all } \epsilon \ge 0.
    \end{equation*}

    \item The moments of $X$ satisfy
    \begin{equation*}
      \exists K_2 > 0 \quad \text{such that} \quad \norm{X}_{L^p} \defeq \{\E|X|^p\}^{1/p} \le K_2p^{1/\beta} \quad \text{for all } p \ge 1.
    \end{equation*}

    \item The moment-generating function (MGF) of $|X|^\beta$ is bounded at some point, namely,
    \begin{equation*}
      \exists K_3 > 0 \quad \text{such that} \quad \E\exp\left((|X|/K_3)^\beta\right) \le 2.
    \end{equation*}
  \end{enumerate}
  The parameters $K_1$, $K_2$, and $K_3$ differ from each other by at most an absolute constant factor.
\end{proposition}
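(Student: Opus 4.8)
The plan is to establish the three equivalences by the cyclic chain $(1)\Rightarrow(2)\Rightarrow(3)\Rightarrow(1)$, tracking at each step how the parameter is rescaled so that the comparability of $K_1,K_2,K_3$ falls out automatically. I would begin with the cheapest implication, $(3)\Rightarrow(1)$: it is a one-line Chernoff bound, since $\Prob{|X|\ge\epsilon}=\Prob{\exp((|X|/K_3)^\beta)\ge\exp((\epsilon/K_3)^\beta)}$, and Markov's inequality together with (3) gives $\Prob{|X|\ge\epsilon}\le 2\exp(-(\epsilon/K_3)^\beta)$, so one may take $K_1=K_3$; in particular the smallest admissible $K_3$ is exactly $\norm{X}_{\psi_\beta}$ of \cref{definition:sub_weibull}.

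For $(1)\Rightarrow(2)$ I would use the layer-cake identity $\E|X|^p=\int_0^\infty p\,t^{p-1}\Prob{|X|\ge t}\,\rd t$, insert the tail bound of (1), and change variables $u=(t/K_1)^\beta$ to recognize a Gamma integral, yielding $\E|X|^p\le 2K_1^p\,\Gamma(p/\beta+1)$. Taking $p$-th roots and applying the elementary Stirling-type bound $\Gamma(s+1)^{1/s}\le C(1+s)$ with $s=p/\beta$ (the regime $s\le 1$ is handled directly since $\Gamma$ is bounded there) gives $\norm{X}_{L^p}\le K_2\,p^{1/\beta}$ for all $p\ge1$, with $K_2$ a constant multiple of $K_1$.

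For $(2)\Rightarrow(3)$ I would Taylor-expand the moment-generating function, $\E\exp((|X|/K_3)^\beta)=\sum_{k\ge 0}\E|X|^{k\beta}/(k!\,K_3^{k\beta})$ by monotone convergence, and bound the moments via (2). One small point is that (2) only controls $\norm{X}_{L^p}$ for $p\ge1$, so for the finitely many indices $k<1/\beta$ (where $k\beta<1$) I would instead use the monotonicity $\norm{X}_{L^{k\beta}}\le\norm{X}_{L^1}\le K_2$ valid on a probability space; in all cases this gives $\E|X|^{k\beta}\le C^k K_2^{k\beta}k^k$ for a constant $C$ depending only on $\beta$. Combined with $k!\ge(k/e)^k$, the series is dominated by the geometric series $\sum_{k\ge1}(CeK_2^\beta/K_3^\beta)^k$, so choosing $K_3^\beta\ge 2CeK_2^\beta$ forces the whole MGF to be at most $2$, which is precisely condition (3) with $K_3$ a constant multiple of $K_2$; chaining the three rescalings proves the last sentence.

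I do not expect a genuine obstacle: the argument is standard real analysis. The one place that needs care is the constant bookkeeping in $(2)\Rightarrow(3)$ — handling the low-order exponents $k\beta<1$ and verifying that the geometric series sums to at most $2$ rather than merely converging. A further subtlety worth flagging is that for $\beta<1$ the functional $\norm{\cdot}_{\psi_\beta}$ is only a quasi-norm, though since the statement concerns a single random variable this never intervenes; and if one insists that the comparability constants be independent of $\beta$ rather than merely $\beta$-dependent, the Stirling estimate in $(1)\Rightarrow(2)$ and the series bound in $(2)\Rightarrow(3)$ must be quantified slightly more carefully, which is routine.
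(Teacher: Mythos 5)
The paper does not actually prove this proposition: immediately after \cref{proposition:sub_weibull_concentration} it writes ``For the proofs of these propositions, please refer to the corresponding references,'' so there is no paper proof to compare against. Your cyclic chain $(3)\Rightarrow(1)$ via Markov/Chernoff, $(1)\Rightarrow(2)$ via the layer-cake formula and a Gamma-integral, and $(2)\Rightarrow(3)$ via term-by-term Taylor expansion bounded by a geometric series is exactly the standard argument (it is the sub-Weibull generalization of Vershynin's Propositions 2.5.2 and 2.7.1, and is also the route taken in the Vladimirova et~al.\ reference). The details check out: the change of variables gives $\E|X|^p\le 2K_1^p\Gamma(p/\beta+1)$, the Stirling-type bound $\Gamma(s+1)^{1/s}\le C(1+s)$ is valid on $(0,\infty)$ with the small-$s$ regime handled by $\Gamma(s+1)\le 1$ for $s\in(0,1]$, and in $(2)\Rightarrow(3)$ your treatment of the low-order indices $k\beta<1$ via the $L^p$-monotonicity on a probability space, together with $k!\ge(k/e)^k$ and the choice $K_3^\beta\ge 2\beta e K_2^\beta$ giving a geometric ratio $\le 1/2$, correctly yields $\E\exp((|X|/K_3)^\beta)\le 2$. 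Your caveats are apt: the comparability constants in $(1)\Rightarrow(2)$ and $(2)\Rightarrow(3)$ genuinely depend on $\beta$ (consistent with the $\beta$-dependent constant in \cref{proposition:sub_weibull_concentration}), so ``absolute'' in the statement should be read as ``for fixed $\beta$''; and the quasi-norm issue for $\beta<1$ is indeed irrelevant since only one random variable is involved.
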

We are interested in sub-Weibull distributions because they admit a nice closure property, as shown below.
\begin{proposition}[\cite{Vladimirova2020}]
  \label{proposition:sub_weibull_closure}
  Let $X$ and $Y$ be $\beta$-sub-Weibull random variables.
  Then, $XY$ is $(\beta/2)$-sub-Weibull with $\norm{XY}_{\psi_{\beta/2}} \le \norm{X}_{\psi_\beta} \norm{Y}_{\psi_\beta}$.
  In addition, $X + Y$ is $\beta$-sub-Weibull with $\norm{X + Y}_{\psi_\beta} \le \norm{X}_{\psi_\beta} + \norm{Y}_{\psi_\beta}$.
\end{proposition}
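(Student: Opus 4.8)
\textbf{Proof plan for Proposition~\ref{proposition:sub_weibull_closure}.}
The plan is to establish both closure claims via the tail/moment/MGF characterisations of \cref{proposition:sub_weibull}, choosing whichever characterisation makes the bookkeeping cleanest for each part. For the product claim, I would work with the moment bound (condition (2)): if $X$ and $Y$ are $\beta$-sub-Weibull, then for every $p \ge 1$ the Cauchy--Schwarz inequality gives $\E|XY|^p \le \sqrt{\E|X|^{2p}}\sqrt{\E|Y|^{2p}}$, hence $\norm{XY}_{L^p} \le \norm{X}_{L^{2p}}\norm{Y}_{L^{2p}}$. Feeding in $\norm{X}_{L^{2p}} \le K_2 (2p)^{1/\beta}$ and the analogous bound for $Y$ yields $\norm{XY}_{L^p} \lesssim p^{2/\beta} = p^{1/(\beta/2)}$, which is exactly the moment condition for a $(\beta/2)$-sub-Weibull variable. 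To get the sharp constant $\norm{XY}_{\psi_{\beta/2}} \le \norm{X}_{\psi_\beta}\norm{Y}_{\psi_\beta}$ rather than merely a bound up to an absolute factor, I would instead argue directly from \cref{definition:sub_weibull}: with $C_X = \norm{X}_{\psi_\beta}$ and $C_Y = \norm{Y}_{\psi_\beta}$, apply Young's inequality $|XY|^{\beta/2} = |X|^{\beta/2}|Y|^{\beta/2} \le \tfrac12|X|^\beta + \tfrac12|Y|^\beta$ (after normalising by $C_X, C_Y$), so that $\E\exp(|XY|^{\beta/2}/(C_X C_Y)^{\beta/2}) \le \E\exp(\tfrac12 |X|^\beta/C_X^\beta + \tfrac12 |Y|^\beta/C_Y^\beta)$, and bound this last expectation using Cauchy--Schwarz by $\sqrt{\E\exp(|X|^\beta/C_X^\beta)}\sqrt{\E\exp(|Y|^\beta/C_Y^\beta)} \le \sqrt{2}\cdot\sqrt{2} = 2$, which gives the claim with the stated constant.

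For the sum claim, the cleanest route is again the $\psi_\beta$-norm definition together with convexity. Set $C = C_X + C_Y$ with $C_X, C_Y$ as above, write $\lambda = C_X/C$ and $1-\lambda = C_Y/C$, and use the triangle inequality $|X+Y| \le |X| + |Y| = \lambda \cdot |X|/\lambda + (1-\lambda)\cdot|Y|/(1-\lambda)$. When $\beta \ge 1$ the map $u \mapsto u^\beta$ is convex, so $|X+Y|^\beta \le \lambda (|X|/\lambda)^\beta + (1-\lambda)(|Y|/(1-\lambda))^\beta = |X|^\beta/\lambda^{\beta-1} + |Y|^\beta/(1-\lambda)^{\beta-1}$; dividing by $C^\beta$ and using $\lambda C = C_X$ gives $|X+Y|^\beta/C^\beta \le (|X|^\beta/C_X^\beta)\lambda + (|Y|^\beta/C_Y^\beta)(1-\lambda)$, and then $\E\exp(|X+Y|^\beta/C^\beta) \le \E[\exp(|X|^\beta/C_X^\beta)^\lambda \exp(|Y|^\beta/C_Y^\beta)^{1-\lambda}]$. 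Applying H\"older's inequality with exponents $1/\lambda$ and $1/(1-\lambda)$ bounds this by $\{\E\exp(|X|^\beta/C_X^\beta)\}^\lambda\{\E\exp(|Y|^\beta/C_Y^\beta)\}^{1-\lambda} \le 2^\lambda 2^{1-\lambda} = 2$, yielding $\norm{X+Y}_{\psi_\beta} \le C_X + C_Y$.

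The one genuine subtlety---and the step I expect to need the most care---is the regime $\beta < 1$, where $u \mapsto u^\beta$ is concave rather than convex, so the convexity argument above fails and one only has the weaker subadditivity $(a+b)^\beta \le a^\beta + b^\beta$. In that case I would fall back on the moment characterisation: from $\norm{X}_{L^p} \le K p^{1/\beta}$ and $\norm{Y}_{L^p} \le K p^{1/\beta}$, Minkowski's inequality gives $\norm{X+Y}_{L^p} \le \norm{X}_{L^p} + \norm{Y}_{L^p} \le 2K p^{1/\beta}$, which is again the moment condition for $\beta$-sub-Weibull; the constants then match up to an absolute factor as \cref{proposition:sub_weibull} permits. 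Since \cite{Vladimirova2020} state the proposition in this form, I would either restrict the clean-constant argument to $\beta \ge 1$ (the case relevant for the paper's applications, where $\beta = 2$ or is produced from products of such) or simply cite the constant-up-to-absolute-factor version uniformly. No deep idea is needed beyond Young/H\"older/Minkowski; the only thing to keep straight is which characterisation of sub-Weibull is in force and whether $\beta \ge 1$.
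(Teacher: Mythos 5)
The paper does not prove this proposition; it is cited from \cite{Vladimirova2020} and the appendix explicitly says ``For the proofs of these propositions, please refer to the corresponding references.'' So there is no in-paper argument to compare against, and your proof stands on its own.

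Your plan is essentially correct and well-calibrated. The product argument (normalize by $C_X, C_Y$, apply AM--GM/Young to $|X|^{\beta/2}|Y|^{\beta/2}$, then Cauchy--Schwarz on the MGF) gives the exact constant for every $\beta>0$, modulo the standard remark that the infimum defining $\norm{\cdot}_{\psi_\beta}$ is attained so that $\E\exp(|X|^\beta/\norm{X}_{\psi_\beta}^\beta)\le 2$ can be invoked. The sum argument (split $C=C_X+C_Y$ via $\lambda=C_X/C$, convexity of $u\mapsto u^\beta$, then H\"older with exponents $1/\lambda, 1/(1-\lambda)$) is the standard Orlicz-norm triangle inequality proof and works exactly when $\beta\ge 1$. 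Your flag on $\beta<1$ is the genuinely delicate point: there $\psi_\beta(u)=e^{u^\beta}-1$ is not convex near the origin, $\norm{\cdot}_{\psi_\beta}$ is only a quasi-norm, and the clean triangle inequality cannot be obtained by the convexity route; the fall-back to the $L^p$-moment characterization (Minkowski gives exact additivity of $\norm{\cdot}_{L^p}$, and converting back to the $\psi_\beta$-norm costs a $\beta$-dependent absolute factor that does not depend on the number of summands) is the right fix and is all the ambient paper actually needs, since every use is wrapped in $\Ocal(\cdot)$.

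One small correction to your last sentence: the paper does use the additive closure below $\beta=1$. In the proof of \cref{lemma:frobenius_norm_bound}, starting from $\beta=2$ the argument descends to $\beta/2=1$ for $\inpr{\Gbf_i}{\Gbf_j}$ and then to $\beta/4=1/2$ for $\sum_{i,j}\inpr{\Gbf_i}{\Gbf_j}^2$, where the sum-closure is invoked at $\beta=1/2<1$. So you cannot restrict the statement to $\beta\ge 1$ for this paper; you need the absolute-factor version, which (fortunately) is exactly what your moment-based fallback delivers, and the extra constant is harmless inside the $\stochO(\cdot)$ bookkeeping of \cref{lemma:frobenius_norm_bound}.
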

Note that \cref{proposition:sub_weibull_closure} does not require the independence of two random variables $X$ and $Y$.
Lastly, we show a corresponding concentration inequality for the sum of independent sub-Weibull random variables,
which is a generalization of Hoeffding's and Bernstein's inequalities for sub-Gaussian and sub-exponential random variables, respectively.
\begin{proposition}[\cite{Hao2019NeurIPS}]
  \label{proposition:sub_weibull_concentration}
  Let $X_1, \dots, X_N$ be independent $\beta$-sub-Weibull random variables with $\norm{X_i}_{\psi_\beta} \le K$ for each $i \in [N]$.
  Then, there exists an absolute constant $C > 0$ only depending on $\beta$ such that for any $\delta \in (0, e^{-2})$,
  \begin{equation*}
    \abs{\sum_{i=1}^NX_i - \E\left[\sum_{i=1}^NX_i\right]} \le CK\left(\sqrt{N\log\frac{1}{\delta}} + \left(\log\frac{1}{\delta}\right)^{1/\beta}\right),
  \end{equation*}
  with probability at least $1-\delta$.
\end{proposition}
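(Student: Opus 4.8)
The plan is to run the classical \emph{moment method}: derive a sharp bound on the $p$-th moment of the centered sum $S \defeq \sum_{i=1}^N (X_i - \E X_i)$, then convert it into a tail bound via Markov's inequality with the exponent chosen as $p \asymp \log(1/\delta)$. As a preliminary reduction I would pass to mean-zero summands: this costs only an absolute constant in the $\psi_\beta$-norm, since a deterministic constant $c$ has $\norm{c}_{\psi_\beta} = |c|/(\log 2)^{1/\beta}$ and $|\E X_i| \le \E|X_i| = \norm{X_i}_{L^1} \lesssim \norm{X_i}_{\psi_\beta}$ by \cref{proposition:sub_weibull}(2), so by sub-additivity of $\norm{\cdot}_{\psi_\beta}$ (\cref{proposition:sub_weibull_closure}) the centered variables are again $\beta$-sub-Weibull with norm at most $C_0 K$. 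I keep writing $K$ for this constant and assume $\E X_i = 0$ henceforth.

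The core step is a sharp control of $\norm{S}_{L^p}$ for real $p \ge 2$. By \cref{proposition:sub_weibull}(2) the summands obey the polynomial moment growth $\norm{X_i}_{L^q} \le C_1 K q^{1/\beta}$ for all $q \ge 1$, so in particular $\norm{X_i}_{L^2} \le \sqrt{2}\, C_1 K$. Feeding both the $L^2$- and the $L^p$-control into an \emph{optimal} moment inequality for sums of independent mean-zero random variables (Lata\l{}a's inequality) gives
\begin{equation*}
  \norm{S}_{L^p} \le C_\beta\Bigl( K\sqrt{pN} \;+\; N^{1/p}\,K\,p^{1/\beta} \Bigr), \qquad p \ge 2,
\end{equation*}
the first term being the Gaussian ($L^2$) part and the second the heavy-tail part arising from $\bigl(\sum_i \norm{X_i}_{L^p}^p\bigr)^{1/p} = N^{1/p}\max_i\norm{X_i}_{L^p}$. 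Choosing $p \defeq \lceil \log(1/\delta)\rceil$, which is an integer $\ge 3$ because $\delta < e^{-2}$, one has $p \asymp \log(1/\delta)$ and $N^{1/p} \le e$ once $p \ge \log N$; Markov's inequality applied to $|S|^p$ then yields
\begin{equation*}
  \Prob{\,|S| \ge e\,\norm{S}_{L^p}\,} \le e^{-p} \le \delta,
\end{equation*}
and substituting the moment bound gives $|S| \le C_\beta' K\bigl(\sqrt{N\log(1/\delta)} + (\log(1/\delta))^{1/\beta}\bigr)$ with probability at least $1-\delta$, as claimed.

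I expect the main obstacle to be the sharp moment inequality for $\norm{S}_{L^p}$. A plain Rosenthal inequality is too lossy: its optimal constant on the heavy-tail term is of order $p/\log p$, which in the linear regime would leave $(\log(1/\delta))^{1/\beta}\cdot\log(1/\delta)/\log\log(1/\delta)$ instead of the claimed $(\log(1/\delta))^{1/\beta}$; eliminating this slack is exactly what Lata\l{}a's exact two-sided estimate of $\norm{\sum_i X_i}_{L^p}$ achieves (equivalently, one truncates the summands at level $\asymp K p^{1/\beta}$ \emph{inside} the $L^p$ computation and estimates the bounded part with a Bennett-type bound). A secondary subtlety is the factor $N^{1/p}$: it is $O(1)$ only when $p \gtrsim \log N$, i.e.\ when $\log(1/\delta) \gtrsim \log N$, and otherwise a residual $\log N$ must be absorbed into the constant --- one reason the statement is quoted with the restriction $\delta \in (0, e^{-2})$. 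A moment-free alternative is the Bernstein--Orlicz machinery of Kuchibhotla--Chakrabortty: introduce a Bernstein-type Orlicz norm engineered so that the two-regime concentration bound is immediate from its definition, and then all the work shifts to proving that norm equivalent to $\norm{\cdot}_{\psi_\beta}$ up to absolute constants.
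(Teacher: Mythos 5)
The paper does not prove this proposition; it is quoted from Hao et al.\ (2019), and the text explicitly defers all proofs in this subsection to the cited references. So there is no internal proof to compare against, and what follows evaluates your argument on its own.

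Your moment-method plan is a legitimate and standard route to a mixed Gaussian-plus-heavy-tail concentration bound. The centering reduction is fine (noting only that for $\beta<1$ --- which the paper does use downstream, with effective $\beta\in\{1/4,1/2\}$ in \cref{lemma:frobenius_norm_init,lemma:l2_norm_init} --- the $\psi_\beta$-functional is a quasi-norm and the triangle inequality carries an extra $\beta$-dependent factor, which is harmless). The Markov step with $p=\lceil\log(1/\delta)\rceil$ is also fine, and you correctly diagnose that the whole proof rests on an $L^p$ estimate of the shape $\norm{S}_{L^p}\lesssim_\beta K\sqrt{pN}+KN^{1/p}p^{1/\beta}$ \emph{without} the $p/\log p$ loss that plain Rosenthal would introduce; that sharpness is exactly what Lata\l{}a's two-sided moment estimates (or the Kuchibhotla--Chakrabortty Bernstein--Orlicz route you mention) provide. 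The one place you misdiagnose the difficulty is in disposing of the $N^{1/p}$ factor. You write that when $p\lesssim\log N$ ``a residual $\log N$ must be absorbed into the constant,'' but the constant must depend only on $\beta$, not on $N$, so that cannot literally be the fix. The correct observation is that in this regime the Gaussian term already dominates: for $2\le p\le\log N$ one has
\begin{equation*}
  N^{1/p}\,p^{1/\beta}\ \le\ C_\beta\,\sqrt{pN},
\end{equation*}
because $(1/\beta-1/2)\log p-(1/2-1/p)\log N\le(1/\beta-1/2)\log p-p/2+1$, whose supremum over $p\ge2$ is finite and depends on $\beta$ alone. Combined with $N^{1/p}\le e$ when $p\ge\log N$, this handles both regimes with a $\beta$-only constant. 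Also, the restriction $\delta\in(0,e^{-2})$ is not there to tame $N^{1/p}$; it simply forces $p\ge3$ so the Markov step is clean. With these two small repairs your sketch closes, and the only remaining heavy lifting is indeed the sharp Rosenthal-type $L^p$ bound, exactly as you identify.
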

For the proofs of these propositions, please refer to the corresponding references.

We additionally provide technical lemmas for random matrices whose element is sub-Weibull.
\begin{lemma}
  \label{lemma:frobenius_norm_bound}
  Let $\Gbf \in \Rbb^{h \times d}$ be a random matrix with each element being $\beta$-sub-Weibull such that $\norm{G_{ij}}_{\psi_\beta} = \Ocal(K(d,h))$ for some $\beta > 0$ and any $(i, j) \in [h] \times [d]$,
  where $K(d,h)$ may depend on $d$ and $h$.
  Then, $\frac{1}{d^2h^2}\norm{\Gbf^\top\Gbf}_\frob^2 = \stochO(K(d,h)/h)$.
\end{lemma}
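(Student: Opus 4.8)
The plan is to treat $\norm{\Gbf^\top\Gbf}_\frob^2$ as a single sub-Weibull random variable, control its $\psi$-norm by repeated use of the closure property in \cref{proposition:sub_weibull_closure}, and then convert a moment bound (\cref{proposition:sub_weibull}) into the claimed stochastic order via Markov's inequality.

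Concretely, I would expand $\norm{\Gbf^\top\Gbf}_\frob^2 = \sum_{k,l=1}^{d}Y_{kl}^2$ with $Y_{kl} \defeq \sum_{i=1}^{h}G_{ik}G_{il}$, and chain the closure rules. By \cref{proposition:sub_weibull_closure}, each product $G_{ik}G_{il}$ is $(\beta/2)$-sub-Weibull with $\psi_{\beta/2}$-norm at most $\norm{G_{ik}}_{\psi_\beta}\norm{G_{il}}_{\psi_\beta} = \Ocal(K(d,h)^2)$; since the same proposition also gives sub-additivity of $\psi$-norms \emph{without} any independence assumption, $Y_{kl}$ is $(\beta/2)$-sub-Weibull with $\psi_{\beta/2}$-norm $\Ocal(h\,K(d,h)^2)$; squaring once more makes $Y_{kl}^2$ a $(\beta/4)$-sub-Weibull variable with $\psi_{\beta/4}$-norm $\Ocal(h^2K(d,h)^4)$; and summing over the $d^2$ index pairs, $\norm{\Gbf^\top\Gbf}_\frob^2$ is $(\beta/4)$-sub-Weibull with $\psi_{\beta/4}$-norm $\Ocal(d^2h^2K(d,h)^4)$. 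Therefore $\tfrac{1}{d^2h^2}\norm{\Gbf^\top\Gbf}_\frob^2$ has $\psi_{\beta/4}$-norm $\Ocal(K(d,h)^4)$, hence expectation $\Ocal(K(d,h)^4)$ by the second item of \cref{proposition:sub_weibull} (with $p=1$), and Markov's inequality yields $\tfrac{1}{d^2h^2}\norm{\Gbf^\top\Gbf}_\frob^2 = \stochO(K(d,h)^4)$.

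To arrive at the displayed rate $\stochO(K(d,h)/h)$, I would then invoke the regime in which this lemma is used: under the proportional limit (\cref{assumption:limit}) and the entry scale of \cref{assumption:initialization} one has $K(d,h) = \Theta(h^{-1/2})$, so $hK(d,h)^3 = \Ocal(1)$ and thus $K(d,h)^4 = \Ocal(K(d,h)/h)$, i.e. the crude bound above already implies the claim. Alternatively, if one wants the sharper rate $\stochO(K(d,h)^4/h)$ directly, compute $\E\norm{\Gbf^\top\Gbf}_\frob^2 = \sum_{k,l}\E Y_{kl}^2$ and separate the $d$ diagonal terms (where $\E Y_{kk}^2 = \Ocal(h^2K(d,h)^4)$) from the off-diagonal ones: when the rows of $\Gbf$ are independent with centered entries, $Y_{kl}$ for $k\neq l$ is centered and $\E Y_{kl}^2 = \sum_i \E[G_{ik}^2G_{il}^2] = \Ocal(h\,K(d,h)^4)$ by the fourth-moment bound from the second item of \cref{proposition:sub_weibull}, so the total is $\Ocal(d^2hK(d,h)^4 + dh^2K(d,h)^4) = \Ocal(d^2hK(d,h)^4)$ under $d \asymp h$, and Markov closes the argument.

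The main obstacle is exactly this rate/exponent bookkeeping. The self-contained closure argument only delivers $\stochO(K(d,h)^4)$ — a bound with a different-looking power of $K$ and no $1/h$ saving — and matching it to the stated $\stochO(K(d,h)/h)$ needs either the scaling $hK(d,h)^3 = \Ocal(1)$ that holds in the operating regime, or the sharper second-moment estimate, which in turn uses the independence of the rows of $\Gbf$ and the centering of its entries (properties satisfied under the Gaussian initialization but not made explicit in the abstract statement). A secondary, purely routine point is that $\beta/4$ stays positive for every $\beta > 0$, so the $\psi_{\beta/4}$ moment bounds and the Markov step remain applicable throughout.
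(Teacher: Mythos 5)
Your decomposition into $\norm{\Gbf^\top\Gbf}_\frob^2 = \sum_{k,l}Y_{kl}^2$ and the chain through \cref{proposition:sub_weibull_closure} are exactly what the paper does, and the crucial point you have correctly identified is that honest exponent bookkeeping in that chain yields $\stochO(K(d,h)^4)$, not the stated $\stochO(K(d,h)/h)$. The paper's own proof slips in two places: it asserts $\norm{G_{ik}G_{jk}}_{\psi_{\beta/2}} = \Ocal(K)$, whereas \cref{proposition:sub_weibull_closure} gives $\Ocal(K^2)$; and it asserts $\norm{\inpr{\Gbf_i}{\Gbf_j}^2}_{\psi_{\beta/4}} = \Ocal(hK)$, whereas squaring a variable with $\psi_{\beta/2}$-norm $\Ocal(hK^2)$ gives $\Ocal(h^2K^4)$. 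Tracing through correctly gives $\norm{\sum_{i,j}\inpr{\Gbf_i}{\Gbf_j}^2}_{\psi_{\beta/4}} = \Ocal(d^2h^2K^4)$ and hence your $\stochO(K^4)$. Moreover, the stated rate is not recoverable from the hypotheses as written: take $G_{ij} \equiv K$ (a constant is $\beta$-sub-Weibull for every $\beta$); then $\Gbf^\top\Gbf = hK^2\onebf\onebf^\top$ and $\tfrac{1}{d^2h^2}\norm{\Gbf^\top\Gbf}_\frob^2 = K^4$, which exceeds $K/h$ whenever $hK^3 > 1$. So the lemma as stated over-claims, and your $\stochO(K^4)$ is the correct conclusion of the closure argument.

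Your two proposed repairs are aimed in the right direction but need sharpening. Remedy (a), the hypothesis $hK^3 = \Ocal(1)$, does not match how the lemma is actually invoked: in \cref{lemma:frobenius_norm_init} it is applied to $\Gbf = \sqrt{d}\,\Phibf(0)$ with $K = \Ocal(1)$ and to $\Gbf = \sqrt{dh}\,\Wbf\Phibf(0)$ with $K$ of order $h$, so $hK^3$ is never $\Ocal(1)$ there. Remedy (b) is the right mechanism: with mean-zero entries and independent rows, the off-diagonal sums $\inpr{\Gbf_i}{\Gbf_j}$ gain a $\sqrt{h}$ cancellation and $\E\norm{\Gbf^\top\Gbf}_\frob^2 \le C(d^2h + dh^2)K^4$, which under the proportional limit (\cref{assumption:limit}) gives $\stochO(K^4/h)$. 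Note, however, that this is $K^4/h$, not $K/h$; the two agree only for $K = \Ocal(1)$ (the $\Phibf(0)$ application), while the $\Wbf\Phibf(0)$ application is saved only because there is an extra $1/h^2$ of slack further downstream in \cref{lemma:frobenius_norm_init}. In short: you have uncovered a genuine exponent error in the paper's proof, and both the statement and the proof of this lemma need to be amended (to $\stochO(K^4/h)$ together with an explicit independence/centering hypothesis, or to $\stochO(K^4)$ if only the closure argument is to be used).
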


\begin{proof}[Proof of \cref{lemma:frobenius_norm_bound}]
  Let $\Gbf_i \in \Rbb^h$ denote the $i$-th column vector of the matrix $\Gbf$.
  We have the decomposition $\norm{\Gbf^\top\Gbf}_\frob^2 = \sum_{i,j=1}^d\inpr{\Gbf_i}{\Gbf_j}^2$.
  Let us focus on each $\inpr{\Gbf_i}{\Gbf_j}$ for fixed $i$ and $j$ first.
  We can decompose into $\inpr{\Gbf_i}{\Gbf_j} = \sum_{k=1}^hG_{ik}G_{jk}$,
  which is the sum of $(\beta/2)$-sub-Weibull random variable $G_{ik}G_{jk}$ with $\norm{G_{ik}G_{jk}}_{\psi_{\beta/2}} = \Ocal(K(d,h))$ (cf. \cref{proposition:sub_weibull_closure}).
  By using the closure property under addition (\cref{proposition:sub_weibull_closure}),
  the sum $\inpr{\Gbf_i}{\Gbf_j}$ is $(\beta/2)$-sub-Weibull again,
  with $\norm{\inpr{\Gbf_i}{\Gbf_j}}_{\psi_{\beta/2}} = \Ocal(hK(d,h))$.

  Now, we move back to evaluation of $\norm{\Gbf^\top\Gbf}_\frob^2 = \sum_{i,j=1}^d\inpr{\Gbf_i}{\Gbf_j}^2$.
  By using the closure property under multiplication (\cref{proposition:sub_weibull_closure}),
  $\inpr{\Gbf_i}{\Gbf_j}^2$ is $(\beta/4)$-sub-Weibull with $\norm{\inpr{\Gbf_i}{\Gbf_j}^2}_{\psi_{\beta/4}} = \Ocal(hK(d,h))$.
  Then, the closure property under addition implies that $\sum_{i,j=1}^d\inpr{\Gbf_i}{\Gbf_j}^2$ is $(\beta/4)$-sub-Weibull,
  with $\norm{\sum_{i,j=1}^d\inpr{\Gbf_i}{\Gbf_j}^2}_{\psi_{\beta/4}} = \Ocal(d^2hK(d,h))$.
  Hence, by using the sub-Weibull tails in \cref{proposition:sub_weibull},
  \begin{equation*}
    \norm{\Gbf^\top\Gbf}_\frob^2
    = \sum_{i,j=1}^d\inpr{\Gbf_i}{\Gbf_j}^2
    = \stochO(d^2hK(d,h)),
  \end{equation*}
  from which we deduce that $\frac{1}{d^2h^2}\norm{\Gbf^\top\Gbf}_\frob^2 = \stochO(K(d,h)/h)$.
\end{proof}

\begin{lemma}
  \label{lemma:spectral_norm_bound}
  Let $\Gbf \in \Rbb^{h \times d}$ be a random matrix with each element being $\beta$-sub-Weibull such that $\norm{G_{ij}}_{\psi_\beta} = \Ocal(K(d,h))$ for some $\beta > 0$ and any $i,j \in [d]$,
  where $K(d,h)$ may depend on $d$ and $h$.
  Then, $\norm{\Gbf} = \stochO((d^{1/\beta}+h^{1/\beta})K(d,h))$.
\end{lemma}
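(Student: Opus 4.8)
The plan is to bound the spectral norm of $\Gbf$ via an $\epsilon$-net argument on the pair of spheres $\Sbb^{d-1}$ and $\Sbb^{h-1}$, reducing the operator norm to a supremum of scalar bilinear forms, each of which is a sub-Weibull random variable whose tail we control with \cref{proposition:sub_weibull_concentration}. Recall that $\norm{\Gbf} = \sup_{\ubf \in \Sbb^{h-1},\,\vbf \in \Sbb^{d-1}} \inpr{\ubf}{\Gbf\vbf}$. First I would fix $\frac14$-nets $\Ncal_h \subset \Sbb^{h-1}$ and $\Ncal_d \subset \Sbb^{d-1}$; by the standard volumetric bound one can take $|\Ncal_h| \le 9^h$ and $|\Ncal_d| \le 9^d$. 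The usual net-discretization inequality gives $\norm{\Gbf} \le 2 \max_{\ubf \in \Ncal_h,\, \vbf \in \Ncal_d} \inpr{\ubf}{\Gbf\vbf}$, so it suffices to control the maximum over the finite net.

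Next I would bound a single bilinear form. For fixed unit vectors $\ubf, \vbf$, write $\inpr{\ubf}{\Gbf\vbf} = \sum_{i=1}^h \sum_{j=1}^d u_i v_j G_{ij}$. Each summand $u_i v_j G_{ij}$ is $\beta$-sub-Weibull with $\psi_\beta$-norm $\Ocal(|u_i v_j| K(d,h))$. Since the $G_{ij}$ are independent (by \cref{assumption:initialization} the entries are i.i.d.\ Gaussian, hence independent), I can apply \cref{proposition:sub_weibull_concentration} with $N = hd$ terms after noting $\sum_{i,j} |u_i v_j| \le \sqrt{hd}\,\norm{\ubf}_2\norm{\vbf}_2 = \sqrt{hd}$ — actually the cleaner route is to note that a weighted sum of $\beta$-sub-Weibull variables with weights $u_i v_j$ is itself $\beta$-sub-Weibull; by \cref{proposition:sub_weibull_closure} (closure under addition) its $\psi_\beta$-norm is at most $\Ocal\bigl(K(d,h)\sum_{i,j}|u_i v_j|\bigr)$, but for concentration one wants the $\ell_2$ aggregation. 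So instead I would invoke \cref{proposition:sub_weibull_concentration} directly: with $K$ the uniform bound on the $\psi_\beta$-norms of the (rescaled) summands, the deviation of $\inpr{\ubf}{\Gbf\vbf}$ from its mean (which is $0$) is at most $CK(d,h)\bigl(\sqrt{hd\log(1/\delta)} + (\log(1/\delta))^{1/\beta}\bigr)$ with probability $1-\delta$. Here I must be slightly careful that \cref{proposition:sub_weibull_concentration} as stated assumes a common bound $K$ on each $\psi_\beta$-norm; since $|u_i v_j|\le 1$ this common bound is just $\Ocal(K(d,h))$, which suffices.

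Then I would take a union bound over the net: choosing $\log(1/\delta) \asymp h + d$ so that $|\Ncal_h||\Ncal_d| \cdot \delta \to 0$, the maximum over the net is $\Ocal\!\left(K(d,h)\bigl(\sqrt{hd(h+d)} + (h+d)^{1/\beta}\bigr)\right)$ with high probability. Since $\sqrt{hd(h+d)} \le (h+d)^{3/2}$ and, under the proportional limit $d/h \to \alpha$, $\sqrt{hd} \asymp h \asymp d$, the dominant term is $\Ocal(K(d,h)(h+d))$ when $\beta \ge 1$ — wait, this is worse than the claimed $(d^{1/\beta}+h^{1/\beta})K(d,h)$, so the naive net bound is too lossy. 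The fix, and what I expect to be the main obstacle, is that the crude union bound over $9^{h+d}$ points forces $\log(1/\delta) \asymp h+d$, which through the $\sqrt{N\log(1/\delta)}$ term yields $\sqrt{hd}\cdot\sqrt{h+d}$ rather than the sharper rate. To recover $(d^{1/\beta}+h^{1/\beta})K(d,h)$ one should not aggregate all $hd$ entries at once: instead bound $\norm{\Gbf} \le \norm{\Gbf}_{2\to\infty}\cdot$(something) is also lossy. The right approach is a chaining/Bernstein-type argument applied \emph{row-wise}: control $\max_i \norm{\Gbf_{i,:}}_2$ and $\max_j \norm{\Gbf_{:,j}}_2$ — each row is a sum of $d$ independent $\beta$-sub-Weibull entries so $\norm{\Gbf_{i,:}}_2 = \stochO(d^{1/2}K + d^{1/\beta}K)$ — and then combine via $\norm{\Gbf} \le \sqrt{\norm{\Gbf}_{1\to 2}\,\norm{\Gbf}_{2\to\infty}}$ or a more careful moment bound on $\tr\bigl((\Gbf^\top\Gbf)^p\bigr)$ à la the non-commutative Khintchine / matrix-Bernstein route, isolating the $(\log\frac1\delta)^{1/\beta}$ contribution per row so it enters as $d^{1/\beta}$ (resp.\ $h^{1/\beta}$ for columns) rather than $(h+d)^{1/\beta}$. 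I would therefore carry out the net argument but with the entrywise tail split into its sub-Gaussian part (handled by the standard covering bound, giving the $\sqrt{d}+\sqrt{h}$ scaling absorbed into the proportional limit) and its heavy sub-Weibull part (handled by a direct union bound over the $h$ rows and $d$ columns, giving $d^{1/\beta}+h^{1/\beta}$), which is exactly the decomposition reflected in the two-term bound $CK(\sqrt{N\log(1/\delta)} + (\log(1/\delta))^{1/\beta})$ of \cref{proposition:sub_weibull_concentration}.
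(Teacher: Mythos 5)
Your $\epsilon$-net setup matches the paper's exactly, and you correctly diagnose that plugging the bilinear form $\inpr{\Gbf\xbf}{\ybf}$ into \cref{proposition:sub_weibull_concentration} with $N = hd$ unweighted summands is too lossy: the union bound over $9^{d+h}$ net points forces $\log(1/\delta)\asymp d+h$, and the $\sqrt{N\log(1/\delta)}$ term then contributes $\sqrt{hd(h+d)}$, which overwhelms the target rate. The paper, however, does not repair this by a row/column split, chaining, or operator-norm interpolation; it takes a shorter route that you passed by. Instead of invoking a counting-based concentration inequality that ignores the decay of the weights $x_iy_j$, the paper bounds the Orlicz norm of the bilinear form \emph{directly}, exploiting that $\xbf$ and $\ybf$ are unit vectors. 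Since $\norm{G_{ij}x_iy_j}_{\psi_\beta}=\Ocal\bigl(K(d,h)\bigr)\abs{x_iy_j}$, an $\ell_2$-type aggregation of $\psi_\beta$-norms gives
\begin{equation*}
\norm{\inpr{\Gbf\xbf}{\ybf}}_{\psi_\beta}^2
\;\le\; \sum_{i,j}\norm{G_{ij}x_iy_j}_{\psi_\beta}^2
\;=\; \Ocal\bigl(K(d,h)^2\bigr)\Bigl(\sum_i x_i^2\Bigr)\Bigl(\sum_j y_j^2\Bigr)
\;=\; \Ocal\bigl(K(d,h)^2\bigr),
\end{equation*}
which is independent of $d$ and $h$. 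By \cref{proposition:sub_weibull}, each net point then has a pure tail $\exp\bigl(-(u/CK(d,h))^\beta\bigr)$, so choosing $u\asymp K(d,h)(d^{1/\beta}+h^{1/\beta})$ defeats the $9^{d+h}$ union bound outright, and the $\sqrt{hd}$ factor that derailed your calculation never appears. The ingredient you were missing is thus to absorb the $\ell_2$-normalization of the weight pattern into the Orlicz norm of the sum itself, rather than into the number of terms $N$ inside a Bernstein-type inequality: the former produces a dimension-free tail parameter and lets the naive covering argument close on its own, while your proposed row-wise/column-wise decomposition and the $\norm{\Gbf}\le\sqrt{\norm{\Gbf}_{1\to2}\norm{\Gbf}_{2\to\infty}}$ interpolation are both unnecessary and, as written, not clearly correct.

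One caution if you carry this out carefully: \cref{proposition:sub_weibull_closure} as stated only gives $\ell_1$-subadditivity of $\norm{\cdot}_{\psi_\beta}$ under addition; the $\ell_2$-aggregation used above is the sub-Gaussian-flavored version, which needs the entries $G_{ij}$ to be independent and mean-zero (or a generalized Bernstein bound when $\beta\ne 2$). That hypothesis is available at the points where the paper invokes this lemma, but it should be flagged explicitly rather than attributed to the generic closure property.
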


\begin{proof}[Proof of \cref{lemma:spectral_norm_bound}]
  The proof is akin to \citet[Theorem 4.4.5]{Vershynin2018}, which is a spectral norm deviation for sub-Gaussian random matrices.
  We leverage the \emph{$\epsilon$-net argument}:
  Using \citet[Corollary 4.2.13]{Vershynin2018}, we can find $\epsilon$-nets $\Mcal_d$ of $\Sbb^{d-1}$ with $\abs{\Mcal_d} \le 9^d$
  and $\Mcal_h$ of $\Sbb^{h-1}$ with $\abs{\Mcal_h} \le 9^h$,
  and $\norm{\Gbf} \le 2\max_{\xbf \in \Mcal_d, \ybf \in \Mcal_h}\inpr{\Gbf\xbf}{\ybf}$.
  Hence, it is sufficient to control the quadratic form $\inpr{\Gbf\xbf}{\ybf}$ for fixed $(\xbf, \ybf) \in \Mcal_d \times \Mcal_h$.

  The quadratic form $\inpr{\Gbf\xbf}{\ybf} = \sum_{i=1}^d\sum_{j=1}^hG_{ij}x_iy_j$ is the sum of $\beta$-sub-Weibull random variables.
  By the closure property (\cref{proposition:sub_weibull_closure}),
  \begin{equation*}
    \norm{\inpr{\Gbf\xbf}{\ybf}}_{\psi_\beta}^2
    \le \sum_{i,j}\norm{G_{ij}x_iy_j}_{\psi_\beta}^2
    \le \Ocal(K(d,h)) \cdot \left(\sum_{i=1}^dx_i^2\right) \left(\sum_{j=1}^hy_j^2\right)
    = \Ocal(K(d,h)).
  \end{equation*}
  Thus, sub-Weibull tails (\cref{proposition:sub_weibull}) imply
  $\Prob{\inpr{\Gbf\xbf}{\ybf} \ge u} \le 2\exp(-(u/K_1)^\beta)$ with $K_1 = \Ocal(K(d,h))$.
  The union bound yields
  \begin{equation*}
    \Prob{\max_{\xbf \in \Mcal_d, \ybf \in \Mcal_h} \inpr{\Gbf\xbf}{\ybf} \ge u}
    \le 9^{d + h} \cdot 2\exp(-(u/K_1)^\beta)
    \le 2\exp(-\delta^\beta),
  \end{equation*}
  where the last inequality is a consequence of the choice $u = CK_1(d^{1/\beta}+h^{1/\beta}+\delta)$ with a sufficiently large absolute constant $C$.
  Hence, $\Prob{\norm{\Gbf} \ge 2u} \le 2\exp(-\delta^\beta)$ holds,
  namely, $\norm{\Gbf} = 2C(d^{1/\beta} + h^{1/\beta} + \delta) \cdot \Ocal(K(d,h))$ holds with probability at least $1 - 2\exp(-\delta^\beta)$.
  This completes the proof.
\end{proof}

\subsection{Integral inequality}
In this subsection, we briefly introduce the Gr\"{o}nwall--Bellman inequality \citep{Bellman1943,Gidel2019NeurIPS}
to solve functional inequalities represented by integrals.
In subsequent analyses, we heavily use it to control the norm of certain random matrices during time evolution.

\begin{theorem}[Gr\"{o}nwall--Bellman inequality]
  \label{theorem:gronwall_bellman}
  Let $\beta$ be a non-negative function and $\alpha$ a non-decreasing function.
  Let $u$ be a function defined on an interval $\Ical = [0, \infty)$ such that
  \begin{equation*}
    u(t) \le \alpha(t) + \int_0^t \beta(s)u(s)\rd{s}, \quad \forall t \in \Ical.
  \end{equation*}
  Then, we have
  \begin{equation*}
    u(t) \le \alpha(t) \exp\left(\int_0^t\beta(s)\rd{s}\right), \quad \forall t \in \Ical.
  \end{equation*}
\end{theorem}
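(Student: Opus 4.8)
\textbf{Proof proposal for the Gr\"{o}nwall--Bellman inequality.}

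The plan is to prove this via an integrating-factor argument applied to the auxiliary function $v(t) \defeq \int_0^t \beta(s) u(s)\,\rd{s}$, which turns the integral inequality into a differential one. First I would observe that $v$ is differentiable with $v'(t) = \beta(t) u(t)$, and that the hypothesis reads $u(t) \le \alpha(t) + v(t)$. Multiplying by the non-negative function $\beta(t)$ preserves the inequality, so $v'(t) = \beta(t) u(t) \le \beta(t)\alpha(t) + \beta(t) v(t)$. This is the key reduction: the unknown $u$ has been replaced by $v$, which satisfies a linear differential inequality with the explicitly known coefficient $\beta$.

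Next I would introduce the integrating factor $\mu(t) \defeq \exp\!\left(-\int_0^t \beta(s)\,\rd{s}\right)$, which is positive and differentiable with $\mu'(t) = -\beta(t)\mu(t)$. Rearranging the differential inequality as $v'(t) - \beta(t) v(t) \le \beta(t)\alpha(t)$ and multiplying through by $\mu(t) > 0$ gives $\frac{\rd}{\rd t}\!\left(\mu(t) v(t)\right) = \mu(t)\bigl(v'(t) - \beta(t) v(t)\bigr) \le \mu(t)\beta(t)\alpha(t)$. Integrating both sides from $0$ to $t$ and using $v(0) = 0$ yields $\mu(t) v(t) \le \int_0^t \mu(s)\beta(s)\alpha(s)\,\rd{s}$, hence $v(t) \le \mu(t)^{-1}\int_0^t \mu(s)\beta(s)\alpha(s)\,\rd{s} = \int_0^t \alpha(s)\beta(s)\exp\!\left(\int_s^t \beta(r)\,\rd{r}\right)\rd{s}$.

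Finally I would feed this back into $u(t) \le \alpha(t) + v(t)$ and use monotonicity of $\alpha$ to collapse the bound. Since $\alpha$ is non-decreasing, $\alpha(s) \le \alpha(t)$ for $s \le t$, so $v(t) \le \alpha(t)\int_0^t \beta(s)\exp\!\left(\int_s^t \beta(r)\,\rd{r}\right)\rd{s}$. The inner integral is a perfect derivative: $\frac{\rd}{\rd s}\left[-\exp\!\left(\int_s^t \beta(r)\,\rd{r}\right)\right] = \beta(s)\exp\!\left(\int_s^t \beta(r)\,\rd{r}\right)$, so it evaluates to $\exp\!\left(\int_0^t \beta(r)\,\rd{r}\right) - 1$. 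Therefore $u(t) \le \alpha(t) + \alpha(t)\bigl(\exp(\int_0^t\beta) - 1\bigr) = \alpha(t)\exp\!\left(\int_0^t \beta(s)\,\rd{s}\right)$, as claimed. The only subtle point — the one place where monotonicity of $\alpha$ is genuinely needed — is the step that pulls $\alpha(s)$ out of the integral as $\alpha(t)$; without it one only gets the weaker integral-form bound. Regularity caveats (e.g. $\beta$ locally integrable so that $v$ is absolutely continuous) are routine and I would state them as standing hypotheses rather than dwell on them.
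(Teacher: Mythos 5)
Your proof is correct. The paper does not actually prove this theorem---it is stated as a quoted classical result with the proof deferred to the cited references (Bellman 1943; Gidel et al. 2019)---so there is no in-paper argument to compare against. Your integrating-factor argument via $v(t)=\int_0^t\beta(s)u(s)\,\rd{s}$ is the standard proof: the reduction $v'\le\beta\alpha+\beta v$, the integration against $\mu(t)=\exp(-\int_0^t\beta)$, the evaluation of $\int_0^t\beta(s)\exp(\int_s^t\beta)\,\rd{s}=\exp(\int_0^t\beta)-1$ as a perfect derivative, and the single use of monotonicity of $\alpha$ to pull $\alpha(s)$ out of the integral are all sound, and you correctly flag the only regularity hypothesis (local integrability of $\beta u$ so that $v$ is absolutely continuous) that the statement leaves implicit.
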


\subsection{Helper lemmas}

\begin{lemma}
  \label{lemma:spectral_norm_init}
  Under the initialization of \cref{assumption:initialization}, we have the following results:
  \begin{enumerate}
    \item \label{enum:spectral_norm_init:phi2}
    $\frac{1}{h}\norm{\Phibf^\top\Phibf(0)} = \stocho(1)$.

    \item \label{enum:spectral_norm_init:phiw2}
    $\frac{1}{h^2}\norm{\Phibf^\top\Wbf^\top\Wbf\Phibf(0)} = \stocho(1)$.
  \end{enumerate}
\end{lemma}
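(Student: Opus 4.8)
The plan is to treat the two claims uniformly by viewing $\Phibf(0)$ and $\Wbf(0)$ as sub-Gaussian random matrices and invoking the spectral-norm deviation bound already established in \cref{lemma:spectral_norm_bound} (with $\beta = 2$), together with submultiplicativity of the spectral norm. For part~\ref{enum:spectral_norm_init:phi2}, note that under \cref{assumption:initialization} each entry $\Phibf(0)_{ij}$ is Gaussian with variance $1/d$, hence $2$-sub-Weibull (i.e.\ sub-Gaussian) with $\norm{\Phibf(0)_{ij}}_{\psi_2} = \Ocal(1/\sqrt{d})$; thus $K(d,h) = \Ocal(1/\sqrt{d})$ in the notation of \cref{lemma:spectral_norm_bound}. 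Applying that lemma gives $\norm{\Phibf(0)} = \stochO\bigl((\sqrt{d}+\sqrt{h})/\sqrt{d}\bigr) = \stochO(1)$, where the last step uses the proportional limit $d/h \to \alpha \in (0,\infty)$ of \cref{assumption:limit}. Since $\norm{\Phibf^\top\Phibf(0)} = \norm{\Phibf(0)}^2 = \stochO(1)$, dividing by $h$ yields $\frac{1}{h}\norm{\Phibf^\top\Phibf(0)} = \stochO(1/h) = \stocho(1)$, as required.

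For part~\ref{enum:spectral_norm_init:phiw2}, I would similarly observe that each entry $\Wbf(0)_{ij}$ is Gaussian with variance $1/h$, so $\norm{\Wbf(0)_{ij}}_{\psi_2} = \Ocal(1/\sqrt{h})$, and \cref{lemma:spectral_norm_bound} with $K(d,h)=\Ocal(1/\sqrt{h})$ gives $\norm{\Wbf(0)} = \stochO\bigl((\sqrt{d}+\sqrt{h})/\sqrt{h}\bigr) = \stochO(1)$ under \cref{assumption:limit}. Then submultiplicativity gives
\begin{equation*}
  \norm{\Phibf^\top\Wbf^\top\Wbf\Phibf(0)} = \norm{\Wbf\Phibf(0)}^2 \le \norm{\Wbf(0)}^2\,\norm{\Phibf(0)}^2 = \stochO(1),
\end{equation*}
and dividing by $h^2$ produces $\frac{1}{h^2}\norm{\Phibf^\top\Wbf^\top\Wbf\Phibf(0)} = \stochO(1/h^2) = \stocho(1)$.

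The main subtlety — rather than an obstacle — is bookkeeping the normalizing powers of $d$ and $h$: one must track that the $1/\sqrt{d}$ and $1/\sqrt{h}$ entrywise scales from \cref{assumption:initialization} exactly cancel the $(d^{1/\beta}+h^{1/\beta})$ growth from \cref{lemma:spectral_norm_bound} up to the constant $\alpha$, so that $\norm{\Phibf(0)}$ and $\norm{\Wbf(0)}$ are $\stochO(1)$ (neither vanishing nor exploding), and only the explicit prefactors $1/h$ and $1/h^2$ in the statement drive the quantities to $o_{\Pbb}(1)$. Since $\Phibf(0)$ and $\Wbf(0)$ are independent, one could alternatively bound $\norm{\Wbf\Phibf(0)}$ directly via a conditioning argument, but the crude submultiplicative bound already suffices here.
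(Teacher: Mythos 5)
Your proposal is correct. Part~\ref{enum:spectral_norm_init:phi2} is essentially the same as the paper's proof: the paper first rescales to $\sqrt{d}\Phibf(0)$ so that $K(d,h) = \Ocal(1)$, while you keep $K(d,h) = \Ocal(1/\sqrt{d})$, but the two bookkeepings are equivalent and both reduce to applying \cref{lemma:spectral_norm_bound} with $\beta = 2$ under \cref{assumption:limit}.

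For Part~\ref{enum:spectral_norm_init:phiw2} you take a genuinely different route. The paper analyzes the product matrix $\Wbf\Phibf(0)$ directly: it shows each entry is a sum of $h$ products of independent Gaussians, hence sub-exponential, and then applies \cref{lemma:spectral_norm_bound} with $\beta = 1$ to the whole product. You instead apply \cref{lemma:spectral_norm_bound} (with $\beta = 2$) to $\Wbf(0)$ and $\Phibf(0)$ separately, obtaining $\norm{\Wbf(0)} = \stochO(1)$ and $\norm{\Phibf(0)} = \stochO(1)$, and conclude by submultiplicativity $\norm{\Wbf\Phibf(0)} \le \norm{\Wbf(0)}\norm{\Phibf(0)}$. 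Your argument is cleaner and more robust: it sidesteps the need to track the $\psi_1$-norm of $\inpr{\wbf_i}{\Phibf_j}$ under summation (where the paper's use of the generic closure property \cref{proposition:sub_weibull_closure} gives the loose order $\Ocal(h)$ rather than the Bernstein-type $\Ocal(\sqrt{h})$), and the submultiplicative bound is deterministic, so — as you correctly note — the independence of $\Wbf(0)$ and $\Phibf(0)$ is not even needed. The one cosmetic slip is writing $(\sqrt{d}+\sqrt{h})/\sqrt{h}$ for $\norm{\Wbf(0)}$ when $\Wbf(0)$ is $h \times h$ (it should be $2\sqrt{h}/\sqrt{h}$), but this is immaterial under the proportional limit.
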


\begin{proof}[Proof of \cref{lemma:spectral_norm_init}]
  To prove \ref{enum:spectral_norm_init:phi2}, 
  we note that each element of the random matrix $\sqrt{d}\Phibf(0)$ is sub-Gaussian (namely, $2$-sub-Weibull)
  with the $\psi_2$-norm being $\Ocal(1)$,
  by the assumption on the parameter initialization (\cref{assumption:initialization}).
  Then, \cref{lemma:spectral_norm_bound} implies $d\norm{\Phibf^\top\Phibf(0)} = \norm{\sqrt{d}\Phibf(0)}^2 = \stochO(d)$.
  Finally, we have $\frac{1}{h}\norm{\Phibf^\top\Phibf(0)} = \stochO(1/h) = \stocho(1)$.

  The identity \ref{enum:spectral_norm_init:phiw2} follows similarly.
  The $(i,j)$-th element of the random matrix $\sqrt{dh}\Wbf\Phibf(0)$ can be expressed as $\inpr{\wbf_i}{\Phibf_j}$,
  where $\wbf_i$ is the $i$-th row vector of $\sqrt{h}\Wbf(0)$ and $\Phibf_j$ is the $j$-th column vector of $\sqrt{d}\Phibf(0)$.
  Both $\wbf_i$ and $\Phibf_j$ are $h$-dimensional vectors with each element being standard normal.
  Hence, $\inpr{\wbf_i}{\Phibf_j}$ is the sum of $h$ sub-exponential random variables, being sub-exponential with $\norm{\inpr{\wbf_i}{\Phibf_j}}_{\psi_1} = \Ocal(h)$ (by using \cref{proposition:sub_weibull_closure}).
  This indicates that each element of $\sqrt{dh}\Wbf\Phibf(0)$ is sub-exponential (namely, $1$-sub-Weibull).
  Then, \cref{lemma:spectral_norm_bound} implies $dh\norm{\Phibf^\top\Wbf^\top\Wbf\Phibf(0)} = \norm{\sqrt{dh}\Wbf\Phibf(0)}^2 = \stochO(d^2)$.
  Finally, we have $\frac{1}{h^2}\norm{\Phibf^\top\Wbf^\top\Wbf\Phibf(0)} = \stochO(1/h^2) = \stocho(1)$.
\end{proof}

\begin{lemma}
  \label{lemma:frobenius_norm_init}
  Under the initialization of \cref{assumption:initialization}, we have the following results:
  \begin{enumerate}
    \item \label{enum:frobenius_norm_init:phi}
    $\frac{1}{h^2}\norm{\Phibf^\top\Phibf(0)}_\frob^2 = \stocho(1)$.

    \item \label{enum:frobenius_norm_init:phiw}
    $\frac{1}{h^2}\norm{\Phibf^\top\Wbf^\top\Wbf\Phibf(0)}_\frob^2 = \stocho(1)$.

    \item \label{enum:trace_init:w}
    $\frac{1}{h^2}\tr(\Wbf^\top\Wbf(0))^2 = \stochO(1)$.
  \end{enumerate}
\end{lemma}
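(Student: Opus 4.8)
The plan is to mirror the proof of \cref{lemma:spectral_norm_init}, replacing the spectral-norm concentration \cref{lemma:spectral_norm_bound} by the Frobenius-norm one, \cref{lemma:frobenius_norm_bound}. Throughout, recall that under \cref{assumption:initialization} the entries of $\sqrt{d}\,\Phibf(0)$ are i.i.d.\ standard Gaussians, hence $2$-sub-Weibull with $\psi_2$-norm $\Ocal(1)$, and likewise for $\sqrt{h}\,\Wbf(0)$. Each of the three claims reduces, after cancelling normalization factors, to a pure power of $h$ with $h\to\infty$ (\cref{assumption:limit}), so no use of the ratio $d/h\to\alpha$ is needed.

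For the first identity, set $\Gbf\defeq\sqrt{d}\,\Phibf(0)\in\Rbb^{h\times d}$, whose entries are $2$-sub-Weibull with $\psi_2$-norm $\Ocal(1)$, i.e.\ $K(d,h)=1$ in the notation of \cref{lemma:frobenius_norm_bound}. That lemma gives $\frac{1}{d^2h^2}\norm{\Gbf^\top\Gbf}_\frob^2=\stochO(1/h)$. Since $\Gbf^\top\Gbf=d\,\Phibf^\top\Phibf(0)$, we have $\norm{\Gbf^\top\Gbf}_\frob^2=d^2\norm{\Phibf^\top\Phibf(0)}_\frob^2$, and cancelling $d^2$ yields $\frac{1}{h^2}\norm{\Phibf^\top\Phibf(0)}_\frob^2=\stochO(1/h)=\stocho(1)$.

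For the second identity, set $\Gbf\defeq\sqrt{dh}\,\Wbf\Phibf(0)\in\Rbb^{h\times d}$. As in the proof of \cref{lemma:spectral_norm_init}, its $(i,j)$-entry equals $\inpr{\wbf_i}{\Phibf_j}$, where $\wbf_i$ is the $i$-th row of $\sqrt{h}\,\Wbf(0)$ and $\Phibf_j$ the $j$-th column of $\sqrt{d}\,\Phibf(0)$; these are independent $h$-dimensional standard Gaussian vectors, so each summand $(\wbf_i)_k(\Phibf_j)_k$ is $1$-sub-Weibull with $\psi_1$-norm $\Ocal(1)$ by \cref{proposition:sub_weibull_closure}, and the sum over $k\in[h]$ is $1$-sub-Weibull with $\psi_1$-norm $\Ocal(h)$ by closure under addition. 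Hence $K(d,h)=h$, and \cref{lemma:frobenius_norm_bound}---whose conclusion is stated purely in terms of $K(d,h)$, not of the sub-Weibull index (the sub-Weibull tail of \cref{proposition:sub_weibull} gives $X=\stochO(\norm{X}_{\psi_\beta})$ for every $\beta>0$)---gives $\frac{1}{d^2h^2}\norm{\Gbf^\top\Gbf}_\frob^2=\stochO(1)$. Since $\Gbf^\top\Gbf=dh\,\Phibf^\top\Wbf^\top\Wbf\Phibf(0)$, cancelling $d^2h^2$ yields $\norm{\Phibf^\top\Wbf^\top\Wbf\Phibf(0)}_\frob^2=\stochO(1)$, and dividing by $h^2$ gives $\frac{1}{h^2}\norm{\Phibf^\top\Wbf^\top\Wbf\Phibf(0)}_\frob^2=\stocho(1)$. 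For the third identity, observe $\tr(\Wbf^\top\Wbf(0))=\norm{\Wbf(0)}_\frob^2=\sum_{i,j\in[h]}W(0)_{ij}^2\ge 0$ with expectation $\sum_{i,j}\E[W(0)_{ij}^2]=h^2\cdot\frac1h=h$; Markov's inequality then gives $\tr(\Wbf^\top\Wbf(0))=\stochO(h)$, hence $\tr(\Wbf^\top\Wbf(0))^2=\stochO(h^2)$ and $\frac{1}{h^2}\tr(\Wbf^\top\Wbf(0))^2=\stochO(1)$. (One may instead invoke \cref{proposition:sub_weibull_concentration} with $\beta=1$, $N=h^2$ to get the sharper $\tr(\Wbf^\top\Wbf(0))=h+\stochO(1)$, but this is not needed.)

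There is no serious obstacle: the argument is routine tracking of the normalizers $d$, $h$, $dh$ and of the $\stochO/\stocho$ orders. The only point needing a moment's care is the second identity, where one must recognize that $\sqrt{dh}\,\Wbf\Phibf(0)$ has sub-\emph{exponential} rather than sub-Gaussian entries, so that $K(d,h)$ grows like $h$ and \cref{lemma:frobenius_norm_bound} only yields $\norm{\Phibf^\top\Wbf^\top\Wbf\Phibf(0)}_\frob^2=\stochO(1)$; the desired $\stocho(1)$ appears only after the additional division by $h^2$, and the change of sub-Weibull index from $2$ to $1$ is harmless because the lemma's bound does not depend on it.
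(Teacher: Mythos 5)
Your proofs of parts 1 and 2 follow the paper's approach exactly: set $\Gbf=\sqrt{d}\,\Phibf(0)$ (resp.\ $\sqrt{dh}\,\Wbf\Phibf(0)$), verify $2$-sub-Weibull entries with $K(d,h)=1$ (resp.\ $1$-sub-Weibull with $K(d,h)=h$, obtained by the non-independent triangle inequality of \cref{proposition:sub_weibull_closure}), apply \cref{lemma:frobenius_norm_bound}, and cancel the normalizers. Your side remark---that the bound of \cref{lemma:frobenius_norm_bound} is stated purely in terms of $K(d,h)$ and is insensitive to the sub-Weibull index $\beta$---is precisely the point the paper relies on implicitly when moving from a $\beta=2$ application in part 1 to a $\beta=1$ application in part 2, and you are right that the extra factor of $h^{-2}$ left after cancellation is what promotes the $\stochO(1)$ from the lemma to the stated $\stocho(1)$.

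For part 3 you take a genuinely different and more elementary route. The paper treats $h\tr(\Wbf^\top\Wbf(0))$ as a sum of $h^2$ sub-exponential variables $(\sqrt{h}W(0)_{ij})^2$, invokes the closure property of \cref{proposition:sub_weibull_closure} to get a $\psi_1$-norm of order $\Ocal(h^2)$, squares to obtain a $\tfrac{1}{2}$-sub-Weibull variable, and reads off a tail bound. You instead observe that $\tr(\Wbf^\top\Wbf(0))=\norm{\Wbf(0)}_\frob^2\ge 0$ has expectation $h$ and apply Markov's inequality to get $\tr(\Wbf^\top\Wbf(0))=\stochO(h)$, hence $\tr^2=\stochO(h^2)$. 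Both routes land on the same $\stochO(1)$ order for $\frac{1}{h^2}\tr(\Wbf^\top\Wbf(0))^2$, but your Markov argument is shorter and dispenses with the sub-Weibull machinery entirely; it works because the stated conclusion for part 3 is only $\stochO(1)$ rather than $\stocho(1)$, so the crude one-sided Markov bound suffices. (Your parenthetical note that \cref{proposition:sub_weibull_concentration} or Bernstein would give the sharper $\tr(\Wbf^\top\Wbf(0))=h+\stochO(1)$ is accurate but, as you say, unnecessary here.) Your remark that only $h\to\infty$ is used, not the proportional ratio $d/h\to\alpha$, is also correct for all three parts.
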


\begin{proof}[Proof of \cref{lemma:frobenius_norm_init}]
  Let us prove \ref{enum:frobenius_norm_init:phi}.
  Again, each element of the random matrix $\sqrt{d}\Phibf(0)$ is $2$-sub-Weibull (see the proof of \cref{lemma:spectral_norm_init}).
  Thus, \cref{lemma:frobenius_norm_bound} implies $\frac{1}{h^2}\norm{\Phibf^\top\Phibf(0)}_\frob^2 = \frac{1}{d^2h^2}\cdot d^2\norm{\Phibf^\top\Phibf(0)}_\frob^2 = \stochO(1/h) = \stocho(1)$.

  The identity \ref{enum:frobenius_norm_init:phiw} follows similarly.
  Again, each element of the random matrix $\sqrt{dh}\Wbf\Phibf(0)$ is $1$-sub-Weibull (see the proof of \cref{lemma:spectral_norm_init}) so that $\sqrt{dh}\Wbf\Phibf(0)$ satisfies the assumption of \cref{lemma:frobenius_norm_bound},
  from which we deduce that
  \begin{equation*}
    \begin{aligned}
      \frac{1}{h^2}\norm{\Phibf^\top\Wbf^\top\Wbf\Phibf(0)}_\frob^2
      &= \frac{1}{h^2} \cdot \frac{1}{d^2h^2} \cdot d^2h^2\norm{\Phibf^\top\Wbf^\top\Wbf\Phibf(0)}_\frob^2 \\
      &= \frac{1}{h^2} \stochO(1) \\
      &= \stocho(1).
    \end{aligned}
  \end{equation*}

  To prove \ref{enum:trace_init:w}, we see that
  $h\tr(\Wbf^\top\Wbf(0)) = h\norm{\Wbf(0)}_\frob^2 = \sum_{i,j=1}^h(\sqrt{h}W(0)_{ij})^2$
  is the sum of sub-exponential (namely, $1$-sub-Weibull) random variables $(\sqrt{h}W(0)_{ij})^2$ with $\norm{\sqrt{h}W(0)_{ij}}_{\psi_1} = \Ocal(1)$ for $i, j \in [h]$.
  Hence, $h\tr(\Wbf^\top\Wbf(0))$ is $1$-sub-Weibull with $\norm{h\tr(\Wbf^\top\Wbf(0))}_{\psi_1} = \Ocal(h^2)$ from \cref{proposition:sub_weibull_closure}.
  By the closure property again, $h^2\tr(\Wbf^\top\Wbf(0))^2$ is $\frac{1}{2}$-sub-Weibull with the corresponding norm being $\Ocal(h^4)$.
  By using sub-Weibull tails in \cref{proposition:sub_weibull},
  we deduce that $\abs{h^2\tr(\Wbf^\top\Wbf(0))^2} = \stochO(h^2)$.
  Lastly, we obtain $\frac{1}{h^2}\tr(\Wbf^\top\Wbf(0))^2 = \stochO(1)$.
\end{proof}

\begin{lemma}
  \label{lemma:l2_norm_init}
  Under \cref{assumption:standard_mvn,assumption:initialization}, we have the following consequences:
  \begin{enumerate}
    \item \label{enum:l2_norm_init:phix}
    $\frac{1}{h^2}\norm{\Phibf^\top\Phibf(0)\xbf_0}_2^2 = \stocho(1)$.

    \item \label{enum:l2_norm_init:wphix}
    $\frac{1}{h^4}\norm{\Phibf^\top\Wbf^\top\Wbf\Phibf(0)\xbf_0}_2^2 = \stocho(1)$.
  \end{enumerate}
\end{lemma}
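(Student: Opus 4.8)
The plan is to mimic the pattern already used in \Cref{lemma:spectral_norm_init,lemma:frobenius_norm_init}, reducing each quantity to a sum of products of sub-Weibull random variables and applying the closure properties (\Cref{proposition:sub_weibull_closure}) together with the sub-Weibull tail bound (\Cref{proposition:sub_weibull}). The extra ingredient here is the presence of the fixed Gaussian vector $\xbf_0 \sim \Ncal(\zerobf, \Ibf)$ from \Cref{assumption:standard_mvn}, which couples with the initialization randomness; since $\xbf_0$ is independent of $\Phibf(0)$ and $\Wbf(0)$, the product entries remain sub-Weibull with an appropriate index, so the same machinery applies.

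For part \ref{enum:l2_norm_init:phix}, I would write $\norm{\Phibf^\top\Phibf(0)\xbf_0}_2^2 = \sum_{j=1}^d \inpr{\Phibf_j}{\Phibf(0)\xbf_0}^2$, where $\Phibf_j$ is the $j$-th column of $\Phibf(0)$. Each factor $\inpr{\Phibf_j}{\Phibf(0)\xbf_0} = \sum_{k=1}^h \Phibf(0)_{kj}(\Phibf(0)\xbf_0)_k$ is a sum over $k\in[h]$; after rescaling by $\sqrt d$ the entries $\sqrt d\,\Phibf(0)_{kj}$ are $2$-sub-Weibull with $\psi_2$-norm $\Ocal(1)$, and $\sqrt d\,(\Phibf(0)\xbf_0)_k = \inpr{\sqrt d\,\Phibf(0)_k}{\xbf_0}$ is a sum of $d$ products of independent standard Gaussians, hence $1$-sub-Weibull with $\psi_1$-norm $\Ocal(d)$. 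Multiplying, each summand is $\tfrac{2}{3}$-sub-Weibull (since $1/(1/2+1) = 2/3$); summing over $k\in[h]$ keeps the index and gives $\psi_{2/3}$-norm $\Ocal(h d)$ for $d\,\inpr{\Phibf_j}{\Phibf(0)\xbf_0}$; squaring and summing over $j\in[d]$ then yields $d^2\norm{\Phibf^\top\Phibf(0)\xbf_0}_2^2$ being $\tfrac{1}{3}$-sub-Weibull with norm $\Ocal(d^3 h^2)$. The sub-Weibull tail bound gives $\norm{\Phibf^\top\Phibf(0)\xbf_0}_2^2 = \stochO(d h^2)$, hence $\tfrac{1}{h^2}\norm{\Phibf^\top\Phibf(0)\xbf_0}_2^2 = \stochO(d/h^2) = \stocho(1)$ under the proportional limit (\Cref{assumption:limit}). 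An alternative, cleaner route: bound $\norm{\Phibf^\top\Phibf(0)\xbf_0}_2 \le \norm{\Phibf^\top\Phibf(0)}\cdot\norm{\xbf_0}_2$ and invoke \Cref{lemma:spectral_norm_init}\ref{enum:spectral_norm_init:phi2} together with $\norm{\xbf_0}_2^2 = \stochO(d)$; this is probably the shortest path and I would use it for the write-up.

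For part \ref{enum:l2_norm_init:wphix}, the same spectral-norm reduction applies: $\norm{\Phibf^\top\Wbf^\top\Wbf\Phibf(0)\xbf_0}_2 \le \norm{\Phibf^\top\Wbf^\top\Wbf\Phibf(0)}\cdot\norm{\xbf_0}_2$, so by \Cref{lemma:spectral_norm_init}\ref{enum:spectral_norm_init:phiw2} we get $\tfrac{1}{h^2}\norm{\Phibf^\top\Wbf^\top\Wbf\Phibf(0)} = \stocho(1)$, and combined with $\norm{\xbf_0}_2^2 = \stochO(d)$, $\tfrac{1}{h^4}\norm{\Phibf^\top\Wbf^\top\Wbf\Phibf(0)\xbf_0}_2^2 \le \big(\tfrac{1}{h^2}\norm{\Phibf^\top\Wbf^\top\Wbf\Phibf(0)}\big)^2 \cdot \tfrac{1}{d}\cdot\tfrac{d}{1}\cdot\stochO(d/1)\cdot\ldots$ — more carefully, $\tfrac{1}{h^4}\norm{\cdots\xbf_0}_2^2 \le \big(\tfrac{1}{h^2}\norm{\cdots}\big)^2\,\tfrac{\norm{\xbf_0}_2^2}{1}$, and since $\norm{\xbf_0}_2^2 = \stochO(d) = \stochO(h)$ this is $\stocho(1)^2\cdot\stochO(h) = \stocho(1)$ provided the $\stocho(1)$ in part \ref{enum:spectral_norm_init:phiw2} is in fact $\stochO(1/h^2)$, which the proof there shows. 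The main (minor) obstacle is bookkeeping the sub-Weibull indices and the powers of $d$ and $h$ so that the final normalization indeed produces a $\stocho(1)$ rather than merely $\stochO(1)$; the spectral-norm shortcut sidesteps most of this, at the cost of checking that $\norm{\xbf_0}_2^2 = \stochO(d)$, which is immediate from $\norm{\xbf_0}_2^2$ being a $\chi^2_d$ random variable (a sum of $d$ i.i.d.\ $1$-sub-Weibull terms, hence concentrated around $d$).
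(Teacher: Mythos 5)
Your ``cleaner route'' via the spectral-norm reduction $\norm{\Abf\xbf_0}_2 \le \norm{\Abf}\,\norm{\xbf_0}_2$ is exactly the paper's argument for both parts: the paper also bounds $\norm{\Phibf^\top\Phibf(0)\xbf_0}_2^2$ by $\norm{\Phibf^\top\Phibf(0)}^2\norm{\xbf_0}_2^2$ and $\norm{\Phibf^\top\Wbf^\top\Wbf\Phibf(0)\xbf_0}_2^2$ by $\norm{\Phibf^\top\Wbf^\top\Wbf\Phibf(0)}^2\norm{\xbf_0}_2^2$, combines these with $\norm{\xbf_0}_2^2 = \stochO(d)$ from Bernstein, and uses $\stochO(1)$-type spectral bounds on $\Phibf^\top\Phibf(0)$ and $\Phibf^\top\Wbf^\top\Wbf\Phibf(0)$ (which the paper re-derives inline via \cref{lemma:spectral_norm_bound} rather than citing \cref{lemma:spectral_norm_init}, but the estimates are the same ones). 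Your remark that one must use the \emph{quantitative} rate from the proof of \cref{lemma:spectral_norm_init}---not merely the qualitative $\stocho(1)$ stated in the lemma---is the right caution: with only $\stocho(1)$, the product with $\norm{\xbf_0}_2^2 = \stochO(h)$ would not close.

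Your first-paragraph direct-chaos computation, however, does not reach the target. You arrive at $\norm{\Phibf^\top\Phibb\Phibf(0)\xbf_0}_2^2 = \stochO(dh^2)$, but then write ``hence $\tfrac{1}{h^2}\norm{\Phibf^\top\Phibf(0)\xbf_0}_2^2 = \stochO(d/h^2)$''---this drops two powers of $h$; dividing $\stochO(dh^2)$ by $h^2$ gives $\stochO(d) = \stochO(h)$, which is not $\stocho(1)$. The deeper problem is not the slip but the strategy: bounding $\norm{\inpr{\sqrt d\,\Phibf(0)_k}{\xbf_0}}_{\psi_1}$ by the triangle inequality as $\Ocal(d)$, and then the $\psi_{2/3}$-norm of $d\inpr{\Phibf_j}{\Phibf(0)\xbf_0}$ by another triangle inequality as $\Ocal(hd)$, forgets the near-orthogonality of the columns of $\Phibf(0)$ (equivalently, that $\Phibf^\top\Phibf(0) \approx d^{-1}\Ibf_d$ in operator norm after centering). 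Even the sharper independent-Bernstein estimate $\Ocal(\sqrt d)$ for the inner product still leaves you a factor of $h$ short of the true order $\norm{\Phibf^\top\Phibf(0)\xbf_0}_2^2 = \stochO(h)$. The spectral-norm route exploits this structure automatically, which is why it (and the paper) gets the right answer. Also note that your application of multiplicative closure across two \emph{different} sub-Weibull indices ($1/\beta = 1/\beta_1 + 1/\beta_2$) is true but goes beyond \cref{proposition:sub_weibull_closure} as stated in the paper (which assumes a common index), so you would need to justify it separately. None of this affects the validity of your proposal, since you explicitly elect the spectral-norm route for the write-up; just discard the chaos sketch.
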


\begin{proof}[Proof of \cref{lemma:l2_norm_init}]
  \Cref{assumption:standard_mvn} implies that $\xbf_0 \sim \Ncal(\zerobf, \Ibf_d)$,
  from which we can verify that $\norm{\xbf_0}_2^2 = \sum_{i=1}^dx_{0,i}^2$ is the sum of $d$ sub-exponential (i.e., $1$-sub-Weibull) random variables
  and $\norm{\norm{\xbf_0}_2^2}_{\psi_1} = \Ocal(d)$ (\cref{proposition:sub_weibull_closure}).
  By sub-Weibull tails (\cref{proposition:sub_weibull}), $\norm{\xbf_0}_2^2 = \stochO(d)$ entails.

  To prove \ref{enum:l2_norm_init:phix}, we confirm that each element of $h\Phibf^\top\Phibf(0)$ is sub-exponential with the $\psi_1$-norm being $\Ocal(1)$.
  To see this, we let $\Phibf_i$ denote the $i$-th column vector of $\sqrt{h}\Phibf(0)$.
  \Cref{assumption:initialization} indicates that $\Phibf_i$ is an $h$-dimensional standard normal random vector,
  and $\E\inpr{\Phibf_i}{\Phibf_j} = h \cdot \iverson{i=j}$.
  Thus, Bernstein's inequality \citep[Corollary 2.8.3]{Vershynin2018} yields
  $\abs{\inpr{\Phibf_i}{\Phibf_j} - h \cdot \iverson{i=j}} = \stochO(1)$ (for sufficiently large $h$),
  which indicates that $h\Phibf^\top\Phibf(0) - h\Ibf_d$ satisfies the assumption of \cref{lemma:spectral_norm_bound} with $\beta=1$ and $K(d,h)=1$.
  Hence, by \cref{lemma:spectral_norm_bound},
  \begin{equation*}
    \norm{h\Phibf^\top\Phibf(0)}
    \le \norm{h\Phibf^\top\Phibf(0) - h\Ibf_d} + h\norm{\Ibf_d}
    = \stochO(d) + h.
  \end{equation*}
  Combining this with $\norm{\xbf_0}_2^2 = \stochO(d)$, we obtain the following result:
  \begin{equation*}
    \frac{1}{h^2}\norm{\Phibf^\top\Phibf(0)\xbf_0}_2^2
    \le \frac{1}{h^4} \cdot \norm{h\Phibf^\top\Phibf(0)}^2 \cdot \norm{\xbf_0}_2^2
    = \frac{1}{h^4} \cdot \left\{\stochO(d) + h\right\}^2 \cdot \stochO(d)
    = \stochO(h^{-1}),
  \end{equation*}
  which completes the proof.
  
  To prove \ref{enum:l2_norm_init:wphix}, we confirm that each element of $h^2\Phibf^\top\Wbf^\top\Wbf\Phibf(0)$ is $\frac{1}{2}$-sub-Weibull with the $\psi_{\frac{1}{2}}$-norm being $\Ocal(\sqrt{h})$.
  To see this, we let $\Psibf_i$ denote the $i$-th column vector of $h\Wbf(0)\Phibf(0)$ (for $i \in [d]$).
  The $k$-th element of $\Psibf_i$ (for $k \in [h]$) is $\Psi_i^{(k)} \defeq h\sum_{l=1}^hW(0)_{kl}\Phi(0)_{li}$,
  which is sub-exponential and mean zero from \cref{assumption:initialization} and $\abs{\Psi_i^{(k)}} = \abs{h\sum_{l=1}^hW(0)_{kl}\Phi(0)_{li}} = \stochO(1)$ (for sufficiently large $h$) from Bernstein's inequality.
  Here, each $(i,j)$-th element of $h^2\Phibf^\top\Wbf^\top\Wbf\Phibf(0)$ is $\inpr{\Psibf_i}{\Psibf_j} = \sum_{k=1}^h\Psi_i^{(k)}\Psi_j^{(k)}$,
  which is the sum of $h$ products $\Psi_i^{(k)}\Psi_j^{(k)}$.
  Each $\Psi_i^{(k)}\Psi_j^{(k)}$ is $\frac{1}{2}$-sub-Weibull because of the closure property (\cref{proposition:sub_weibull_closure}),
  and hence the sum $\inpr{\Psibf_i}{\Psibf_j}$ is $\frac{1}{2}$-sub-Weibull with $\norm{\inpr{\Psibf_i}{\Psibf_j}}_{\psi_\frac{1}{2}} = \Ocal(h)$.
  Thus, we see the sub-Weibull property of $h^2\Phibf^\top\Wbf^\top\Wbf\Phibf(0)$.
  Hence, we can apply \cref{lemma:spectral_norm_bound} to claim $\norm{h^2\Phibf^\top\Wbf^\top\Wbf\Phibf(0)} = \stochO(d^2h)$.
  Combining this with $\norm{\xbf_0}_2^2 = \stochO(d)$, we obtain the desired result:
  \begin{equation*}
    \begin{aligned}
      \frac{1}{h^4}\norm{\Phibf^\top\Wbf^\top\Wbf\Phibf(0)\xbf_0}_2^2
      &\le \frac{1}{h^8} \cdot \norm{h^2\Phibf^\top\Wbf^\top\Wbf\Phibf(0)}^2 \cdot \norm{\xbf_0}_2^2 \\
      &= \frac{1}{h^8} \cdot \stochO(d^4h^2) \cdot \stochO(d) \\
      &= \stochO(h^{-1}).
    \end{aligned}
  \end{equation*}
\end{proof}

\begin{lemma}
  \label{lemma:integral_inequality_frobenius_norm_phi}
  For any $t$, $\norm{\Phibf^\top\Phibf(t)}_\frob \le (\norm{\Phibf^\top\Phibf(0)}_\frob + 4t) \exp(2\rho t)$.
\end{lemma}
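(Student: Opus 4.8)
The plan is to turn the claim into a linear differential inequality for $y(t)\defeq\norm{\Phibf^\top\Phibf(t)}_\frob$ and close it with the Gr\"{o}nwall--Bellman inequality (\cref{theorem:gronwall_bellman}). From \cref{equation:gradient_flow} and $\nabla_\Phibf R=\rho\Phibf$ we have $\dot\Phibf=-\nabla_\Phibf\losscos-\rho\Phibf$, so
\[
  \tfrac{\rd}{\rd t}\bigl(\Phibf^\top\Phibf\bigr)=\dot\Phibf^\top\Phibf+\Phibf^\top\dot\Phibf=-(\nabla_\Phibf\losscos)^\top\Phibf-\Phibf^\top\nabla_\Phibf\losscos-2\rho\,\Phibf^\top\Phibf .
\]
Since $\norm{\cdot}_\frob$ is $1$-Lipschitz, $\dot y\le\norm{\tfrac{\rd}{\rd t}(\Phibf^\top\Phibf)}_\frob$, and as $(\Phibf^\top\nabla_\Phibf\losscos)^\top=(\nabla_\Phibf\losscos)^\top\Phibf$ has the same Frobenius norm, the triangle inequality gives $\dot y\le 2\norm{\Phibf^\top\nabla_\Phibf\losscos}_\frob+2\rho\,y$.

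The crux is then the bound $\norm{\Phibf^\top\nabla_\Phibf\losscos}_\frob\le 2$. The cosine-loss gradient (as computed in the proof of \cref{lemma:dynamics_matrix_expected}) is $\nabla_\Phibf\losscos=-\E[\norm{\Wbf\Phibf\xbf}_2^{-1}\Wbf^\top(\Ibf-\omegabf\omegabf^\top)\zbf'\xbf^\top]$, with $\omegabf=\Wbf\Phibf\xbf/\norm{\Wbf\Phibf\xbf}_2$ and $\zbf'=\Phibf\xbf'/\norm{\Phibf\xbf'}_2$ both unit vectors. I would try to rewrite $\Phibf^\top\nabla_\Phibf\losscos$ purely in terms of $\omegabf$ and $\zbf'$: contracting with $\Phibf^\top$ produces $\Phibf^\top\Wbf^\top=\Psibf^\top$, and the feature normalizer $\norm{\Wbf\Phibf\xbf}_2^{-1}$ is exactly what is absorbed once a factor $\xbf^\top$ is paired with $\Phibf^\top\Wbf^\top$ via $\xbf^\top\Phibf^\top\Wbf^\top=\norm{\Wbf\Phibf\xbf}_2\,\omegabf^\top$ (the identity that lets \cref{lemma:dynamics_matrix_expected} be phrased through the bounded matrix $\Hbf$). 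The surviving matrix is then an expectation of terms of the form $\zbf'\omegabf^\top-(\omegabf^\top\zbf')\omegabf\omegabf^\top$, whose Frobenius norm is at most $\norm{\zbf'\omegabf^\top}_\frob+\abs{\omegabf^\top\zbf'}\norm{\omegabf\omegabf^\top}_\frob\le1+1=2$ --- the same estimate that already yields $\norm{\Hbf}_\frob\le 2$. Plugging this in, $\dot y\le 4+2\rho\,y$.

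Finally I would integrate: $y(t)\le\bigl(y(0)+4t\bigr)+\int_0^t 2\rho\,y(s)\,\rd s$, so \cref{theorem:gronwall_bellman} with the nondecreasing $\alpha(t)=y(0)+4t$ and $\beta\equiv 2\rho$ gives $y(t)\le(y(0)+4t)\exp(2\rho t)=(\norm{\Phibf^\top\Phibf(0)}_\frob+4t)\exp(2\rho t)$, which is the claim.

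I expect the second step to be the genuine obstacle. The raw gradient $\nabla_\Phibf\losscos$ carries the factor $\norm{\Wbf\Phibf\xbf}_2^{-1}$, which is not bounded pointwise, so $\norm{\Phibf^\top\nabla_\Phibf\losscos}_\frob$ cannot be controlled by crude submultiplicativity (that route leaves a ratio like $\norm{\Psibf}\,\norm{\xbf}_2/\norm{\Wbf\Phibf\xbf}_2$); the argument must exploit the precise cancellation of the normalizer against the $\Phibf^\top$-contraction, reducing the contribution to the unit-vector quantity behind $\Hbf$ and pinning the constant at $2$. Should that cancellation not go through for $\Phibf^\top\Phibf$ as cleanly as it does for $\Phibf\Phibf^\top\Wbf^\top$ in \cref{lemma:dynamics_matrix_expected}, the fallback would be a self-consistent (bootstrap) argument in which the concentration of $\norm{\Wbf\Phibf\xbf}_2$ from \cref{lemma:norm_concentration} keeps the normalizer bounded below along the trajectory. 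The surrounding steps --- the Lipschitz estimate, the triangle inequality, and the Gr\"{o}nwall closure --- are routine.
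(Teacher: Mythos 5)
Your skeleton (fundamental theorem of calculus, a constant bound on the loss-gradient contribution, then Gr\"{o}nwall--Bellman with $\alpha(t)=y(0)+4t$, $\beta\equiv 2\rho$) is exactly the paper's, and the inequality you are aiming for, $\norm{\dot\Phibf^\top\Phibf}_\frob\le 2+\rho\norm{\Phibf^\top\Phibf}_\frob$, is the one the paper proves. But the crux you isolate --- $\norm{\Phibf^\top\nabla_\Phibf\losscos}_\frob\le 2$ via ``cancellation of the normalizer against the $\Phibf^\top$-contraction'' --- does not go through, for precisely the reason you yourself flag. The gradient is $-\nabla_\Phibf\losscos=\Wbf^\top\E\bigl[\norm{\Wbf\Phibf\xbf}_2^{-1}\{\zbf'-(\omegabf^\top\zbf')\omegabf\}\xbf^\top\bigr]$; left-multiplying by $\Phibf^\top$ only produces $\Psibf^\top$ in front, leaving a trailing bare factor $\xbf^\top/\norm{\Psibf\xbf}_2$ rather than $\xbf^\top\Psibf^\top/\norm{\Psibf\xbf}_2=\omegabf^\top$. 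The normalizer is absorbed only when $\dot\Phibf$ is multiplied on the \emph{right} by $\Phibf^\top\Wbf^\top$, which is exactly why \cref{lemma:dynamics_matrix_expected} is stated for $\dot\Phibf\Phibf^\top\Wbf^\top$ and not for $\dot\Phibf$ itself. Your fallback is also unavailable: \cref{lemma:norm_concentration} is itself \emph{proved using} the present lemma and its siblings, so invoking it to keep $\norm{\Wbf\Phibf\xbf}_2$ bounded below along the trajectory would be circular.

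The missing idea is the paper's pseudo-inverse manipulation. Assuming the relevant ranks, one solves $\dot\Phibf\Phibf^\top\Wbf^\top=\Wbf^\top\Hbf-\rho\Phibf\Phibf^\top\Wbf^\top$ for $\dot\Phibf$ by right-multiplying with $\pinv{(\Wbf^\top)}\pinv{(\Phibf^\top)}$, which gives
\[
  \dot\Phibf^\top\Phibf=\pinv{\Phibf}\pinv{\Wbf}\,\E[\zbf'\omegabf^\top-(\omegabf^\top\zbf')\omegabf\omegabf^\top]\,\Wbf\Phibf-\rho\Phibf^\top\Phibf .
\]
Here the unbounded normalizers have already disappeared into the unit vectors $\zbf',\omegabf\in\Sbb^{h-1}$, and what remains is the conjugation of the rank-one matrices $\zbf'\omegabf^\top$ and $\omegabf\omegabf^\top$ by $\Wbf\Phibf$ and its pseudo-inverse; the cyclic property of the trace shows each conjugated term has Frobenius norm at most $1$, yielding the constant $2$, hence $4t$ after integration and the stated bound via \cref{theorem:gronwall_bellman}. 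Without this step (or some other way of re-expressing $\dot\Phibf^\top\Phibf$ through the bounded matrix $\Hbf$), the proposal does not close.
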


\begin{proof}[Proof of \cref{lemma:integral_inequality_frobenius_norm_phi}]
  First, we use the fundamental theorem of calculus and the triangular inequality to decompose as follows:
  \begin{equation}
    \begin{aligned}
      \norm{\Phibf^\top\Phibf(t)}_\frob
      &= \norm{\Phibf^\top\Phibf(0) + \int_0^t \left\{ \dot\Phibf^\top\Phibf(\tau) + \Phibf^\top\dot\Phibf(\tau) \right\}\rd{\tau}}_\frob \\
      &\le \norm{\Phibf^\top\Phibf(0)}_\frob + \int_0^t \norm{\dot\Phibf^\top\Phibf(\tau)}_\frob \rd{\tau} + \int_0^t \norm{\Phibf^\top\dot\Phibf(\tau)}_\frob \rd{\tau} \\
      &= \norm{\Phibf^\top\Phibf(0)}_\frob + 2 \int_0^t \norm{\dot\Phibf^\top\Phibf(\tau)}_\frob \rd{\tau}.
    \end{aligned}
    \label{equation:proof:time_evolution_phi_norm}
  \end{equation}
  The term $\dot\Phibf^\top\Phibf$ can be evaluated by using the dynamics derived in \cref{lemma:dynamics_matrix_expected} as follows:
  \begin{equation}
    \begin{aligned}
      \dot\Phibf^\top\Phibf
      &= \left\{ \Wbf^\top (\Wbf^\top\dot\Wbf + \rho\Wbf\Wbf^\top)\pinv{(\Wbf^\top)}\pinv{(\Phibf^\top)} - \rho\Phibf\Phibf^\top\Wbf^\top\pinv{(\Wbf^\top)}\pinv{(\Phibf^\top)} \right\}^\top \Phibf \\
      &= \left\{ \Wbf^\top\Wbf^\top\dot\Wbf(\pinv{\Wbf})^\top(\pinv{\Phibf})^\top + \rho\Wbf^\top\Wbf(\pinv{\Phibf})^\top - \rho\Phibf \right\}^\top \Phibf \\
      &= \pinv{\Phibf}\pinv{\Wbf}\dot\Wbf^\top\Wbf^2\Phibf + \rho\pinv{\Phibf}\Wbf^\top\Wbf\Phibf - \rho\Phibf^\top\Phibf \\
      &= \pinv{\Phibf}\pinv{\Wbf}\{\E[\zbf'\omegabf^\top - (\omegabf^\top\zbf')\omegabf\omegabf^\top] - \rho\Wbf\Wbf^\top\}\Wbf\Phibf + \rho\pinv{\Phibf}\Wbf^\top\Wbf\Phibf - \rho\Phibf^\top\Phibf \\
      &= \pinv{\Phibf}\pinv{\Wbf} \E[\zbf'\omegabf^\top - (\omegabf^\top\zbf')\omegabf\omegabf^\top] \Wbf\Phibf - \rho\Phibf^\top\Phibf,
    \end{aligned}
    \label{equation:proof:dot_phit_phi}
  \end{equation}
  whose Frobenius norm shall be bounded from above subsequently:
  \begin{equation*}
      \norm{\dot\Phibf^\top\Phibf}_\frob
      \le \E\norm{\pinv{\Phibf}\pinv{\Wbf}(\zbf'\omegabf^\top)\Wbf\Phibf}_\frob + \E\norm{\pinv{\Phibf}\pinv{\Wbf}(\omegabf\omegabf^\top)\Wbf\Phibf}_\frob + \rho\norm{\Phibf^\top\Phibf}_\frob.
  \end{equation*}
  Note that we use $\abs{\omegabf^\top\zbf'} \le 1$ because $\omegabf, \zbf' \in \Sbb^{h-1}$ in this bound.
  The norm $\norm{\pinv{\Phibf}\pinv{\Wbf}(\zbf'\omegabf^\top)\Wbf\Phibf}_\frob$ is bounded as follows:
  \begin{equation}
    \begin{aligned}
      \norm{\pinv{\Phibf}\pinv{\Wbf}(\zbf'\omegabf^\top)\Wbf\Phibf}_\frob^2
      &= \inpr{\pinv{\Phibf}\pinv{\Wbf}(\zbf'\omegabf^\top)\Wbf\Phibf}{\pinv{\Phibf}\pinv{\Wbf}(\zbf'\omegabf^\top)\Wbf\Phibf}_\frob \\
      &= \tr(\Phibf^\top\Wbf^\top\omegabf(\zbf')^\top(\pinv{\Wbf})^\top(\pinv{\Phibf})^\top\pinv{\Phibf}\pinv{\Wbf}\zbf'\omegabf^\top\Wbf\Phibf) \\
      &\overset{\text{(*)}}= \tr(\omegabf(\zbf')^\top(\pinv{\Wbf})^\top(\pinv{\Phibf})^\top\pinv{\Phibf}\pinv{\Wbf}\zbf'\omegabf^\top\Wbf\Phibf\Phibf^\top\Wbf^\top) \\
      &\le \abs{\tr(\omegabf(\zbf')^\top)} \cdot \abs{\tr((\pinv{\Wbf})^\top(\pinv{\Phibf})^\top\pinv{\Phibf}\pinv{\Wbf}\zbf'\omegabf^\top\Wbf\Phibf\Phibf^\top\Wbf^\top)} \\
      &\overset{\text{(*)}}= \abs{\tr(\omegabf(\zbf')^\top)} \cdot \abs{\tr(\zbf'\omegabf^\top)} \\
      &= \norm{\omegabf}_2 \norm{\zbf'}_2 \norm{\zbf'}_2 \norm{\omegabf}_2 \\
      &\le 1,
    \end{aligned}
    \label{equation:proof:bound_gradient_norm}
  \end{equation}
  where the cyclic property of the trace $\tr(\Abf\Bbf\Cbf) = \tr(\Bbf\Cbf\Abf)$ is used at the two identities (*).
  Because \cref{equation:proof:bound_gradient_norm} relies solely on $\zbf', \omegabf \in \Sbb^{h-1}$,
  the same reasoning induces the upper bound $\norm{\pinv{\Phibf}\pinv{\Wbf}(\omegabf\omegabf^\top)\Wbf\Phibf}_\frob \le 1$.
  By plugging everything back to \cref{equation:proof:time_evolution_phi_norm},
  we obtain the following integral inequality for the norm $\norm{\Phibf^\top\Phibf(t)}_\frob$:
  \begin{equation}
    \norm{\Phibf^\top\Phibf(t)}_\frob \le \norm{\Phibf^\top\Phibf(0)}_\frob + 4t + 2\rho\int_0^t \norm{\Phibf^\top\Phibf(\tau)}_\frob \rd{\tau}.
    \label{equation:proof:integral_inequality_phi_norm}
  \end{equation}
  The form of \cref{equation:proof:integral_inequality_phi_norm} satisfies the assumption of the Gr\"{o}nwall--Bellman inequality (\cref{theorem:gronwall_bellman})
  with which the norm upper bound $\norm{\Phibf^\top\Phibf(t)}_\frob \le (\norm{\Phibf^\top\Phibf(0)}_\frob + 4t) \exp(2\rho t)$ is derived.
\end{proof}

\begin{lemma}
  \label{lemma:integral_inequality_spectral_norm_phi}
  For any $t$, $\norm{\Phibf(t)} \le \sqrt{(\norm{\Phibf^\top\Phibf(0)} + 4t) \exp(2\rho t)}$.
\end{lemma}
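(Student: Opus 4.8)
The plan is to mirror the proof of \cref{lemma:integral_inequality_frobenius_norm_phi} verbatim, simply replacing the Frobenius norm by the spectral norm throughout. Since $\norm{\Phibf(t)}^2 = \norm{\Phibf^\top\Phibf(t)}$, it suffices to prove the spectral-norm estimate $\norm{\Phibf^\top\Phibf(t)} \le (\norm{\Phibf^\top\Phibf(0)} + 4t)\exp(2\rho t)$ and then take square roots.

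First I would use the fundamental theorem of calculus to write $\Phibf^\top\Phibf(t) = \Phibf^\top\Phibf(0) + \int_0^t \bigl(\dot\Phibf^\top\Phibf(\tau) + \Phibf^\top\dot\Phibf(\tau)\bigr)\rd{\tau}$, and then apply the triangle inequality for the spectral norm of a matrix-valued integral together with the transpose-invariance $\norm{\Phibf^\top\dot\Phibf} = \norm{\dot\Phibf^\top\Phibf}$ to get $\norm{\Phibf^\top\Phibf(t)} \le \norm{\Phibf^\top\Phibf(0)} + 2\int_0^t \norm{\dot\Phibf^\top\Phibf(\tau)}\rd{\tau}$, exactly as in \eqref{equation:proof:time_evolution_phi_norm}. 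Next I would invoke the identity \eqref{equation:proof:dot_phit_phi}, namely $\dot\Phibf^\top\Phibf = \pinv{\Phibf}\pinv{\Wbf}\E[\zbf'\omegabf^\top - (\omegabf^\top\zbf')\omegabf\omegabf^\top]\Wbf\Phibf - \rho\Phibf^\top\Phibf$, and bound the spectral norm of the two stochastic terms. The key point is that $\pinv{\Phibf}\pinv{\Wbf}(\zbf'\omegabf^\top)\Wbf\Phibf$ and $\pinv{\Phibf}\pinv{\Wbf}(\omegabf\omegabf^\top)\Wbf\Phibf$ have rank at most one, so their spectral norms equal their Frobenius norms, and the computation \eqref{equation:proof:bound_gradient_norm} — which only used $\zbf',\omegabf \in \Sbb^{h-1}$ and $\abs{\omegabf^\top\zbf'} \le 1$ — shows each is $\le 1$. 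Hence $\norm{\dot\Phibf^\top\Phibf} \le 2 + \rho\norm{\Phibf^\top\Phibf}$, and substituting back gives the integral inequality $\norm{\Phibf^\top\Phibf(t)} \le \norm{\Phibf^\top\Phibf(0)} + 4t + 2\rho\int_0^t \norm{\Phibf^\top\Phibf(\tau)}\rd{\tau}$.

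Finally I would apply the Gr\"onwall--Bellman inequality (\cref{theorem:gronwall_bellman}) with $\alpha(t) = \norm{\Phibf^\top\Phibf(0)} + 4t$ (non-decreasing) and $\beta \equiv 2\rho$ (non-negative), yielding $\norm{\Phibf^\top\Phibf(t)} \le (\norm{\Phibf^\top\Phibf(0)} + 4t)\exp(2\rho t)$, and then take the square root to conclude. I do not expect a genuine obstacle: the only things to be careful about are that the triangle inequality applies equally to the spectral norm of an integral (true for any norm) and that the rank-one structure of the two stochastic matrices lets the earlier Frobenius bound transfer without change; everything else is routine and already carried out in the proof of \cref{lemma:integral_inequality_frobenius_norm_phi}.
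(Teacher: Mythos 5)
Your proposal matches the paper's proof essentially verbatim: the fundamental theorem of calculus, the bound $\norm{\dot\Phibf^\top\Phibf} \le 2 + \rho\norm{\Phibf^\top\Phibf}$ obtained by transferring the computation from \cref{lemma:integral_inequality_frobenius_norm_phi}, and the Gr\"onwall--Bellman inequality. Your added observation that the two stochastic terms are rank-one (so spectral and Frobenius norms coincide and the earlier bound of $1$ carries over) is a correct and slightly more explicit justification of the step the paper dispatches with ``it is not difficult to see.''
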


\begin{proof}[Proof of \cref{lemma:integral_inequality_spectral_norm_phi}]
  We evaluate $\norm{\Phibf^\top\Phibf(t)} = \norm{\Phibf(t)}^2$.
  By the fundamental theorem of calculus, we obtain the following decomposition:
  \begin{equation*}
    \norm{\Phibf^\top\Phibf(t)}
    \le \norm{\Phibf^\top\Phibf(0)} + 2 \int_0^t \norm{\dot\Phibf^\top\Phibf(\tau)}\rd{\tau}.
  \end{equation*}
  By following the same derivation as the proof of \cref{lemma:integral_inequality_frobenius_norm_phi}, it is not difficult to see $\norm{\dot\Phibf^\top\Phibf} \le 2 + \rho\norm{\Phibf^\top\Phibf}$.
  Then, $\norm{\Phibf^\top\Phibf(t)} \le \norm{\Phibf^\top\Phibf(0)} + 4t + 2\rho\int_0^t\norm{\Phibf^\top\Phibf(\tau)}\rd{\tau}$.
  This integral inequality can be solved via \cref{theorem:gronwall_bellman},
  and we have $\norm{\Phibf^\top\Phibf(t)} \le (\norm{\Phibf^\top\Phibf(0)} + 4t)\exp(2\rho t)$.
\end{proof}

\begin{lemma}
  \label{lemma:integral_inequality_tr_w}
  For $\Wbf \in \sym_h$,
  for any $t$,
  \[
    \tr(\Wbf^\top\Wbf(t)) \le (\tr(\Wbf^\top\Wbf(0)) + 4t)\exp(2\rho t).
  \]
\end{lemma}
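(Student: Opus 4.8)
The plan is to mirror the argument used for \cref{lemma:integral_inequality_frobenius_norm_phi}, except that the computation is more direct here because \cref{lemma:dynamics_matrix_expected} already gives $\Wbf^\top\dot\Wbf$ in closed form, so no pseudo-inverses are needed. First I would note that $\tr(\Wbf^\top\Wbf(t)) = \norm{\Wbf(t)}_\frob^2$ and, by the fundamental theorem of calculus together with the product rule and the invariance of the trace under transposition,
\[
  \tr(\Wbf^\top\Wbf(t)) = \tr(\Wbf^\top\Wbf(0)) + \int_0^t \frac{\rd}{\rd\tau}\tr(\Wbf^\top\Wbf(\tau))\,\rd\tau, \qquad \frac{\rd}{\rd\tau}\tr(\Wbf^\top\Wbf) = 2\tr(\Wbf^\top\dot\Wbf) .
\]
Then I would substitute the $\Wbf$-dynamics $\Wbf^\top\dot\Wbf = \Hbf - \rho\Wbf\Wbf^\top$ from \cref{lemma:dynamics_matrix_expected} and use $\tr(\Wbf\Wbf^\top) = \tr(\Wbf^\top\Wbf)$, which gives $\frac{\rd}{\rd\tau}\tr(\Wbf^\top\Wbf) = 2\tr(\Hbf) - 2\rho\tr(\Wbf^\top\Wbf)$.

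The next step is to control $\tr(\Hbf)$. Recalling $\Hbf = \E[\zbf'\omegabf^\top - (\omegabf^\top\zbf')\omegabf\omegabf^\top]$ with $\zbf', \omegabf \in \Sbb^{h-1}$, I would use $\tr(\zbf'\omegabf^\top) = \omegabf^\top\zbf'$ and $\tr(\omegabf\omegabf^\top) = \norm{\omegabf}_2^2 = 1$; in fact this shows $\tr(\Hbf) = 0$ exactly, but for the stated bound it suffices to observe $\abs{\tr(\Hbf)} \le \E\abs{\omegabf^\top\zbf'} + \E\abs{\omegabf^\top\zbf'} \le 2$ by Cauchy--Schwarz on the unit sphere. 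Consequently $\bigl|\tfrac{\rd}{\rd\tau}\tr(\Wbf^\top\Wbf)\bigr| \le 4 + 2\rho\tr(\Wbf^\top\Wbf)$, and integrating over $[0,t]$ yields the integral inequality
\[
  \tr(\Wbf^\top\Wbf(t)) \le \tr(\Wbf^\top\Wbf(0)) + 4t + 2\rho\int_0^t \tr(\Wbf^\top\Wbf(\tau))\,\rd\tau .
\]

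Finally I would invoke the Gr\"{o}nwall--Bellman inequality (\cref{theorem:gronwall_bellman}) with $\alpha(t) = \tr(\Wbf^\top\Wbf(0)) + 4t$ (non-decreasing) and $\beta(s) \equiv 2\rho$ (non-negative), obtaining exactly $\tr(\Wbf^\top\Wbf(t)) \le (\tr(\Wbf^\top\Wbf(0)) + 4t)\exp(2\rho t)$. There is no serious obstacle: the only points needing a little care are the factor of $2$ from differentiating a quadratic form in $\Wbf$ (where \cref{assumption:w_symmetric} makes $\Wbf^\top\Wbf = \Wbf^2$, though it is not essential for the trace identity), the justification that differentiation passes through the expectation defining $\Hbf$, and the elementary bound $\abs{\tr(\Hbf)}\le 2$; all are routine. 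The exact identity $\tr(\Hbf)=0$ would even give the sharper $\tr(\Wbf^\top\Wbf(t)) \le \tr(\Wbf^\top\Wbf(0))\exp(-2\rho t)$, but the looser form is kept for uniformity with \cref{lemma:integral_inequality_frobenius_norm_phi,lemma:integral_inequality_spectral_norm_phi}.
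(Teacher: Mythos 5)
Your proof is correct and follows essentially the same route as the paper's: the fundamental theorem of calculus, substitution of the $\Wbf$-dynamics from \cref{lemma:dynamics_matrix_expected}, the bound $\tr(\Hbf)\le 2$ via the rank-one trace identities on the unit sphere, and the Gr\"{o}nwall--Bellman inequality. Your side observation that $\tr(\Hbf)=0$ exactly (since $\tr(\zbf'\omegabf^\top)=\omegabf^\top\zbf'=(\omegabf^\top\zbf')\norm{\omegabf}_2^2$) is also correct and would indeed yield the sharper decay $\tr(\Wbf^\top\Wbf(0))e^{-2\rho t}$, but the paper, like you, settles for the looser bound.
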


\begin{proof}[Proof of \cref{lemma:integral_inequality_tr_w}]
  By the fundamental theorem of calculus, we obtain the following decomposition:
  \begin{equation*}
    \tr(\Wbf^\top\Wbf(t))
    \le \tr(\Wbf^\top\Wbf(0)) + 2\int_0^t\tr(\Wbf^\top\dot\Wbf(\tau))\rd{\tau}.
  \end{equation*}
  By using the dynamics in \cref{lemma:dynamics_matrix_expected}, we further obtain the bound of $\tr(\Wbf^\top\dot\Wbf)$:
  \begin{equation*}
    \begin{aligned}
      \tr(\Wbf^\top\dot\Wbf)
      &= \tr\left(\E[\zbf'\omegabf^\top - (\omegabf^\top\zbf')\omegabf\omegabf^\top] - \rho\Wbf\Wbf^\top\right) \\
      &\le \E\tr(\zbf'\omegabf^\top) + \E\tr(\omegabf\omegabf^\top) + \rho\tr(\Wbf\Wbf^\top) \\
      &\le 2 + \rho\tr(\Wbf^\top\Wbf),
    \end{aligned}
  \end{equation*}
  where the trace evaluation of rank-$1$ matrices and the symmetry of $\Wbf$ are used.
  Hence, we obtain the following integral inequality:
  \begin{equation*}
    \tr(\Wbf^\top\Wbf(t)) \le \tr(\Wbf^\top\Wbf(0)) + 4t + 2\rho\int_0^t\tr(\Wbf^\top\Wbf(\tau))\rd{\tau},
  \end{equation*}
  which is the same form as the integral inequality in \cref{equation:proof:integral_inequality_phi_norm},
  and can be solved in the same way.
\end{proof} 

\begin{lemma}
  \label{lemma:integral_inequality_l2_norm_phix}
  For any $t$, $\norm{\Phibf^\top\Phibf(t)\xbf_0}_2^2 \le (\norm{\Phibf^\top\Phibf(0)\xbf_0}_2^2 + 4\norm{\xbf_0}_2^2t)\exp(2\rho t)$.
\end{lemma}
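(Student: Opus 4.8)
The plan is to follow the template of the preceding helper lemmas, e.g.\ \cref{lemma:integral_inequality_frobenius_norm_phi} and especially \cref{lemma:integral_inequality_tr_w} (which already handles a \emph{squared} quantity with an $\exp(2\rho t)$ bound): differentiate the quantity of interest along the dynamics of \cref{lemma:dynamics_matrix_expected}, bound its time derivative by an affine function of the quantity itself---with an additive term of order $\norm{\xbf_0}_2^2$ and a multiplicative term of order $\rho$---and then close the resulting integral inequality with the Gr\"onwall--Bellman inequality (\cref{theorem:gronwall_bellman}). Concretely, write $\pbf(t) \defeq \Phibf^\top\Phibf(t)\xbf_0 \in \Rbb^d$ and $q(t) \defeq \norm{\pbf(t)}_2^2$. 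By the fundamental theorem of calculus,
\[
  q(t) = q(0) + 2\int_0^t \inpr{\pbf(\tau)}{\dot\pbf(\tau)}\,\rd\tau,
  \qquad
  \dot\pbf = \bigl(\dot\Phibf^\top\Phibf + \Phibf^\top\dot\Phibf\bigr)\xbf_0 ,
\]
so the task reduces to controlling $\dot\pbf$ (equivalently $\inpr{\pbf}{\dot\pbf}$) in terms of $\norm{\xbf_0}_2$ and $q(t)$.

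For the derivative I would substitute the closed form for $\dot\Phibf^\top\Phibf$ already obtained in \cref{equation:proof:dot_phit_phi} inside the proof of \cref{lemma:integral_inequality_frobenius_norm_phi}, namely $\dot\Phibf^\top\Phibf = \pinv{\Phibf}\pinv{\Wbf}\Hbf\Wbf\Phibf - \rho\Phibf^\top\Phibf$, together with its transpose for $\Phibf^\top\dot\Phibf$ (using the symmetry of $\Wbf$, \cref{assumption:w_symmetric}). The key input is the operator-norm estimate already available from \cref{equation:proof:bound_gradient_norm}: each rank-one piece obeys $\norm{\pinv{\Phibf}\pinv{\Wbf}(\zbf'\omegabf^\top)\Wbf\Phibf}_\frob \le 1$, and likewise with $\omegabf\omegabf^\top$ in place of $\zbf'\omegabf^\top$. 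Combined with $\abs{\omegabf^\top\zbf'} \le 1$ (since $\omegabf,\zbf'\in\Sbb^{h-1}$) and pulling the expectation out of the norm, this gives $\norm{\pinv{\Phibf}\pinv{\Wbf}\Hbf\Wbf\Phibf}_\frob \le 2$; since the spectral norm is dominated by the Frobenius norm and is transpose-invariant, we obtain $\norm{\dot\Phibf^\top\Phibf\,\xbf_0}_2 \le 2\norm{\xbf_0}_2 + \rho\norm{\Phibf^\top\Phibf\xbf_0}_2$ and the same bound for $\norm{\Phibf^\top\dot\Phibf\,\xbf_0}_2$, hence $\norm{\dot\pbf}_2 \le 4\norm{\xbf_0}_2 + 2\rho\sqrt{q}$. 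Feeding this back through Cauchy--Schwarz, together with an elementary $2ab \le a^2 + b^2$ step to absorb the cross term, one is led to an integral inequality of the form $q(t) \le q(0) + 4\norm{\xbf_0}_2^2 t + 2\rho\int_0^t q(\tau)\,\rd\tau$, to which \cref{theorem:gronwall_bellman} (with $\alpha(t) = q(0) + 4\norm{\xbf_0}_2^2 t$ and $\beta \equiv 2\rho$) applies and delivers the claimed bound $q(t) \le (q(0) + 4\norm{\xbf_0}_2^2 t)\exp(2\rho t)$.

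The main obstacle is the bookkeeping around the Moore--Penrose pseudoinverses: one must verify that the prefactor $\pinv{\Phibf}\pinv{\Wbf}$ is cancelled (up to projections) by the trailing $\Wbf\Phibf$---exactly the trace-cyclic manipulation performed in \cref{equation:proof:bound_gradient_norm}---so that no factor involving $\norm{\pinv{\Wbf}}$ or $\norm{\pinv{\Phibf}}$, which are not uniformly controlled along the flow, survives; this is precisely what makes the estimate on $\norm{\dot\pbf}_2$ possible. A secondary and more cosmetic point is to arrange the $2ab$ split so that the integral inequality acquires exactly the Gr\"onwall-admissible shape reproducing the stated constants $4$ and $2\rho$ rather than a cruder exponent; as in \cref{lemma:integral_inequality_tr_w}, it may be cleaner to bound $\inpr{\pbf}{\dot\pbf}$ directly (exploiting that the $-\rho\Phibf^\top\Phibf$ contribution produces a $-\rho q$ term) rather than to route everything through $\norm{\dot\pbf}_2$ via Cauchy--Schwarz.
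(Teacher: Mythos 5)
Your high-level plan matches the paper's almost exactly: the proof relies on the fundamental theorem of calculus applied to $q(t)\defeq\norm{\Phibf^\top\Phibf(t)\xbf_0}_2^2$, substitutes the closed form $\dot\Phibf^\top\Phibf = \pinv{\Phibf}\pinv{\Wbf}\Hbf\Wbf\Phibf - \rho\Phibf^\top\Phibf$ from \cref{equation:proof:dot_phit_phi}, uses the same trace-cyclic cancellation of $\pinv{\Phibf}\pinv{\Wbf}$ against $\Wbf\Phibf$ (the $(*)$/$(\natural)$ chain analogous to \cref{equation:proof:bound_gradient_norm}) so that no uncontrolled $\norm{\pinv\Phibf}$ or $\norm{\pinv\Wbf}$ survives, and then closes with the Gr\"onwall--Bellman inequality. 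Your identification of the ``main obstacle'' (the pseudoinverse bookkeeping) is exactly the crux of the paper's estimate.

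Where your proposal has a genuine gap is in the closing step. You bound $\norm{\dot\pbf}_2 \le 4\norm{\xbf_0}_2 + 2\rho\sqrt{q}$ and claim that Cauchy--Schwarz plus ``$2ab\le a^2+b^2$'' leads to $q(t)\le q(0) + 4\norm{\xbf_0}_2^2 t + 2\rho\int_0^t q$. Tracing this through, one has $\dot q = 2\inpr{\pbf}{\dot\pbf}\le 2\norm{\pbf}_2\norm{\dot\pbf}_2 \le 8\norm{\xbf_0}_2\sqrt{q} + 4\rho q$, and any AM--GM split of the cross term $8\norm{\xbf_0}_2\sqrt q \le c\norm{\xbf_0}_2^2 + c' q$ necessarily contributes an extra $c'\int q$ with $c'$ bounded away from zero \emph{independently of $\rho$}: choosing $c=4$ forces $c'=4$, giving $\dot q \le 4\norm{\xbf_0}_2^2 + (4+4\rho)q$ and hence an exponent $e^{(4+4\rho)t}$, not $e^{2\rho t}$; shrinking $c'$ inflates $c$. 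So the integral inequality you state does \emph{not} follow from the route you sketch. Your closing remark---that one should perhaps bound $\inpr{\pbf}{\dot\pbf}$ directly and exploit the $-\rho q$ from the weight-decay term---correctly diagnoses the awkwardness, but does not resolve it, and the $-\rho q$ term is in fact too small to absorb the Cauchy--Schwarz cross term for small $\rho$. The paper sidesteps the cross term by bounding the squared norms $\norm{\dot\Phibf^\top\Phibf\xbf_0}_2^2$ and $\norm{\Phibf^\top\dot\Phibf\xbf_0}_2^2$ directly inside the time integral and summing them, never routing through $\norm{\pbf}_2\norm{\dot\pbf}_2$. (To be fair to you, the paper's own opening inequality and the passage $\norm{\dot\Phibf^\top\Phibf\xbf_0}_2^2 \le 2\norm{\xbf_0}_2^2 + \rho\,q$ are themselves stated rather loosely and a careful reading reveals a similar dropped $q$-proportional contribution, so the constants $4$ and $2\rho$ should be treated with caution; but as written, your path introduces a concretely identifiable extra linear term that prevents reaching the stated exponent.)
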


\begin{proof}[Proof of \cref{lemma:integral_inequality_l2_norm_phix}]
  First, we obtain
  \begin{equation*}
    \norm{\Phibf^\top\Phibf(t)\xbf_0}_2^2
    \le \norm{\Phibf^\top\Phibf(0)\xbf_0}_2^2 + \int_0^t\norm{\dot\Phibf^\top\Phibf(\tau)\xbf_0}_2^2\rd{\tau} + \int_0^t\norm{\Phibf^\top\dot\Phibf(\tau)\xbf_0}_2^2\rd{\tau},
  \end{equation*}
  which is obtained in the same manner as \cref{equation:proof:time_evolution_phi_norm} (in the proof of \cref{lemma:integral_inequality_frobenius_norm_phi}).
  We substitute the dynamics (\cref{lemma:dynamics_matrix_expected}), or \cref{equation:proof:dot_phit_phi} in the proof of \cref{lemma:integral_inequality_frobenius_norm_phi}, and simplify $\norm{\dot\Phibf^\top\Phibf(\tau)\xbf_0}_2^2$ as follows:
  \begin{equation*}
    \begin{aligned}
      \norm{\dot\Phibf^\top\Phibf\xbf_0}_2^2
      &= \norm{\pinv{\Phibf}\pinv{\Wbf}\E[\zbf'\omegabf^\top - (\omegabf^\top\zbf')\omegabf\omegabf^\top]\Wbf\Phibf\xbf_0 - \rho\Phibf^\top\Phibf\xbf_0}_2^2 \\
      &\le \E\norm{\pinv{\Phibf}\pinv{\Wbf}(\zbf'\omegabf^\top)\Wbf\Phibf\xbf_0}_2^2 + \E\norm{\pinv{\Phibf}\pinv{\Wbf}(\omegabf\omegabf^\top)\Wbf\Phibf\xbf_0}_2^2 \\ &\phantom{\le} + \rho \norm{\Phibf^\top\Phibf\xbf_0}_2^2,
    \end{aligned}
  \end{equation*}
  where $\abs{\omegabf^\top\zbf'} \le 1$ is used.
  The first term is bounded as follows:
  \begin{equation*}
    \begin{aligned}
      &\!\!\norm{\pinv{\Phibf}\pinv{\Wbf}(\zbf'\omegabf^\top)\Wbf\Phibf\xbf_0}_2^2 \\
      &= \tr\left(\pinv{\Phibf}\pinv{\Wbf}(\zbf'\omegabf^\top)\Wbf\Phibf\xbf_0\xbf_0^\top\Phibf^\top\Wbf^\top(\omegabf(\zbf')^\top)(\pinv{\Wbf})^\top(\pinv{\Phibf})^\top\right) \\
      &\overset{\text{(*)}}= \tr\left((\zbf'\omegabf^\top)\Wbf\Phibf\xbf_0\xbf_0^\top\Phibf^\top\Wbf^\top(\omegabf(\zbf')^\top)(\pinv{\Wbf})^\top(\pinv{\Phibf})^\top\pinv{\Phibf}\pinv{\Wbf}\right) \\
      &\overset{(\natural)}\le \abs{\tr\left(\Wbf\Phibf\xbf_0\xbf_0^\top\Phibf^\top\Wbf^\top(\omegabf(\zbf')^\top)(\pinv{\Wbf})^\top(\pinv{\Phibf})^\top\pinv{\Phibf}\pinv{\Wbf}\right)} \\
      &\overset{\text{(*)}}= \abs{\tr\left((\omegabf(\zbf')^\top)(\pinv{\Wbf})^\top(\pinv{\Phibf})^\top\pinv{\Phibf}\pinv{\Wbf}\Wbf\Phibf\xbf_0\xbf_0^\top\Phibf^\top\Wbf^\top\right)} \\
      &\overset{(\natural)}\le \abs{\tr\left((\pinv{\Wbf})^\top(\pinv{\Phibf})^\top\pinv{\Phibf}\pinv{\Wbf}\Wbf\Phibf\xbf_0\xbf_0^\top\Phibf^\top\Wbf^\top\right)} \\
      &\overset{\text{(*)}}= \abs{\tr\left(\pinv{\Phibf}\pinv{\Wbf}\Wbf\Phibf\xbf_0\xbf_0^\top\right)} \\
      &\le \abs{\tr\left(\pinv{\Phibf}\pinv{\Wbf}\Wbf\Phibf\right) \cdot \tr\left(\xbf_0\xbf_0^\top\right)} \\
      &\overset{\text{(*)}}= \abs{\tr\left(\xbf_0\xbf_0^\top\right)} \\
      &= \norm{\xbf_0}_2^2,
    \end{aligned}
  \end{equation*}
  where we use the trace cyclic property at (*),
  and the Cauchy-Schwartz inequality and the trace property $\abs{\tr(\zbf\omegabf^\top)} = \abs{\omegabf^\top\zbf} \le 1$ for $\zbf, \omegabf \in \Sbb^{h-1}$ at ($\natural$).
  Similarly, $\norm{\pinv{\Phibf}\pinv{\Wbf}(\omegabf\omegabf^\top)\Wbf\Phibf\xbf_0}_2^2 \le \norm{\xbf_0}_2^2$.
  Thus, we have $\norm{\dot\Phibf^\top\Phibf\xbf_0}_2^2 \le 2\norm{\xbf_0}_2^2 + \rho\norm{\Phibf^\top\Phibf\xbf_0}_2^2$.
  By doing the same algebra again, we have $\norm{\Phibf^\top\dot\Phibf\xbf_0}_2^2 \le 2\norm{\xbf_0}_2^2 + \rho\norm{\Phibf^\top\Phibf\xbf_0}_2^2$ as well.
  By combining them,
  \begin{equation*}
    \norm{\Phibf^\top\Phibf(t)\xbf_0}_2^2 \le \norm{\Phibf^\top\Phibf(0)\xbf_0}_2^2 + 4\norm{\xbf_0}_2^2t + 2\rho\int_0^t\norm{\Phibf^\top\Phibf(\tau)\xbf_0}_2^2 \rd{\tau}
  \end{equation*}
  holds, to which the Gr\"{o}nwall--Bellman inequality (\cref{theorem:gronwall_bellman}) can be used,
  and we deduce $\norm{\Phibf^\top\Phibf(t)\xbf_0}_2^2 \le (\norm{\Phibf^\top\Phibf(0)\xbf_0}_2^2 + 4\norm{\xbf_0}_2^2t)\exp(2\rho t)$. 
\end{proof}

\begin{lemma}
  \label{lemma:integral_inequality_frobenius_norm_phiw}
  For $\Wbf \in \sym_h$,
  for any $t$, the following bound holds:
  \begin{equation*}
    \begin{aligned}
      &\norm{\Phibf^\top\Wbf^\top\Wbf\Phibf(t)}_\frob \\
      &\le \left\{ \norm{\Phibf^\top\Wbf^\top\Wbf\Phibf(0)}_\frob + \frac{16\rho te^{2\rho t} + (2\rho I_0 - 8)(e^{2\rho t} - 1)}{\rho^2} \right\}e^{4\rho t},
    \end{aligned}
  \end{equation*}
  where $I_0 \defeq \tr(\Wbf^\top\Wbf(0)) + \norm{\Phibf^\top\Phibf(0)}_\frob$.
\end{lemma}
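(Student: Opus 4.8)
The plan is to derive a Gr\"onwall--Bellman-type integral inequality for $\norm{\Phibf^\top\Wbf^\top\Wbf\Phibf(t)}_\frob$ and close it with \cref{theorem:gronwall_bellman}, exactly in the spirit of \cref{lemma:integral_inequality_frobenius_norm_phi,lemma:integral_inequality_tr_w}. Put $\Psibf \defeq \Wbf\Phibf$; since $\Wbf = \Wbf^\top$, we have $\Phibf^\top\Wbf^\top\Wbf\Phibf = \Psibf^\top\Psibf$. By the fundamental theorem of calculus and the triangle inequality,
\[
  \norm{\Psibf^\top\Psibf(t)}_\frob \le \norm{\Psibf^\top\Psibf(0)}_\frob + \int_0^t \norm{\tfrac{\rd}{\rd\tau}(\Psibf^\top\Psibf)}_\frob \rd\tau ,
\]
and because $\tfrac{\rd}{\rd\tau}(\Psibf^\top\Psibf) = \dot\Psibf^\top\Psibf + \Psibf^\top\dot\Psibf$ is a sum of two mutually transposed matrices, $\norm{\tfrac{\rd}{\rd\tau}(\Psibf^\top\Psibf)}_\frob \le 2\norm{\Psibf^\top\dot\Psibf}_\frob$. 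Writing $\dot\Psibf = \dot\Wbf\Phibf + \Wbf\dot\Phibf$ gives $\Psibf^\top\dot\Psibf = \Phibf^\top\Wbf^\top\dot\Wbf\Phibf + \Phibf^\top\Wbf^\top\Wbf\dot\Phibf$, and I would bound the two summands separately using the matrix dynamics of \cref{lemma:dynamics_matrix_expected}.

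For the first summand, $\Wbf^\top\dot\Wbf = \Hbf - \rho\Wbf\Wbf^\top$ gives $\Phibf^\top\Wbf^\top\dot\Wbf\Phibf = \Phibf^\top\Hbf\Phibf - \rho\Psibf^\top\Psibf$; since $\zbf',\omegabf \in \Sbb^{h-1}$, the rank-one structure of $\Hbf = \E[\zbf'\omegabf^\top - (\omegabf^\top\zbf')\omegabf\omegabf^\top]$ gives $\norm{\Phibf^\top\Hbf\Phibf}_\frob \le 2\norm{\Phibf}^2 \le 2\norm{\Phibf^\top\Phibf}_\frob$, so this summand is at most $2\norm{\Phibf^\top\Phibf}_\frob + \rho\norm{\Psibf^\top\Psibf}_\frob$. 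The second summand $\Phibf^\top\Wbf^\top\Wbf\dot\Phibf$ is the delicate one: the $\Phibf$-dynamics appears in \cref{lemma:dynamics_matrix_expected} only in the form $\dot\Phibf\Phibf^\top\Wbf^\top = \Wbf^\top\Hbf - \rho\Phibf\Phibf^\top\Wbf^\top$, so isolating $\dot\Phibf$ requires Moore--Penrose inverses, exactly as in the proof of \cref{lemma:integral_inequality_frobenius_norm_phi} (cf.\ \cref{equation:proof:dot_phit_phi,equation:proof:bound_gradient_norm}). The trace-cyclic cancellations used there keep the resulting bound free of any pseudoinverse condition number; since solving for $\dot\Phibf$ costs one extra factor of $\Wbf$, the surviving piece is absorbed via $\norm{\Wbf}^2 \le \tr(\Wbf^\top\Wbf)$, leaving $\norm{\Phibf^\top\Wbf^\top\Wbf\dot\Phibf}_\frob \le 2\tr(\Wbf^\top\Wbf) + \rho\norm{\Psibf^\top\Psibf}_\frob$. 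Combining, $\norm{\tfrac{\rd}{\rd\tau}(\Psibf^\top\Psibf)}_\frob \le 4\norm{\Phibf^\top\Phibf}_\frob + 4\tr(\Wbf^\top\Wbf) + 4\rho\norm{\Psibf^\top\Psibf}_\frob$.

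Finally, I would substitute the already-proven bounds $\norm{\Phibf^\top\Phibf(\tau)}_\frob \le (\norm{\Phibf^\top\Phibf(0)}_\frob + 4\tau)e^{2\rho\tau}$ (\cref{lemma:integral_inequality_frobenius_norm_phi}) and $\tr(\Wbf^\top\Wbf(\tau)) \le (\tr(\Wbf^\top\Wbf(0)) + 4\tau)e^{2\rho\tau}$ (\cref{lemma:integral_inequality_tr_w}), whose sum is at most $(I_0 + 8\tau)e^{2\rho\tau}$, obtaining
\[
  \norm{\Psibf^\top\Psibf(t)}_\frob \le \norm{\Psibf^\top\Psibf(0)}_\frob + \int_0^t 4(I_0 + 8\tau)e^{2\rho\tau}\rd\tau + 4\rho\int_0^t \norm{\Psibf^\top\Psibf(\tau)}_\frob \rd\tau .
\]
An integration by parts gives $\int_0^t 4(I_0 + 8\tau)e^{2\rho\tau}\rd\tau = \tfrac{16\rho t e^{2\rho t} + (2\rho I_0 - 8)(e^{2\rho t}-1)}{\rho^2}$, so this is of Gr\"onwall--Bellman form with non-decreasing driving term $\alpha(t) = \norm{\Psibf^\top\Psibf(0)}_\frob + \tfrac{16\rho t e^{2\rho t} + (2\rho I_0 - 8)(e^{2\rho t}-1)}{\rho^2}$ and multiplier $\beta \equiv 4\rho$; \cref{theorem:gronwall_bellman} then delivers the stated bound, the $e^{4\rho t}$ factor coming from $\exp(\int_0^t 4\rho\,\rd s)$. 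I expect the matrix-calculus estimate of the second summand $\Phibf^\top\Wbf^\top\Wbf\dot\Phibf$ to be the main obstacle: one must push the pseudoinverse manipulations far enough for the trace telescoping to cancel every ill-conditioned factor, and track constants so that the bound lines up with the definition of $I_0$.
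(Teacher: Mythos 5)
Your plan reproduces the paper's argument almost verbatim (up to transposition of the two summands): the same FTC split of $\tfrac{\rd}{\rd\tau}(\Psibf^\top\Psibf)$ into the $\dot\Wbf$-piece bounded by $2\norm{\Phibf^\top\Phibf}_\frob+\rho\norm{\Psibf^\top\Psibf}_\frob$ and the $\dot\Phibf$-piece bounded by $2\tr(\Wbf^\top\Wbf)+\rho\norm{\Psibf^\top\Psibf}_\frob$, the same substitution of \cref{lemma:integral_inequality_frobenius_norm_phi,lemma:integral_inequality_tr_w}, the same integration by parts giving $\tfrac{16\rho t e^{2\rho t}+(2\rho I_0-8)(e^{2\rho t}-1)}{\rho^2}$, and the same application of \cref{theorem:gronwall_bellman} with $\beta\equiv 4\rho$. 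The step you flagged as the likely obstacle---isolating $\dot\Phibf$ via Moore--Penrose inverses and then absorbing them through trace-cyclic cancellations and the rank-one bounds $\norm{\omegabf}_2=\norm{\zbf'}_2=1$, $\abs{\omegabf^\top\zbf'}\le 1$---is exactly how the paper proceeds (as in \cref{equation:proof:dot_phit_phi,equation:proof:bound_gradient_norm}), and it indeed closes with the bound $2\tr(\Wbf^\top\Wbf)+\rho\norm{\Phibf^\top\Wbf^\top\Wbf\Phibf}_\frob$ you anticipated, so no genuine gap remains.
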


\begin{proof}[Proof of \cref{lemma:integral_inequality_frobenius_norm_phiw}]
  By using the fundamental theorem of calculus and the triangular inequality, the Frobenius norm $\norm{\Phibf^\top\Wbf^\top\Wbf\Phibf(t)}_\frob$ is bounded:
  \begin{equation}
    \begin{aligned}
      &\norm{\Phibf^\top\Wbf^\top\Wbf\Phibf(t)}_\frob \\
      &\le \norm{\Phibf^\top\Wbf^\top\Wbf\Phibf(0)}_\frob + 2\int_0^t \norm{\diff{(\Wbf\Phibf)(\tau)}{\tau}^\top(\Wbf\Phibf)(\tau)}_\frob\rd{\tau} \\
      &\le \norm{\Phibf^\top\Wbf^\top\Wbf\Phibf(0)}_\frob + 2\underbrace{\int_0^t\norm{\dot\Phibf^\top\Wbf^\top\Wbf\Phibf(\tau)}_\frob\rd{\tau}}_\text{(A)} \\ &\phantom{\le} + 2\underbrace{\int_0^t\norm{\Phibf^\top\dot\Wbf^\top\Wbf\Phibf(\tau)}_\frob\rd{\tau}}_\text{(B)}.
    \end{aligned}
    \label{equation:proof:time_evolution_wphi_norm}
  \end{equation}
  To bound (A) in \cref{equation:proof:time_evolution_wphi_norm}, we proceed by plugging the dynamics (\cref{lemma:dynamics_matrix_expected}) in as follows:
  \begin{equation}
    \begin{aligned}
      &\!\!\norm{\dot\Phibf^\top\Wbf^\top\Wbf\Phibf}_\frob \\
      &= \norm{(\pinv{\Phibf}\pinv{\Wbf}\E[\omegabf(\zbf')^\top - (\omegabf^\top\zbf')\omegabf\omegabf^\top]\Wbf - \rho\Phibf^\top)\Wbf^\top\Wbf\Phibf}_\frob \\
      &\le \underbrace{\E\norm{\pinv{\Phibf}\pinv{\Wbf}(\omegabf(\zbf')^\top)\Wbf\Wbf^\top\Wbf\Phibf}_\frob}_{(\clubsuit)} + \underbrace{\E\norm{\pinv{\Phibf}\pinv{\Wbf}(\omegabf\omegabf^\top)\Wbf\Wbf^\top\Wbf\Phibf}_\frob}_{(\diamondsuit)} \\  
        &\phantom{\le} + \rho\norm{\Phibf^\top\Wbf^\top\Wbf\Phibf}_\frob.
    \end{aligned}
    \label{equation:proof:norm_dotphi_w_w_phi}
  \end{equation}
  We bound the squared ($\clubsuit$) in \cref{equation:proof:norm_dotphi_w_w_phi} as follows:
  \begin{equation*}
    \begin{aligned}
      & \norm{\pinv{\Phibf}\pinv{\Wbf}(\omegabf(\zbf')^\top)\Wbf\Wbf^\top\Wbf\Phibf}_\frob^2 \\
      &= \tr\left(\Phibf^\top\Wbf^\top\Wbf\Wbf^\top(\zbf'\omegabf^\top)(\pinv{\Wbf})^\top(\pinv{\Phibf})^\top \cdot \pinv{\Phibf}\pinv{\Wbf}(\omegabf(\zbf')^\top)\Wbf\Wbf^\top\Wbf\Phibf\right) \\
      &\overset{\text{(*$\natural$)}}\le \abs{\tr\left((\pinv{\Wbf})^\top(\pinv{\Phibf})^\top\pinv{\Phibf}\pinv{\Wbf}(\omegabf(\zbf')^\top)\Wbf\Wbf^\top\Wbf\Phibf \cdot \Phibf^\top\Wbf^\top\Wbf\Wbf^\top\right)} \\
      &\overset{\text{(*)}}= \abs{\tr\left((\pinv{\Phibf})^\top\pinv{\Phibf}\pinv{\Wbf}(\omegabf(\zbf')^\top)\Wbf\Wbf^\top\Wbf\Phibf\Phibf^\top\Wbf^\top\Wbf\right)} \\
      &\overset{\text{(*$\natural$)}}\le \abs{\tr\left(\Wbf\Wbf^\top\Wbf\Phibf\Phibf^\top\Wbf^\top\Wbf \cdot (\pinv{\Phibf})^\top\pinv{\Phibf}\pinv{\Wbf}\right)} \\
      &\overset{\text{(*)}}= \abs{\tr\left(\pinv{\Phibf}\Wbf^\top\Wbf\Phibf \cdot \Phibf^\top\Wbf^\top\Wbf(\pinv{\Phibf})^\top\right)} \\
      &\le \abs{\tr(\pinv{\Phibf}\Wbf^\top\Wbf\Phibf) \cdot \tr(\Phibf^\top\Wbf^\top\Wbf(\pinv{\Phibf})^\top)} \\
      &\overset{\text{(*)}}= \tr(\Wbf^\top\Wbf)^2,
    \end{aligned}
  \end{equation*}
  where we use the trace cyclic property at (*),
  and use the trace cyclic property, the Cauchy-Schwartz inequality, and the trace evaluation of rank-$1$ matrices at (*$\natural$), as we do in the proof of \cref{lemma:integral_inequality_l2_norm_phix}.
  By using the same techniques, the squared ($\diamondsuit$) in \cref{equation:proof:norm_dotphi_w_w_phi} can be bounded by $\tr(\Wbf^\top\Wbf)$ as well.
  Hence, we obtain the bound of \cref{equation:proof:norm_dotphi_w_w_phi} as
  $\norm{\dot\Phibf^\top\Wbf^\top\Wbf\Phibf}_\frob \le 2\tr(\Wbf^\top\Wbf) + \rho\norm{\Phibf^\top\Wbf^\top\Wbf\Phibf}_\frob$.
  To bound (B) in \cref{equation:proof:time_evolution_wphi_norm}, the dynamics (\cref{lemma:dynamics_matrix_expected}) is plugged in again:
  \begin{equation}
    \begin{aligned}
      \norm{\Phibf^\top\dot\Wbf^\top\Wbf\Phibf}_\frob
      &= \norm{\Phibf^\top\E[\omegabf(\zbf')^\top - (\omegabf^\top\zbf')\omegabf\omegabf^\top]\Phibf - \rho\Phibf^\top\Wbf\Wbf^\top\Phibf^\top} \\
      &\le \underbrace{\E\norm{\Phibf^\top(\omegabf(\zbf')^\top)\Phibf}_\frob}_{(\heartsuit)} + \underbrace{\E\norm{\Phibf^\top(\omegabf\omegabf^\top)\Phibf}_\frob}_{(\spadesuit)} + \rho\norm{\Phibf^\top\Wbf\Wbf^\top\Phibf}_\frob,
    \end{aligned}
    \label{equation:proof:norm_phi_dotw_w_phi}
  \end{equation}
  where the squared ($\heartsuit$) is bounded as follows:
  \begin{equation*}
    \begin{aligned}
      \norm{\Phibf^\top(\omegabf(\zbf')^\top)\Phibf}_\frob^2
      &= \tr\left(\Phibf^\top(\zbf'\omegabf^\top)\Phibf \cdot \Phibf^\top(\omegabf(\zbf')^\top)\Phibf\right) \\
      &\overset{(*\natural)}\le \abs{\tr\left(\Phibf\Phibf^\top(\omegabf(\zbf')^\top)\Phibf\Phibf^\top\right)} \\
      &\overset{(*\natural)}\le \abs{\tr\left(\Phibf\Phibf^\top\Phibf\Phibf^\top\right)} \\
      &= \norm{\Phibf\Phibf^\top}_\frob^2 \\
      &= \norm{\Phibf^\top\Phibf}_\frob^2.
    \end{aligned}
  \end{equation*}
  The squared ($\spadesuit$) is bounded by $\norm{\Phibf^\top\Phibf}_\frob$ as well.
  Hence, we obtain the bound of \cref{equation:proof:norm_phi_dotw_w_phi} as $\norm{\Phibf^\top\dot\Wbf^\top\Wbf\Phibf}_\frob \le 2\norm{\Phibf^\top\Phibf}_\frob + \rho\norm{\Phibf^\top\Wbf\Wbf^\top\Phibf}_\frob$.
  Eventually, we obtain the following bound from \cref{equation:proof:time_evolution_wphi_norm} (which requires the symmetry of $\Wbf$):
  \begin{equation*}
    \begin{aligned}
      \norm{\Phibf^\top\Wbf^\top\Wbf\Phibf(t)}_\frob
      &\le \norm{\Phibf^\top\Wbf^\top\Wbf\Phibf(0)}_\frob + 4\int_0^t\tr(\Wbf^\top\Wbf(\tau))\rd{\tau} \\
        &\phantom{\le} + 4\int_0^t\norm{\Phibf^\top\Phibf(\tau)}_\frob\rd{\tau} + 4\int_0^t\rho\norm{\Phibf^\top\Wbf^\top\Wbf\Phibf(\tau)}_\frob\rd{\tau} \\
      &\le \norm{\Phibf^\top\Wbf^\top\Wbf\Phibf(0)}_\frob + 4\int_0^t\rho\norm{\Phibf^\top\Wbf^\top\Wbf\Phibf(\tau)}_\frob\rd{\tau} \\
        &\phantom{\le} + 4\int_0^t \left\{ I_0\exp(2\rho\tau) + 8\tau\exp(2\rho\tau) \right\}\rd{\tau} \\
      &\le \norm{\Phibf^\top\Wbf^\top\Wbf\Phibf(0)}_\frob + 4\int_0^t\rho\norm{\Phibf^\top\Wbf^\top\Wbf\Phibf(\tau)}_\frob\rd{\tau} \\
        &\phantom{\le} + \frac{16\rho te^{2\rho t} + (2\rho I_0 - 8)(e^{2\rho t} - 1)}{\rho^2},
    \end{aligned}
  \end{equation*}
  where \cref{lemma:integral_inequality_frobenius_norm_phi,lemma:integral_inequality_tr_w} are used at the second inequality
  and integration by parts is used in the third inequality.
  This integral inequality can be solved by the Gr\"{o}nwall--Bellman inequality (\cref{theorem:gronwall_bellman}), and we can obtain the conclusion.
\end{proof}

\begin{lemma}
  \label{lemma:integral_inequality_spectral_norm_phiw}
  For $\Wbf \in \sym_h$,
  for any $t$, the following bound holds:
  \begin{equation*}
    \norm{\Wbf\Phibf(t)}
    \le \sqrt{\left\{ \norm{\Phibf^\top\Wbf^\top\Wbf\Phibf(0)} + \frac{16\rho te^{2\rho t} + (2\rho I_0 - 8)(e^{2\rho t} - 1)}{\rho^2} \right\}e^{4\rho t}},
  \end{equation*}
  where $I_0$ is defined in \cref{lemma:integral_inequality_frobenius_norm_phiw}.
\end{lemma}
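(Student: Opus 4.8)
The plan is to replay the proof of \cref{lemma:integral_inequality_frobenius_norm_phiw} with the Frobenius norm replaced everywhere by the spectral norm. First I would record the identity $\norm{\Wbf\Phibf(t)} = \sqrt{\norm{\Phibf^\top\Wbf^\top\Wbf\Phibf(t)}}$, so it suffices to bound the operator norm $\norm{\Phibf^\top\Wbf^\top\Wbf\Phibf(t)}$. Writing $\Psibf \defeq \Wbf\Phibf$ and applying the fundamental theorem of calculus together with the triangle inequality, one gets, exactly as in \cref{equation:proof:time_evolution_wphi_norm}, the decomposition $\norm{\Phibf^\top\Wbf^\top\Wbf\Phibf(t)} \le \norm{\Phibf^\top\Wbf^\top\Wbf\Phibf(0)} + 2\int_0^t \norm{\dot\Phibf^\top\Wbf^\top\Wbf\Phibf(\tau)}\rd{\tau} + 2\int_0^t \norm{\Phibf^\top\dot\Wbf^\top\Wbf\Phibf(\tau)}\rd{\tau}$.

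Next I would bound the two integrands. After substituting the dynamics of \cref{lemma:dynamics_matrix_expected}, every matrix that actually needs to be controlled --- $\pinv{\Phibf}\pinv{\Wbf}(\omegabf(\zbf')^\top)\Wbf\Wbf^\top\Wbf\Phibf$, its $\omegabf\omegabf^\top$ analogue, and $\Phibf^\top(\omegabf(\zbf')^\top)\Phibf$, its $\omegabf\omegabf^\top$ analogue --- is a rank-one matrix, namely an outer product of two vectors, since $\zbf'\omegabf^\top$ and $\omegabf\omegabf^\top$ are rank one. For rank-one matrices the spectral and Frobenius norms coincide, so the trace-cyclicity and Cauchy--Schwarz manipulations in \cref{lemma:integral_inequality_frobenius_norm_phiw} deliver the very same scalar bound $\tr(\Wbf^\top\Wbf)$ for the first pair of terms. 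For the second pair I would replace the Frobenius estimate $\norm{\Phibf^\top\Phibf}_\frob$ used there by the operator-norm estimate: since $\norm{\Phibf^\top(\omegabf(\zbf')^\top)\Phibf} = \norm{\Phibf^\top\omegabf}_2\,\norm{\Phibf^\top\zbf'}_2$ and $\norm{\Phibf^\top\vbf}_2^2 = \vbf^\top\Phibf\Phibf^\top\vbf \le \norm{\Phibf\Phibf^\top} = \norm{\Phibf^\top\Phibf}$ for any unit $\vbf \in \Sbb^{h-1}$ (in particular for $\omegabf,\zbf'$), this term is at most $\norm{\Phibf^\top\Phibf}$. Combining these, and using $\Wbf \in \sym_h$ so that $\Phibf^\top\Wbf\Wbf^\top\Phibf = \Phibf^\top\Wbf^\top\Wbf\Phibf$, yields $\norm{\dot\Phibf^\top\Wbf^\top\Wbf\Phibf} \le 2\tr(\Wbf^\top\Wbf) + \rho\norm{\Phibf^\top\Wbf^\top\Wbf\Phibf}$ and $\norm{\Phibf^\top\dot\Wbf^\top\Wbf\Phibf} \le 2\norm{\Phibf^\top\Phibf} + \rho\norm{\Phibf^\top\Wbf^\top\Wbf\Phibf}$, the exact operator-norm counterparts of the bounds in \cref{lemma:integral_inequality_frobenius_norm_phiw}.

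Plugging these into the decomposition gives $\norm{\Phibf^\top\Wbf^\top\Wbf\Phibf(t)} \le \norm{\Phibf^\top\Wbf^\top\Wbf\Phibf(0)} + 4\int_0^t \tr(\Wbf^\top\Wbf(\tau))\rd{\tau} + 4\int_0^t \norm{\Phibf^\top\Phibf(\tau)}\rd{\tau} + 4\rho\int_0^t \norm{\Phibf^\top\Wbf^\top\Wbf\Phibf(\tau)}\rd{\tau}$. I would then insert the spectral-norm growth bounds already available, \cref{lemma:integral_inequality_spectral_norm_phi} for $\norm{\Phibf^\top\Phibf(\tau)}$ and \cref{lemma:integral_inequality_tr_w} for $\tr(\Wbf^\top\Wbf(\tau))$, whose sum is at most $(I_0 + 8\tau)e^{2\rho\tau}$ with $I_0 = \tr(\Wbf^\top\Wbf(0)) + \norm{\Phibf^\top\Phibf(0)}$; integration of $4(I_0+8\tau)e^{2\rho\tau}$ by parts produces precisely $\tfrac{16\rho t e^{2\rho t} + (2\rho I_0 - 8)(e^{2\rho t}-1)}{\rho^2}$, which is a non-decreasing function of $t$ (being an integral of a non-negative integrand). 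This leaves an integral inequality of the form $u(t) \le \alpha(t) + 4\rho\int_0^t u(\tau)\rd{\tau}$ with $\alpha$ non-decreasing, to which the Gr\"{o}nwall--Bellman inequality (\cref{theorem:gronwall_bellman}) applies with $\beta \equiv 4\rho$, yielding $\norm{\Phibf^\top\Wbf^\top\Wbf\Phibf(t)} \le \bigl\{\norm{\Phibf^\top\Wbf^\top\Wbf\Phibf(0)} + \tfrac{16\rho t e^{2\rho t} + (2\rho I_0 - 8)(e^{2\rho t}-1)}{\rho^2}\bigr\}e^{4\rho t}$; taking square roots finishes the proof. The only delicate point is the second step: one must check that each trace-manipulation bound of \cref{lemma:integral_inequality_frobenius_norm_phiw} survives the passage to the operator norm, and this works precisely because of the rank-one structure (so spectral equals Frobenius for those matrices) together with the replacement of the Frobenius estimate $\norm{\Phibf^\top\Phibf}_\frob$ by the quadratic-form estimate $\norm{\Phibf^\top\vbf}_2^2 \le \norm{\Phibf^\top\Phibf}$; once this is verified, the remainder of the argument is a verbatim repetition of the Frobenius case.
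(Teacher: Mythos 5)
Your proof is correct and follows essentially the same route as the paper's: the fundamental theorem of calculus applied to $\norm{\Phibf^\top\Wbf^\top\Wbf\Phibf(t)}=\norm{\Wbf\Phibf(t)}^2$, the same integrand bounds $2\tr(\Wbf^\top\Wbf)+2\norm{\Phibf^\top\Phibf}+2\rho\norm{\Phibf^\top\Wbf^\top\Wbf\Phibf}$ inherited from \cref{lemma:integral_inequality_frobenius_norm_phiw}, and the Gr\"{o}nwall--Bellman inequality; your rank-one justification makes explicit what the paper leaves as ``not difficult to see.'' The only cosmetic deviation is that you control $\norm{\Phibf^\top\Phibf}$ in the spectral norm via \cref{lemma:integral_inequality_spectral_norm_phi} where the paper keeps the Frobenius norm and uses \cref{lemma:integral_inequality_frobenius_norm_phi}; since $\norm{\cdot}\le\norm{\cdot}_\frob$ and both lemmas give the same envelope $(\cdot+4\tau)e^{2\rho\tau}$, your $I_0$ is no larger than the one defined in \cref{lemma:integral_inequality_frobenius_norm_phiw}, so the stated bound follows a fortiori.
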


\begin{proof}[Proof of \cref{lemma:integral_inequality_spectral_norm_phiw}]
  We evaluate $\norm{\Phibf^\top\Wbf^\top\Wbf\Phibf(t)} = \norm{\Wbf\Phibf(t)}^2$.
  By the fundamental theorem of calculus, we obtain the following decomposition:
  \begin{equation*}
    \norm{\Phibf^\top\Wbf^\top\Wbf\Phibf(t)} \le \norm{\Phibf^\top\Wbf^\top\Wbf\Phibf(0)} + 2\int_0^t \norm{\left(\diff{\Wbf\Phibf}{\tau}\right)^\top \Wbf\Phibf(\tau)}\rd{\tau}.
  \end{equation*}
  By following the same derivation as the proof of \cref{lemma:integral_inequality_frobenius_norm_phiw}, it is not difficult to see the following upper bound:
  \begin{equation*}
    \norm{\left(\diff{\Wbf\Phibf}{\tau}\right)^\top \Wbf\Phibf}
    \le 2\tr(\Wbf^\top\Wbf) + 2\norm{\Phibf^\top\Phibf}_\frob + 2\rho\norm{\Phibf^\top\Wbf^\top\Wbf\Phibf}.
  \end{equation*}
  By plugging the results of \cref{lemma:integral_inequality_frobenius_norm_phi,lemma:integral_inequality_tr_w} into $\tr(\Wbf^\top\Wbf(\tau))$ and $\norm{\Phibf^\top\Phibf(\tau)}_\frob$,
  we obtain the integral inequality:
  \begin{equation*}
    \begin{aligned}
      \norm{\Phibf^\top\Wbf^\top\Wbf\Phibf(t)}
      &\le \norm{\Phibf^\top\Wbf^\top\Wbf\Phibf(0)} + 4\rho\int_0^t\norm{\Phibf^\top\Wbf^\top\Wbf\Phibf(\tau)} \rd{\tau} \\
      &\phantom{\le} +\frac{16\rho te^{2\rho t} + (2\rho I_0 - 8)(e^{2\rho t} - 1)}{\rho^2}.
    \end{aligned}
  \end{equation*}
  This can be solved via \cref{theorem:gronwall_bellman}.
\end{proof}

\begin{lemma}
  \label{lemma:integral_inequality_l2_norm_wphix}
  For $\Wbf \in \sym_h$,
  for any $t$, the following bound holds:
  \begin{equation*}
    \begin{aligned}
      & \norm{\Phibf^\top\Wbf^\top\Wbf\Phibf(t)\xbf_0}_2^2 \\
      &\le \left\{ \norm{\Phibf^\top\Wbf^\top\Wbf\Phibf(0)\xbf_0}_2^2 + \Xi_1\norm{\xbf_0}_2^2 + \Xi_2\norm{\Phibf^\top\Phibf(0)\xbf_0}_2^2 \right\} \exp(2\rho t),
    \end{aligned}
  \end{equation*}
  where
  \begin{equation*}
    \begin{aligned}
      \Xi_1 &\defeq T_0^2\frac{e^{4\rho t}-1}{\rho} + 2T_0\frac{e^{4\rho t}(4\rho t - 1) + 1}{\rho^2} \\ &\phantom{\defeq} + 2\frac{e^{4\rho t}(8\rho^2 t^2 - 4\rho t + 1) - 1}{\rho^3} + 4\frac{e^{2\rho t}(2\rho t - 1) + 1}{\rho^2}, \\
      \Xi_2 &\defeq 2\frac{e^{2\rho t} - 1}{\rho},
    \end{aligned}
  \end{equation*}
  and $T_0 \defeq \tr(\Wbf^\top\Wbf(0))$.
\end{lemma}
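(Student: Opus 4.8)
The plan is to follow the same three-phase template used for the preceding helper lemmas, in particular \cref{lemma:integral_inequality_l2_norm_phix} (the $\Phibf$-only $L^2$ bound) and \cref{lemma:integral_inequality_frobenius_norm_phiw} (the Frobenius bound for $\Phibf^\top\Wbf^\top\Wbf\Phibf$): (i) use the fundamental theorem of calculus to reduce $\norm{\Phibf^\top\Wbf^\top\Wbf\Phibf(t)\xbf_0}_2^2$ to the initial value plus the time integral of the squared $L^2$-norms of the ``velocity'' pieces of $\Psibf^\top\Psibf\xbf_0$ (writing $\Psibf \defeq \Wbf\Phibf$), exactly as in the proof of \cref{lemma:integral_inequality_l2_norm_phix}; (ii) substitute the matrix dynamics of \cref{lemma:dynamics_matrix_expected} and bound those velocity pieces by elementary quantities via the trace manipulations used throughout this section; (iii) plug in the already-established time bounds, integrate termwise, and close with the Gr\"{o}nwall--Bellman inequality.

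Concretely, first I would write $\Phibf^\top\Wbf^\top\Wbf\Phibf(t)\xbf_0 = \Psibf^\top\Psibf(t)\xbf_0$ and bound $\norm{\Psibf^\top\Psibf(t)\xbf_0}_2^2$ by $\norm{\Psibf^\top\Psibf(0)\xbf_0}_2^2$ plus the integrals of the squared norms of the pieces of $\frac{\rd}{\rd\tau}(\Psibf^\top\Psibf\xbf_0)$. Using $\dot\Psibf = \dot\Wbf\Phibf + \Wbf\dot\Phibf$ and the symmetry of $\Wbf$ (\cref{assumption:w_symmetric}), these pieces split into two of the form $\dot\Phibf^\top\Wbf^\top\Wbf\Phibf\xbf_0$ and two of the form $\Phibf^\top\dot\Wbf^\top\Wbf\Phibf\xbf_0$. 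For the $\dot\Wbf$-pieces, the clean identity $\dot\Wbf^\top\Wbf = \Hbf^\top - \rho\Wbf\Wbf^\top$ (the transpose of the first line of \cref{equation:dynamics_matrix_expected}) makes them equal to $\Phibf^\top\Hbf^\top\Phibf\xbf_0 - \rho\,\Phibf^\top\Wbf^\top\Wbf\Phibf\xbf_0$; for the $\dot\Phibf$-pieces I would use the expression for $\dot\Phibf^\top$ derived in the proof of \cref{lemma:integral_inequality_frobenius_norm_phi} (valid since $\Wbf$ is symmetric and nonsingular). Crucially, in every piece the regularization part produces a copy of $\rho\,\Phibf^\top\Wbf^\top\Wbf\Phibf\xbf_0$, which supplies the self-referential term $\propto \rho\int_0^t\norm{\Phibf^\top\Wbf^\top\Wbf\Phibf(\tau)\xbf_0}_2^2\,\rd\tau$ needed later, while the $\Hbf$-dependent parts are handled by the same trace gymnastics as in the proofs of \cref{lemma:integral_inequality_l2_norm_phix,lemma:integral_inequality_frobenius_norm_phiw}: cyclic invariance of the trace, Cauchy--Schwarz, the unit-norm facts $\norm{\zbf'}_2 = \norm{\omegabf}_2 = 1$ and $\abs{\tr(\zbf'\omegabf^\top)} \le 1$, and the pseudoinverse cancellations $\Phibf\pinv{\Phibf} = \Ibf_h$, $\pinv{\Phibf}\Phibf\Phibf^\top = \Phibf^\top$, $\pinv{\Wbf}\Wbf = \Ibf_h$. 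The outcome of this step is a pointwise bound of the velocity-squared integrand by a combination of $\tr(\Wbf^\top\Wbf)^2\norm{\xbf_0}_2^2$, $\norm{\Phibf^\top\Phibf\xbf_0}_2^2$, and $\rho\,\norm{\Phibf^\top\Wbf^\top\Wbf\Phibf\xbf_0}_2^2$.

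Then I would substitute the time bounds already in hand---$\tr(\Wbf^\top\Wbf(\tau)) \le (T_0 + 4\tau)e^{2\rho\tau}$ from \cref{lemma:integral_inequality_tr_w} (so $\tr(\Wbf^\top\Wbf(\tau))^2 \le (T_0 + 4\tau)^2 e^{4\rho\tau}$) and $\norm{\Phibf^\top\Phibf(\tau)\xbf_0}_2^2 \le (\norm{\Phibf^\top\Phibf(0)\xbf_0}_2^2 + 4\norm{\xbf_0}_2^2\,\tau)e^{2\rho\tau}$ from \cref{lemma:integral_inequality_l2_norm_phix}---and integrate termwise. The resulting integrals $\int_0^t \tau^k e^{c\rho\tau}\,\rd\tau$ with $k \in \{0,1,2\}$ and $c \in \{2,4\}$, evaluated by repeated integration by parts, produce precisely the closed forms $\Xi_1$ (whose $e^{4\rho t}$-polynomial part comes from $(T_0+4\tau)^2 e^{4\rho\tau}$ and whose $e^{2\rho t}$-polynomial part comes from the $4\norm{\xbf_0}_2^2\,\tau e^{2\rho\tau}$ term) and $\Xi_2 = 2(e^{2\rho t}-1)/\rho$ (from the $\norm{\Phibf^\top\Phibf(0)\xbf_0}_2^2 e^{2\rho\tau}$ term). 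One thus arrives at
\begin{equation*}
  u(t) \le \Bigl[\norm{\Phibf^\top\Wbf^\top\Wbf\Phibf(0)\xbf_0}_2^2 + \Xi_1\norm{\xbf_0}_2^2 + \Xi_2\norm{\Phibf^\top\Phibf(0)\xbf_0}_2^2\Bigr] + 2\rho\int_0^t u(\tau)\,\rd\tau,
\end{equation*}
where $u(t) \defeq \norm{\Phibf^\top\Wbf^\top\Wbf\Phibf(t)\xbf_0}_2^2$ and the bracketed term is non-decreasing in $t$; \cref{theorem:gronwall_bellman} then yields the stated bound with the factor $\exp(2\rho t)$.

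The hard part will be the trace bookkeeping in the second phase. Each of the four velocity pieces, once the dynamics is substituted, becomes a long product of $\Phibf$, $\Wbf$, their (pseudo)inverses, and the rank-one factors $\zbf'\omegabf^\top$ and $\omegabf\omegabf^\top$, and the cyclic/Cauchy--Schwarz/unit-norm steps must be applied in exactly the right order so that the (pseudo)inverses telescope against their parent matrices and the rank-one factors contribute only $\Ocal(1)$ constants---otherwise one is left with uncontrollable $\norm{\pinv{\Phibf}}$ or $\norm{\Wbf^{-1}}$. This is the same style of argument as in the proofs of \cref{lemma:integral_inequality_l2_norm_phix,lemma:integral_inequality_frobenius_norm_phiw}, carried through with the extra $\Wbf^\top\Wbf$ factor and the trailing vector $\xbf_0$; keeping track of the multiplicities of the surviving pieces (a factor $4$, reflecting the four velocity pieces) is what pins down the constants in $\Xi_1$ and $\Xi_2$, and the symmetry $\Wbf \in \sym_h$ is used both to make $\dot\Wbf$ symmetric---so the cross pieces pair up---and to identify $\Wbf\Wbf^\top$ with $\Wbf^\top\Wbf = \Psibf^\top\Psibf$.
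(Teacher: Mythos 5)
Your proposal follows the paper's proof essentially step for step: the same fundamental-theorem-of-calculus decomposition into the $\dot\Phibf$- and $\dot\Wbf$-velocity pieces, the same trace/Cauchy--Schwarz/rank-one bounds yielding the $\tr(\Wbf^\top\Wbf)^2\norm{\xbf_0}_2^2$, $\norm{\Phibf^\top\Phibf\xbf_0}_2^2$, and $\rho\norm{\Phibf^\top\Wbf^\top\Wbf\Phibf\xbf_0}_2^2$ terms, the same substitution of \cref{lemma:integral_inequality_tr_w,lemma:integral_inequality_l2_norm_phix} with termwise integration by parts to produce $\Xi_1$ and $\Xi_2$, and the same closing application of the Gr\"{o}nwall--Bellman inequality. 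This is correct and matches the paper's argument.
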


\begin{proof}[Proof of \cref{lemma:integral_inequality_l2_norm_wphix}]
  By using the fundamental theorem of calculus, $\norm{\Phibf^\top\Wbf^\top\Wbf\Phibf(t)\xbf_0}_2^2$ is bounded as follows:
  \begin{equation}
    \begin{aligned}
      & \norm{\Phibf^\top\Wbf^\top\Wbf\Phibf(t)\xbf_0}_2^2 \\
      &\le \norm{\Phibf^\top\Wbf^\top\Wbf\Phibf(0)\xbf_0}_2^2 + 2\int_0^t\norm{\left(\diff{\Wbf\Phibf}{\tau}\right)^\top\Wbf\Phibf(\tau)\xbf_0}_2^2\rd{\tau} \\
      &\le \norm{\Phibf^\top\Wbf^\top\Wbf\Phibf(0)\xbf_0}_2^2 \\
        & \phantom{\le} + 2\underbrace{\int_0^t\norm{\dot\Phibf^\top\Wbf^\top\Wbf\Phibf(\tau)\xbf_0}_2^2\rd{\tau}}_\text{(A)} + 2\underbrace{\int_0^t\norm{\Phibf^\top\dot\Wbf^\top\Wbf\Phibf(\tau)\xbf_0}_2^2\rd{\tau}}_\text{(B)}.
    \end{aligned}
    \label{equation:proof:time_evolution_wphix_norm}
  \end{equation}
  To bound (A) in \cref{equation:proof:time_evolution_wphix_norm},
  we follow almost the same calculation as \cref{equation:proof:norm_dotphi_w_w_phi} in the proof of \cref{lemma:integral_inequality_l2_norm_phix} (therefore omitted) and obtain
  $\norm{\dot\Phibf^\top\Wbf^\top\Wbf\Phibf\xbf_0}_2^2 \le \tr(\Wbf^\top\Wbf)^2\norm{\xbf_0}_2^2$.
  To bound (B) in \cref{equation:proof:time_evolution_wphix_norm},
  we follow almost the same calculation as \cref{equation:proof:norm_phi_dotw_w_phi} in the proof of \cref{lemma:integral_inequality_l2_norm_phix} (therefore omitted) and obtain
  $\norm{\Phibf^\top\dot\Wbf^\top\Wbf\Phibf\xbf_0}_2^2 \le 2\norm{\Phibf^\top\Phibf\xbf_0}_2^2 + \rho\norm{\Phibf^\top\Wbf^\top\Wbf\Phibf\xbf_0}_2^2$.
  Here, the symmetry of $\Wbf$ is used.
  By substituting them back into (A) and (B) in \cref{equation:proof:time_evolution_wphix_norm}, we obtain the following bound:
  \begin{equation*}
    \begin{aligned}
      \norm{\Phibf^\top\Wbf^\top\Wbf\Phibf(t)\xbf_0}_2^2
      &\le \norm{\Phibf^\top\Wbf^\top\Wbf\Phibf(0)\xbf_0}_2^2 + 2\rho\int_0^t\norm{\Phibf^\top\Wbf^\top\Wbf\Phibf(\tau)\xbf_0}_2^2\rd{\tau} \\
        &\phantom{\le} + 4\norm{\xbf_0}_2^2\underbrace{\int_0^t\tr(\Wbf^\top\Wbf(\tau))^2\rd{\tau}}_{(\clubsuit)} + 4\underbrace{\int_0^t\norm{\Phibf^\top\Phibf(\tau)\xbf_0}_2^2\rd{\tau}}_{(\diamondsuit)}
      .
    \end{aligned}
  \end{equation*}
  The term ($\clubsuit$) can be evaluated by \cref{lemma:integral_inequality_tr_w} and integration by parts as follows:
  \begin{equation*}
    \begin{aligned}
      (\clubsuit)
      &\le \int_0^t(T_0 + 4\tau)^2\exp(4\rho\tau)\rd{\tau} \\
      &= \int_0^t(T_0^2 + 8T_0\tau + 16\tau^2)\exp(4\rho\tau)\rd{\tau} \\
      &= T_0^2\frac{e^{4\rho t}-1}{4\rho} + T_0\frac{e^{4\rho t}(4\rho t - 1) + 1}{2\rho^2} + \frac{e^{4\rho t}(8\rho^2 t^2 - 4\rho t + 1) - 1}{2\rho^3}, \\
    \end{aligned}
  \end{equation*}
  The term ($\diamondsuit$) can be evaluated by \cref{lemma:integral_inequality_l2_norm_phix} and integration by parts as follows:
  \begin{equation*}
    \begin{aligned}
      (\diamondsuit) &\le \int_0^t\left\{\norm{\Phibf^\top\Phibf(0)\xbf_0}_2^2 + 4\norm{\xbf_0}_2^2\tau\right\}e^{2\rho\tau}\rd{\tau} \\
      &= \norm{\Phibf^\top\Phibf(0)\xbf_0}_2^2\frac{e^{2\rho t} - 1}{2\rho} + \norm{\xbf_0}_2^2\frac{e^{2\rho t}(2\rho t - 1) + 1}{\rho^2}.
    \end{aligned}
  \end{equation*}
  Hence, we obtain the following integral inequality:
  \begin{equation*}
    \begin{aligned}
      \norm{\Phibf^\top\Wbf^\top\Wbf\Phibf(t)\xbf_0}_2^2
      &\le \norm{\Phibf^\top\Wbf^\top\Wbf\Phibf(0)\xbf_0}_2^2 + \Xi_1\norm{\xbf_0}_2^2 + \Xi_2\norm{\Phibf^\top\Phibf(0)\xbf_0}_2^2 \\
        &\phantom{\le} + 2\rho\int_0^t\norm{\Phibf^\top\Wbf^\top\Wbf\Phibf(\tau)\xbf_0}_2^2\rd{\tau},
    \end{aligned}
  \end{equation*}
  which can be solved by the Gr\"{o}nwall--Bellman inequality (\cref{theorem:gronwall_bellman}).
  As a result, the desired bound on $\norm{\Phibf^\top\Wbf^\top\Wbf\Phibf(t)\xbf_0}_2^2$ can be obtained.
\end{proof}

\section{Missing proofs}
\label{section:proofs}

\thmdynamics*

\begin{proof}[Proof of \cref{lemma:dynamics_matrix_expected}]
  \hypertarget{proof:dynamics_matrix_expected}{}
  To derive the $\Wbf$-dynamics, we begin with calculating the gradient $\nabla_\Wbf\losscos$.
  \begin{align*}
    &\!\!-\nabla_\Wbf\losscos \\
    &= \E \left[ \frac{1}{\norm{\Phibf\xbf'}_2} \frac{\norm{\Wbf\Phibf\xbf}_2 \nabla_\Wbf(\xbf^\top\Phibf^\top\Wbf^\top\Phibf\xbf') - \xbf^\top\Phibf^\top\Wbf^\top\Phibf\xbf' \nabla_\Wbf\norm{\Wbf\Phibf\xbf}_2}{\norm{\Wbf\Phibf\xbf}_2^2} \right] \\
    &= \E \left[ \frac{\nabla_\Wbf(\xbf^\top\Phibf^\top\Wbf^\top\zbf') - \omegabf^\top\zbf'\nabla_\Wbf\norm{\Wbf\Phibf\xbf}_2}{\norm{\Wbf\Phibf\xbf}_2} \right] \\
    &= \E \left[ \frac{\zbf'\xbf^\top\Phibf^\top - (\omegabf^\top\zbf')\frac{\Wbf\Phibf\xbf\xbf^\top\Phibf^\top}{\norm{\Wbf\Phibf\xbf}_2}}{\norm{\Wbf\Phibf\xbf}_2} \right] \\
    &= \E \left[ \zbf'\frac{\xbf^\top\Phibf^\top}{\norm{\Wbf\Phibf\xbf}_2} - (\omegabf^\top\zbf')\omegabf \frac{\xbf^\top\Phibf^\top}{\norm{\Wbf\Phibf\xbf}_2} \right].
  \end{align*}
  Here, $\Wbf$ follows the dynamics $\dot\Wbf = -\nabla_\Wbf\losscos - \rho\Wbf$,
  and hence we obtain $\dot\Wbf\Wbf^\top = \E[\zbf'\omegabf^\top - (\omegabf^\top\zbf')\omegabf\omegabf^\top] - \rho\Wbf\Wbf^\top$.

  To derive the $\Phibf$-dynamics, we calculate the gradient $\nabla_\Phibf\losscos$.
  \begin{align*}
    &\!\!-\nabla_\Phibf\losscos \\
    &= \E \left[ \frac{1}{\norm{\Phibf\xbf'}_2} \frac{\norm{\Wbf\Phibf\xbf}_2 \nabla_\Phibf(\xbf^\top\Phibf^\top\Wbf^\top\stopgrad(\Phibf)\xbf') - \xbf^\top\Phibf^\top\Wbf^\top\Phibf\xbf' \nabla_\Phibf\norm{\Wbf\Phibf\xbf}_2}{\norm{\Wbf\Phibf\xbf}_2^2} \right] \\
    &= \E \left[ \frac{1}{\norm{\Phibf\xbf'}_2} \frac{\norm{\Wbf\Phibf\xbf}_2 \Wbf^\top\Phibf\xbf'\xbf^\top - \xbf^\top\Phibf^\top\Wbf^\top\Phibf\xbf' \frac{\Wbf^\top\Wbf\Phibf\xbf\xbf^\top}{\norm{\Wbf\Phibf\xbf}_2}}{\norm{\Wbf\Phibf\xbf}_2^2} \right] \\
    &= \Wbf^\top \E \left[ \frac{\zbf'\xbf^\top - (\omegabf^\top\zbf')\omegabf\xbf^\top}{\norm{\Wbf\Phibf\xbf}_2} \right],
  \end{align*}
  from which $(-\nabla_\Phibf\losscos)\Phibf^\top\Wbf^\top = \Wbf^\top \E[\zbf'\omegabf^\top - (\omegabf^\top\zbf')\omegabf\omegabf^\top]$ follows.
  Thus, the dynamics $\dot\Phibf = -\nabla_\Phibf\losscos - \rho\Phibf$ can be written as
  $\dot\Phibf\Phibf^\top\Wbf^\top = \Wbf^\top \E[\zbf'\omegabf^\top - (\omegabf^\top\zbf')\omegabf\omegabf^\top] - \rho\Phibf\Phibf^\top\Wbf^\top$.
\end{proof}

\thmconci*

\begin{proof}[Proof of \cref{lemma:norm_concentration}]
  \hypertarget{proof:norm_concentration}{}
  We will show concentration of $\norm{\frac{1}{\sqrt{h\sigma^2}}\Phibf\xbf}_2^2$ and $\norm{\frac{1}{\sqrt{h^2\sigma^2}}\Wbf\Phibf\xbf}_2^2$.

  \paragraph*{Concentration of $\norm{\Phibf\xbf}_2^2$}:
  We begin with showing the first concentration.
  \begin{equation}
    \begin{aligned}
      \norm{\frac{1}{\sqrt{h\sigma^2}}\Phibf\xbf}_2^2
      &= \norm{\frac{1}{\sqrt{h}}\left(\Phibf\frac{\xbf - \xbf_0}{\sigma} + \Phibf\frac{\xbf_0}{\sigma}\right)}_2^2 \\
      &= \underbrace{\norm{\frac{1}{\sqrt{h}}\Phibf\frac{\xbf - \xbf_0}{\sigma}}_2^2}_\text{(A)} + 2\sigma^{-1}\underbrace{\inpr{\frac{1}{\sqrt{h}}\Phibf\frac{\xbf - \xbf_0}{\sigma}}{\frac{1}{\sqrt{h}}\Phibf\xbf_0}}_\text{(B)} + \norm{\frac{1}{\sqrt{h}}\Phibf\frac{\xbf_0}{\sigma}}_2^2.
    \end{aligned}
    \label{equation:proof:norm_decomposition}
  \end{equation}
  To deal with (A), which is a Gaussian chaos (namely, a quadratic form with standard normal vectors),
  we invoke the Hanson--Wright inequality \citep[Theorem 6.3.2]{Vershynin2018}.
  Note that $\frac{\xbf - \xbf_0}{\sigma}$ follows the standard normal distribution.
  Then, the following inequality holds with probability at least $1 - \delta$ (over the sampling of $\xbf$):
  \begin{equation}
    \abs{ \norm{\frac{1}{\sqrt{h}}\Phibf\frac{\xbf - \xbf_0}{\sigma}}_2 - \norm{\frac{1}{\sqrt{h}}\Phibf}_\frob } \le \sqrt{\frac{C_0\norm{\Phibf}^2\log\frac{2}{\delta}}{h}},
    \label{equation:proof:hanson_wright}
  \end{equation}
  where the expectation is taken over $\xbf \sim \Ncal(\xbf_0, \sigma^2\Ibf_d)$, and $C_0$ is an absolute constant irrelevant to $d$ and $h$.
  Now, we evaluate the deviation term and show it vanishes as $d, h \to \infty$.
  Since the deviation term contains $\norm{\Phibf}^2$ and it depends on the time $t$,
  we need to carefully evaluate its order in $d$ and $h$ along with time evolution.
  For this purpose, \cref{lemma:integral_inequality_spectral_norm_phi} is used to obtain
  $\norm{\Phibf(t)}^2 \le (\norm{\Phibf^\top\Phibf(0)} + 4t) \exp(2\rho t)$.
  Lastly, the Gaussian initialization of $\Phibf$ (\cref{assumption:initialization}) induces $\frac{1}{h}\norm{\Phibf^\top\Phibf(0)} = \stocho(1)$ (by \cref{lemma:spectral_norm_init}).
  Thus, the deviation term of \cref{equation:proof:hanson_wright} is bounded from above as follows:
  \begin{equation*}
    \sqrt{\frac{C_0(\norm{\Phibf^\top\Phibf(0)} + 4t)\exp(2\rho t)\log\frac{2}{\delta}}{h}}
    = \stocho(1),
  \end{equation*}
  from which we conclude as follows:
  \begin{equation*}
    \norm{\frac{1}{\sqrt{h}}\Phibf\frac{\xbf - \xbf_0}{\sigma}}_2^2
    = \norm{\frac{1}{\sqrt{h}}\Phibf}_\frob^2 + \stocho(1).
  \end{equation*}

  Next, we deal with (B) in \cref{equation:proof:norm_decomposition}.
  The term (B) is equivalent to $\inpr{\frac{1}{h}\Phibf^\top\Phibf\xbf_0}{\frac{\xbf - \xbf_0}{\sigma}}$,
  which is a linear combination of the standard normal random variables.
  Its concentration (to mean $0$) can be established by the general Hoeffding's inequality \citep[Theorem 2.6.3]{Vershynin2018} as follows:
  With probability at least $1 - \delta$ (over the sampling of $\xbf$),
  \begin{equation}
    \text{(B)} = 
    \abs{\inpr{\frac{1}{h}\Phibf^\top\Phibf\xbf_0}{\frac{\xbf - \xbf_0}{\sigma}}} \le \sqrt{\frac{C_1\norm{\Phibf^\top\Phibf\xbf_0}_2^2\log\frac{2}{\delta}}{h^2}},
    \label{equation:proof:deviation_cross_term}
  \end{equation}
  where $C_1$ is an absolute constant irrelevant to $d$ and $h$.
  We need to evaluate $\norm{\Phibf^\top\Phibf(t)\xbf_0}_2^2$ by noting its time dependency again.
  For this purpose, \cref{lemma:integral_inequality_l2_norm_phix} is used to obtain
  $\norm{\Phibf^\top\Phibf(t)\xbf_0}_2^2 \le (\norm{\Phibf^\top\Phibf(0)\xbf_0}_2^2 + 4\norm{\xbf_0}_2^2t)\exp(2\rho t)$.
  Here, $\frac{1}{h^2}\norm{\Phibf^\top\Phibf(0)\xbf_0}_2^2 = \stocho(1)$ (\cref{lemma:l2_norm_init}) holds.
  In addition, $\xbf_0 \sim \Ncal(\zerobf, \Ibf)$ (\cref{assumption:standard_mvn}) indicates that $\norm{\xbf_0}_2^2$ is the sum of independent zero-mean sub-exponential random variables,
  from which Bernstein's inequality claim $\norm{\xbf_0}_2^2 = \stochO(d)$ \citep[Corollary 2.8.3]{Vershynin2018}.
  Plugging them into the upper bound of $\norm{\Phibf^\top\Phibf(t)\xbf_0}_2^2$, we deduce
  \begin{equation*}
    \begin{aligned}
      \text{(B)}
      &\le \sqrt{C_1 \log\frac{2}{\delta} \left(\frac{\norm{\Phibf^\top\Phibf(0)\xbf_0}_2^2}{h^2} + 4t\frac{\norm{\xbf_0}_2^2}{h^2}\right)e^{2\rho t}} \\
      &= \sqrt{\stocho(1) + \stochO(\alpha h^{-1})}
      = \stocho(1).
    \end{aligned}
  \end{equation*}

  Eventually, the concentration of (A) and (B) is established and the conclusion follows from \cref{equation:proof:norm_decomposition}.

  \paragraph*{Concentration of $\norm{\Wbf\Phibf\xbf}_2^2$}: 
  In the same manner as \cref{equation:proof:norm_decomposition}, we have the following decomposition:
  \begin{equation}
    \begin{aligned}
      \norm{\frac{1}{\sqrt{h^2\sigma^2}}\Wbf\Phibf\xbf}_2^2
      &= \norm{\frac{1}{h}\Wbf\Phibf\frac{\xbf-\xbf_0}{\sigma}}_2^2 + \frac{2}{\sigma}\inpr{\frac{1}{h}\Wbf\Phibf\frac{\xbf-\xbf_0}{\sigma}}{\frac{1}{h}\Wbf\Phibf\xbf_0} \\&\phantom{=} + \norm{\frac{1}{h}\Wbf\Phibf\frac{\xbf_0}{\sigma}}_2^2.
    \end{aligned}
    \label{equation:proof:norm_decomposition_2}
  \end{equation}
  The subsequent analysis follows in a very similar way to the analysis of $\norm{\frac{1}{\sqrt{h\sigma^2}}\Phibf\xbf}_2^2$.
  Indeed, we can obtain the following inequalities (each of them with probability at least $1-\delta$, respectively):
  \begin{align}
    \abs{\norm{\frac{1}{h}\Wbf\Phibf\frac{\xbf-\xbf_0}{\sigma}}_2 - \norm{\frac{1}{h}\Wbf\Phibf}_\frob} & \le \sqrt{\frac{C_2\norm{\Wbf\Phibf}^2\log\frac{2}{\delta}}{h^2}}, \label{equation:proof:hanson_wright_2} \\
    \abs{\inpr{\frac{1}{h}\Wbf\Phibf\frac{\xbf-\xbf_0}{\sigma}}{\frac{1}{h}\Wbf\Phibf\xbf_0}} & \le \sqrt{\frac{C_3\norm{\Phibf^\top\Wbf^\top\Wbf\Phibf\xbf_0}_2^2\log\frac{2}{\delta}}{h^4}}, \label{equation:proof:deviation_cross_term_2}
  \end{align}
  where $C_2$ and $C_3$ are absolute constants (see \cref{equation:proof:hanson_wright,equation:proof:deviation_cross_term}).

  To deal with \cref{equation:proof:hanson_wright_2},
  we control the spectral norm $\norm{\Wbf\Phibf(t)}$ along time evolution by using \cref{lemma:integral_inequality_spectral_norm_phiw},
  and obtain the following bound:
  \begin{equation*}
    \norm{\Wbf\Phibf(t)}^2
    \le \left\{ \norm{\Phibf^\top\Wbf^\top\Wbf\Phibf(0)} + \frac{16\rho te^{2\rho t} + (2\rho I_0 - 8)(e^{2\rho t} - 1)}{\rho^2} \right\}e^{4\rho t},
  \end{equation*}
  where $I_0 \defeq \tr(\Wbf^\top\Wbf(0)) + \norm{\Phibf^\top\Phibf(0)}_\frob$.
  By plugging this bound back into \cref{equation:proof:hanson_wright_2} and using \cref{lemma:spectral_norm_init,lemma:frobenius_norm_init}, we obtain
  \begin{equation*}
    \norm{\frac{1}{h}\Wbf\Phibf\frac{\xbf-\xbf_0}{\sigma}}_2^2 = \norm{\frac{1}{h}\Wbf\Phibf}_\frob^2 + \stocho(1).
  \end{equation*}

  Next, we deal with \cref{equation:proof:deviation_cross_term_2} by controlling the L2 norm $\norm{\Phibf^\top\Wbf^\top\Wbf\Phibf(t)\xbf_0}_2^2$ along time evolution.
  By using \cref{lemma:integral_inequality_l2_norm_wphix}, we obtain the following bound:
  \begin{equation*}
    \begin{aligned}
      &\norm{\Phibf^\top\Wbf^\top\Wbf\Phibf(t)\xbf_0}_2^2 \\
      &\le \left\{ \norm{\Phibf^\top\Wbf^\top\Wbf\Phibf(0)\xbf_0}_2^2 + \Ocal(\norm{\Phibf^\top\Phibf(0)\xbf_0}_2^2) + \norm{\xbf_0}_2^2 \Ocal(\tr(\Wbf^\top\Wbf(0))^2) \right\} e^{2\rho t},
    \end{aligned}
  \end{equation*}
  where the order term $\Ocal(\tr(\Wbf^\top\Wbf(0))^2)$ hides the dependency on $t$.
  We now combine \cref{lemma:frobenius_norm_init,lemma:l2_norm_init} and the consequence of Bernstein's inequality $\norm{\xbf_0}_2^2 = \stochO(d)$
  and substitute them into \cref{equation:proof:deviation_cross_term_2}.
  Then, we obtain 
  \begin{equation*}
    \begin{aligned}
      &\abs{\inpr{\frac{1}{h}\Wbf\Phibf\frac{\xbf-\xbf_0}{\sigma}}{\frac{1}{h}\Wbf\Phibf\xbf_0}} \\
      &\le \sqrt{C_3'\left\{\frac{\norm{\Phibf^\top\Wbf^\top\Wbf\Phibf(0)\xbf_0}_2^2}{h^4} + \frac{\Ocal(\norm{\Phibf^\top\Phibf(0)\xbf_0}_2^2)}{h^4} + \frac{\norm{\xbf_0}_2^2\Ocal(\tr(\Wbf^\top\Wbf(0))^2)}{h^4}\right\}} \\
      &= \sqrt{\stocho(1) + \stocho(1) \cdot h^{-2} + \stochO(d) \cdot \stocho(1) \cdot h^{-2}} \\
      &= \stocho(1),
    \end{aligned}
  \end{equation*}
  where $C_3' \defeq C_3e^{2\rho t}\log\tfrac{2}{\delta}$.

  Hence, the concentration result for $\norm{\frac{1}{\sqrt{h^2\sigma^2}}\Wbf\Phibf\xbf}_2^2$ is established
  by substituting \cref{equation:proof:hanson_wright_2,equation:proof:deviation_cross_term_2} back into \cref{equation:proof:norm_decomposition_2}.
\end{proof}

\thmconcii*

\begin{proof}[Proof of \cref{lemma:norm_concentration_x}]
  \hypertarget{proof:norm_concentration_x}{}
  To establish concentration of $\norm{\Phibf\xbf_0}_2$,
  we invoke the Hanson--Wright inequality \citep[Theorem 6.3.2]{Vershynin2018}:
  For an absolute constant $C_0$,
  \begin{equation*}
    \abs{\norm{\frac{1}{\sqrt{h\sigma^2}}\Phibf\xbf_0}_2 - \norm{\frac{1}{\sqrt{h\sigma^2}}\Phibf}_\frob} \le \sqrt{\frac{C_0\norm{\Phibf}^2\log\frac{2}{\delta}}{h\sigma^2}},
  \end{equation*}
  with probability at least $1 - \delta$.
  Here, we further derive the upper bound of the right-hand side by \cref{lemma:integral_inequality_spectral_norm_phi}:
  \begin{equation*}
    \frac{\norm{\Phibf(t)}^2}{h}
    \le \frac{(\norm{\Phibf^\top\Phibf(0)} + 4t)\exp(2\rho t)}{h}
    = \stocho(1),
  \end{equation*}
  where the last identity follows from \cref{lemma:spectral_norm_init}.
  Thus, the concentration of $\norm{\Phibf\xbf_0}_2$ is shown.

  To establish concentration of $\norm{\Wbf\Phibf\xbf_0}$,
  we invoke the Hanson--Wright inequality again:
  For an absolute constant $C_1$,
  \begin{equation*}
    \abs{\norm{\frac{1}{\sqrt{h^2\sigma^2}}\Wbf\Phibf\xbf_0} - \norm{\frac{1}{\sqrt{h^2\sigma^2}}\Wbf\Phibf}_\frob} \le \sqrt{\frac{C_1\norm{\Wbf\Phibf}^2\log\frac{2}{\delta}}{h^2\sigma^2}},
  \end{equation*}
  with probability at least $1 - \delta$.
  We can show $\frac{1}{h^2}\norm{\Wbf\Phibf(t)}^2 = \stocho(1)$ in the same way as in the proof of \cref{lemma:norm_concentration}.
\end{proof}

\thmevalexpt*

\begin{proof}[Proof of \cref{lemma:evaluation_expectation}]
  \hypertarget{proof:evaluation_expectation}{}
  To evaluate $\Hbf = \E[\zbf'\omegabf^\top - (\omegabf^\top\zbf')\omegabf\omegabf^\top] \defeq \Hbf_1 - \Hbf_2$,
  where $\Hbf_1 \defeq \E[\zbf'\omegabf^\top]$ and $\Hbf_2  = \E[(\omegabf^\top\zbf')\omegabf\omegabf^\top]$,
  we evaluate the normalizers $\norm{\Phibf\xbf'}_2^{-1}$ and $\norm{\Wbf\Phibf\xbf}_2^{-1}$ first.
  By \cref{lemma:norm_concentration,lemma:norm_concentration_x},
  \begin{equation*}
    \begin{aligned}
      \frac{1}{\norm{\Phibf\xbf'}_2}
      &= \frac{1}{\sqrt{h\sigma^2}} \cdot \left\{\norm{\frac{1}{\sqrt{h}}\Phibf}_\frob^2 + \norm{\frac{1}{\sqrt{h\sigma^2}}\Phibf}_\frob^2 + \stocho(1)\right\}^{-1/2} \\
      &= \frac{1}{\sqrt{h\sigma^2}} \cdot \frac{1}{\sqrt{1+\sigma^{-2}}\norm{\frac{1}{\sqrt{h}}\Phibf}_\frob + \stocho(1)} \\
      &\stackrel{(\clubsuit)}= \frac{1}{\sqrt{h\sigma^2}} \cdot \left\{ \frac{1}{\sqrt{1+\sigma^{-2}}\norm{\frac{1}{\sqrt{h}}\Phibf}_\frob} + \stocho(1) \right\} \\
      &= \frac{1}{\sqrt{1+\sigma^2}} \cdot \frac{1}{\norm{\Phibf}_\frob} + \stocho(1),
    \end{aligned}
  \end{equation*}
  where $(\clubsuit)$ is due to the first-order Taylor expansion $f(\epsilon) = \frac{1}{x + \epsilon} \approx \frac{1}{x} - \frac{\epsilon}{x^2}$ around $\epsilon = 0$.
  Similarly, we have
  \begin{equation*}
    \frac{1}{\norm{\Wbf\Phibf\xbf}_2} = \frac{1}{\norm{\Psibf\xbf}_2} = \frac{1}{\sqrt{1+\sigma^2}} \cdot \frac{1}{\norm{\Psibf}_\frob} + \stocho(1).
  \end{equation*}

  Next, we evaluate $\Hbf_1$.
  \begin{equation*}
    \begin{aligned}
      \Hbf_1
      &= \E_{\xbf_0} \E_{\xbf, \xbf'} \left[ \frac{\Phibf\xbf'}{\norm{\Phibf\xbf'}_2} \left(\frac{\Psibf\xbf}{\norm{\Psibf\xbf}_2}\right)^\top \right] \\
      &= \E_{\xbf_0} \left[ \frac{1}{(1+\sigma^2) \norm{\Phibf}_\frob \norm{\Psibf}_\frob} \E_{\xbf, \xbf'}[\Phibf\xbf'\xbf^\top\Psibf^\top] \right] + \stocho(1) \\
      &= \frac{1}{1+\sigma^2} \frac{\Phibf}{\norm{\Phibf}_\frob} \frac{\Psibf^\top}{\norm{\Psibf}_\frob} + \stocho(1),
    \end{aligned}
  \end{equation*}
  where we used $\E_{\xbf_0} \E_{\xbf, \xbf'} [\xbf'\xbf^\top] = \E_{\xbf_0} [\xbf_0\xbf_0^\top] = \Ibf_d$ at the last identity.
  We can evaluate $\Hbf_2$ similarly.
  \begin{equation*}
    \begin{aligned}
      \Hbf_2
      &= \E_{\xbf_0} \E_{\xbf, \xbf'} \left[ \frac{(\xbf^\top\Psibf^\top\Phibf\xbf') \Psibf\xbf\xbf^\top\Psibf^\top}{\norm{\Phibf\xbf'}_2 \norm{\Psibf\xbf}_2^3} \right] \\
      &= \E_{\xbf_0} \left[ \frac{1}{(1+\sigma^2)^2 \norm{\Phibf}_\frob \norm{\Psibf}_\frob^3} \Psibf \E_{\xbf, \xbf'}[(\xbf^\top\Psibf^\top\Phibf\xbf')\xbf\xbf^\top] \Psibf^\top \right] + \stocho(1),
    \end{aligned}
  \end{equation*}
  where the inner expectation $\E[(\xbf^\top\Psibf^\top\Phibf\xbf')\xbf\xbf^\top]$ requires the moment evaluations of Gaussian:
  \begin{equation*}
    \begin{aligned}
      &\E_{\xbf_0} \E_{\xbf, \xbf'}[(\xbf^\top\Psibf^\top\Phibf\xbf')\xbf\xbf^\top] \\
      &= \E_{\xbf \mid \xbf_0}[\xbf\xbf^\top\Abf\xbf_0\xbf^\top]
        && \text{$\triangleleft$ $\Abf \defeq \Psibf^\top\Phibf$} \\
      &= \sigma^2\E[\Abf\xbf_0\xbf_0^\top] + \sigma^2\E[\xbf_0\xbf_0^\top\Abf] \\
        &\phantom{=} + \E[\xbf_0\xbf_0^\top\Abf\xbf_0\xbf_0^\top] + \sigma^2\E[\xbf^\top\Abf\xbf_0]\Ibf_d
        && \text{$\triangleleft$ \cite[\S8.2.3]{Petersen2012}} \\
      &= 2\sigma^2\Abf + \E[\xbf_0\xbf_0^\top\Abf\xbf_0\xbf_0^\top] + \sigma^2\tr(\Abf)\Ibf_d
        && \text{$\triangleleft$ \cite[\S8.2.2]{Petersen2012}} \\
      &= 2\sigma^2\Abf + \{2\Abf + \tr(\Abf)\Ibf_d\} + \sigma^2\tr(\Abf)\Ibf_d
        && \text{$\triangleleft$ \cite[\S8.2.4]{Petersen2012}} \\
      &= (1 + \sigma^2)\{2\Psibf^\top\Phibf + \tr(\Psibf^\top\Phibf)\Ibf_d\}.
    \end{aligned}
  \end{equation*}
  Note that $\Psibf^\top\Phibf = \Abf = \Abf^\top = \Phibf^\top\Psibf$ under \cref{assumption:w_symmetric}.
  By plugging this back,
  \begin{equation*}
    \begin{aligned}
      \Hbf_2
      &= \frac{1}{1+\sigma^2} \left\{ 2\tilde\Psibf\tilde\Phibf^\top\tilde\Psibf\tilde\Psibf^\top + \tr(\tilde\Psibf^\top\tilde\Phibf)\tilde\Psibf\tilde\Psibf^\top \right\} + \stocho(1).
    \end{aligned}
  \end{equation*}
  The desired expression of $\Hbf = \Hbf_1 - \Hbf_2$ is thereby obtained.
\end{proof}

\thmcommutator*

In the proof, we leverage the elementary properties of commutators.
\begin{lemma}
  \label{lemma:commutator}
  For matrices $\Abf$, $\Bbf$, and $\Cbf$ with the same size, we have the following identities.
  \begin{enumerate}
    \item $[\Abf, \Abf] = \Obf$.
    \item $[\Abf, \Bbf] = -[\Bbf, \Abf]$.
    \item $[\Abf, \Bbf\Cbf] = [\Abf, \Bbf]\Cbf + \Bbf[\Abf, \Cbf]$.
    \item $[\Abf\Bbf, \Cbf] = \Abf[\Bbf, \Cbf] + [\Abf, \Cbf]\Bbf$.
  \end{enumerate}
\end{lemma}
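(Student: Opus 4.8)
The plan is to prove all four identities by a direct expansion of the defining relation $[\Abf,\Bbf] \defeq \Abf\Bbf - \Bbf\Abf$; no auxiliary machinery is needed. For item~1, substituting $\Bbf = \Abf$ gives $[\Abf,\Abf] = \Abf\Abf - \Abf\Abf = \Obf$ immediately. For item~2, I would write $[\Abf,\Bbf] = \Abf\Bbf - \Bbf\Abf = -(\Bbf\Abf - \Abf\Bbf) = -[\Bbf,\Abf]$, which is just factoring out a sign.

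For item~3 (the Leibniz rule in the second slot), the plan is to expand both sides and check they agree. The left side is $[\Abf,\Bbf\Cbf] = \Abf\Bbf\Cbf - \Bbf\Cbf\Abf$. The right side is
\begin{equation*}
  [\Abf,\Bbf]\Cbf + \Bbf[\Abf,\Cbf] = (\Abf\Bbf - \Bbf\Abf)\Cbf + \Bbf(\Abf\Cbf - \Cbf\Abf) = \Abf\Bbf\Cbf - \Bbf\Abf\Cbf + \Bbf\Abf\Cbf - \Bbf\Cbf\Abf,
\end{equation*}
and the two occurrences of $\Bbf\Abf\Cbf$ cancel, leaving $\Abf\Bbf\Cbf - \Bbf\Cbf\Abf$, matching the left side. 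Item~4 is handled by the mirror-image computation: $[\Abf\Bbf,\Cbf] = \Abf\Bbf\Cbf - \Cbf\Abf\Bbf$, while $\Abf[\Bbf,\Cbf] + [\Abf,\Cbf]\Bbf = \Abf\Bbf\Cbf - \Abf\Cbf\Bbf + \Abf\Cbf\Bbf - \Cbf\Abf\Bbf$, where the two $\Abf\Cbf\Bbf$ terms cancel. (Alternatively, item~4 follows from item~3 by applying item~2 twice: $[\Abf\Bbf,\Cbf] = -[\Cbf,\Abf\Bbf] = -[\Cbf,\Abf]\Bbf - \Abf[\Cbf,\Bbf] = [\Abf,\Cbf]\Bbf + \Abf[\Bbf,\Cbf]$; I would likely present the direct expansion for uniformity.)

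There is no substantive obstacle here: the statement is purely a matter of algebraic bookkeeping. The only point requiring a modicum of care is tracking that products of matrices do not commute, so that the cancellations in items~3 and~4 must be verified termwise rather than invoked by scalar intuition; once the products are written out in a fixed order the telescoping is transparent. I would write the proof as four short displayed computations, one per identity.
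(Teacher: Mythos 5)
Your proof is correct; the paper states this lemma without proof as an elementary fact, and your direct termwise expansion (with the cancellations in items~3 and~4 checked explicitly) is exactly the standard verification it implicitly relies on.
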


\begin{proof}[Proof of \cref{proposition:commutator_dynamics}]
  \hypertarget{proof:commutator_dynamics}{}
  First, compute the time derivative $\dot\Lbf = \Fbf\dot\Wbf - \dot\Wbf\Fbf + \dot\Fbf\Wbf - \Wbf\dot\Fbf$:
  \begin{equation*}
    \begin{aligned}
      \Fbf\dot\Wbf - \dot\Wbf\Fbf
      &= \Fbf\Hbf^\top\Wbf^{-1} - \Wbf^{-1}\Hbf\Fbf - \rho\Lbf, \\
      \dot\Fbf\Wbf - \Wbf\dot\Fbf
      &= \Wbf\Hbf - \Hbf^\top\Wbf + \Wbf^{-1}\Hbf^\top\Wbf^2 - \Wbf^2\Hbf\Wbf^{-1} - 2\rho\Lbf,
    \end{aligned}
  \end{equation*}
  which implies
  \begin{equation}
    \label{equation:proof:commutator_derivative}
    \dot\Lbf = (\Fbf\Hbf^\top\Wbf^{-1} - \Wbf^{-1}\Hbf\Fbf) + (\Wbf\Hbf - \Hbf^\top\Wbf) + (\Wbf^{-1}\Hbf^\top\Wbf^2 - \Wbf^2\Hbf\Wbf^{-1}) - 3\rho\Lbf.
  \end{equation}
  We substitute $\Hbf = \hat\Hbf$. Then,
  \begin{equation*}
    \Wbf\Hbf - \Hbf^\top\Wbf
    = -2\frac{\Wbf^2\Fbf\Wbf\Fbf\Wbf - \Wbf\Fbf\Wbf\Fbf\Wbf^2}{(1+\sigma^2)N_\Phi N_\Psi^3} - N_\times\frac{\Wbf^2\Fbf\Wbf - \Wbf\Fbf\Wbf^2}{(1+\sigma^2)N_\Phi N_\Psi},
  \end{equation*}
  which can be simplified by \cref{lemma:commutator} as follows:
  \begin{equation*}
    \begin{cases}
      \Wbf^2\Fbf\Wbf\Fbf\Wbf - \Wbf\Fbf\Wbf\Fbf\Wbf^2
      = [\Wbf, \Wbf\Fbf\Wbf\Fbf]\Wbf
      = -(\Lbf\Wbf\Fbf + \Fbf\Wbf\Lbf)\Wbf, \\
      \Wbf^2\Fbf\Wbf - \Wbf\Fbf\Wbf^2
      = [\Wbf, \Wbf\Fbf\Wbf]
      = -\Wbf\Lbf\Wbf.
    \end{cases}
  \end{equation*}
  With the same technique, \cref{equation:proof:commutator_derivative} can be simplified as follows:
  \begin{equation*}
    \begin{aligned}
      \dot\Lbf
      &= \frac{(\Lbf\Wbf + \Wbf\Lbf) - (\Fbf\Lbf\Wbf^{-1} + \Wbf^{-1}\Lbf\Fbf)}{(1+\sigma^2)N_\Phi N_\Psi} \\
      &\phantom{=} - 2\frac{(\Wbf\Fbf\Wbf\Lbf\Wbf + \Wbf\Lbf\Wbf\Fbf\Wbf) + \Wbf^2(\Fbf\Wbf\Lbf + \Lbf\Wbf\Fbf)}{(1+\sigma^2)N_\Phi N_\Psi^3} \\
      &\phantom{=} -N_\times\frac{\Lbf\Wbf^2 + \Wbf^2\Lbf}{(1+\sigma^2)N_\Phi N_\Psi} - 3\rho\Lbf.
    \end{aligned}
  \end{equation*}
  By using $\vec{\Abf\Lbf\Bbf + \Bbf\Lbf\Abf} = (\Bbf \otimes \Abf + \Abf \otimes \Bbf)\vec{\Lbf} = (\Abf \oplus \Bbf) \vec{L}$ for $\Abf, \Bbf \in \sym_d$,
  we obtain $\diff{\vec{\Lbf}}{t} = -\Kbf\vec{\Lbf}$.

  Finally, by applying \citet[Lemma 2]{Tian2021ICML},
  the dynamics of $\Lbf(t)$ satisfies $\norm{\vec{\Lbf(t)}}_2 \le e^{-2\lambda_0t}\norm{\vec{\Lbf(0)}}_2 \to 0$
  under the assumption $\inf_{t \ge 0} \eigmin{(\Kbf(t))} \ge \lambda_0 > 0$.
\end{proof}

\thmeigenspace*

\begin{proof}[Proof of \cref{proposition:stable_eigenspace}]
  \hypertarget{proof:stable_eigenspace}{}
  The proof mostly follows the discussion of \citet[Appendix B.1]{Tian2021ICML}.
  To apply their discussion, all we need to check is the existence of diagonal matrices $\Gbf_1$ and $\Gbf_2$
  such that $\dot\Wbf = \Ubf\Gbf_1\Ubf^\top$ and $\dot\Fbf = \Ubf\Gbf_2\Ubf^\top$
  under the dynamics \cref{equation:dynamics_matrix_expected} with $\Hbf = \hat\Hbf$.

  For $\dot\Wbf$, invertibility of $\Wbf$ implies $\dot\Wbf = \Wbf^{-1}\hat\Hbf - \rho\Wbf$ from the dynamics \cref{equation:dynamics_matrix_expected}.
  With simultaneous diagonalization $\Wbf = \Ubf\Lambdabf_W\Ubf^\top$ and $\Fbf = \Ubf\Lambdabf_F\Ubf^\top$,
  we have $\Wbf^{-1} = \Ubf\Lambdabf_W^{-1}\Ubf^\top$ and $\hat\Hbf = \Ubf\Lambdabf_{\hat H}\Ubf^\top$ for some diagonal matrix $\Lambdabf_{\hat H}$.
  Hence, $\dot\Wbf = \Ubf\Gbf_1\Ubf^\top$ for some diagonal matrix $\Gbf_1$.

  In the same manner, we can verify $\dot\Fbf = \Ubf\Gbf_2\Ubf^\top$ for some diagonal matrix $\Gbf_2$.
\end{proof}

\section{Analysis of saddle-node bifurcation}
\label{section:regime_shift}
\begin{figure}[t]
  \centering
  \input{figure/root_finding}
  \caption{
    Plots of $g(x) = -Ax^6 + x^2$ (blue) and $h(x) = -Bx$ (red).
    \textbf{(Left)} $(A,B) = (1.5,0.6)$
    \textbf{(Center)} $(A,B) = (1.5,0.4)$
    \textbf{(Right)} $(A,B) = (1.5,0)$
  }
  \label{figure:root_finding}
\end{figure}
In \cref{section:regimes}, we claimed that the $w_j$-dynamics \eqref{equation:dynamics_invariant_parabola} entails the three regimes, mainly based on categorization of the numerical plots with different values of $(N_\Phi, N_\Psi, \rho)$ in \cref{figure:stability}.
Here, we show that the equilibrium point sets with different parameter values can indeed be classified into the three regimes.

First, we need slight approximation because the $w_j$-dynamics \eqref{equation:dynamics_invariant_parabola} is sixth-order and extremely challenging to deal with analytically in general.
We choose to set $N_\times (= \tr(\tilde\Phibf^\top\tilde\Psibf)) \approx 0$.
This can be confirmed in our simple numerical experiments in \cref{figure:experiment}.
Then, the $w_j$-dynamics reads:
\begin{equation*}
  \dot{w_j} \approx \frac{1}{(1+\sigma^2)N_\Phi N_\Psi} \underbrace{\left\{ -\frac{2}{N_\Psi^2}w_j^6 + w_j^2 - \rho(1+\sigma^2)N_\Phi N_\Psi w_j \right\}}_{= f(w_j)}.
\end{equation*}
Let us write $f(x) = -Ax^6 + x^2 - Bx$ with $A \defeq 2/N_\Psi^2 > 0$ and $B \defeq \rho(1+\sigma^2)N_\Phi N_\Psi \ge 0$.
Now, we focus on finding the roots of $f(x) = 0$, which are the equilibrium points of the $w_j$-dynamics.
In \cref{figure:root_finding}, we show the graphs of $g(x) = -Ax^6 + x^2$ and $h(x) = -Bx$ with different $B$.
When $B = 0$, we can analytically find the roots of $f(x) = g(x) = 0$ by $g(x) = -Ax^2(x^2+A^{-1/2})(x+A^{-1/4})(x-A^{-1/4})$ and $x = 0, \pm A^{-1/4}$.
This corresponds to the Stable regime in \cref{figure:regime}.
When $B$ is larger than zero and as $h(x) = -Bx$ tilts towards negative slightly, we have four roots as seen in \cref{figure:root_finding} (Center).
This corresponds to the Acute regime in \cref{figure:regime}.
Finally, when $B$ is significantly larger than zero, we have only two roots as seen in \cref{figure:root_finding} (Left), which corresponds to the Collapse regime in \cref{figure:regime}.
These three cases are interpolated smoothly as $B \propto \rho N_\Phi N_\Psi$ changes; to put it differently, as regularization strength $\rho$ and norms $N_\Phi,N_\Psi$ decrease, the regime approaches the Stable.
Note again that we will never perfectly attain the Stable regime because the $w_j$-dynamics diverges as $N_\Phi, N_\Psi \to 0$.

\section{Additional numerical experiments}
\label{section:additional_experiments}

\subsection{Full detail of linear encoder setup}

\begin{figure}[t]
  \centering
  \includegraphics[width=0.7\textwidth]{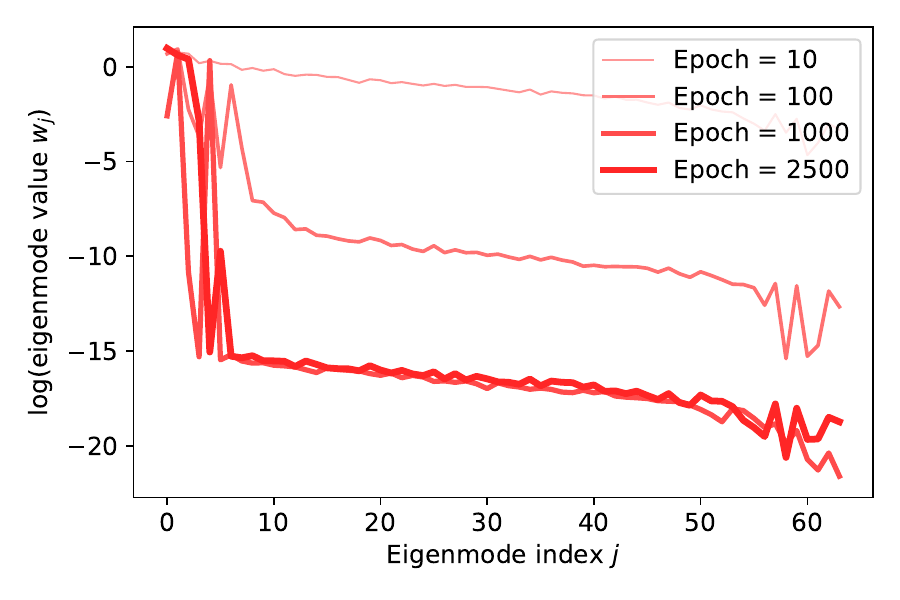}
  \caption{
    Time evolution of the eigenvalues.
    At each epoch, the projection head eigenvalue $w_j$ for each $j \in \set{1, 2, \dots, 64}$ is plotted.
    The eigenvalue values are uniformly averaged within $[\mathrm{epoch}-50, \mathrm{epoch}+50]$ to avoid visual clutter due to eigenvalue fluctuation.
  }
  \label{figure:eigenvalue_histogram}
\end{figure}

\begin{figure}[t]
  \centering
  \includegraphics[width=0.40\textwidth]{figure/experiment_gaussian_regime_1}
  \includegraphics[width=0.40\textwidth]{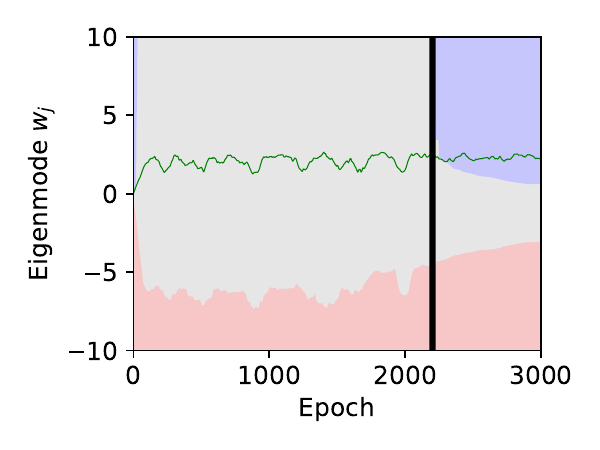}
  \includegraphics[width=0.40\textwidth]{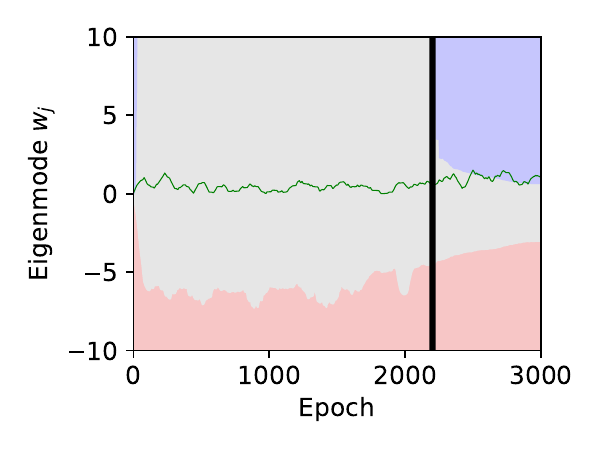}
  \includegraphics[width=0.40\textwidth]{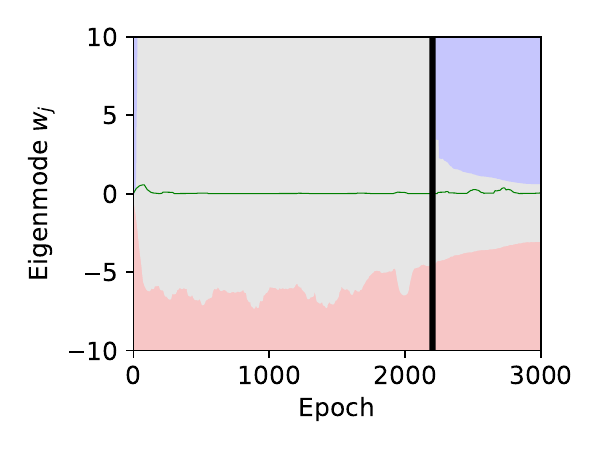}
  \caption{
    The eigenvalues of the projection head $w_j$ are plotted, with background colors illustrating three intervals where \colorbox{red!20}{\raisebox{0pt}[0.8ex]{$w_j$ diverges}}, \colorbox{black!20}{\raisebox{0pt}[0.8ex]{$w_j$ collapses}}, and \colorbox{blue!20}{\raisebox{0pt}[0.8ex]{$w_j$ stably converges}} at each epoch. Each color corresponds to those in \cref{figure:regime}. The vertical black line indicates the bifurcates from Collapse (epoch < 2200) to Acute (epoch > 2200).
    The eigenvalue values are uniformly averaged within $[\mathrm{epoch}-50, \mathrm{epoch}+50]$ to avoid visual clutter due to eigenvalue fluctuation.
    \textbf{(Top left)} $j=1$ (the largest eigenvalue);
    \textbf{(Top center)} $j=2$ (the second largest eigenvalue);
    \textbf{(Top center)} $j=3$ (the third largest eigenvalue);
    \textbf{(Bottom left)} $j=4$ (the fourth largest eigenvalue);
  }
  \label{figure:each_eigenvalue}
\end{figure}

We further analyze the numerical experiments in \cref{section:experiments}.
In \cref{section:experiments}, we focused on illustration of the leading eigenvalue $w_j$ of the projection head, which is shown in \cref{figure:experiment}.
Here, we investigate the other eigenvalues.
Throughout the analysis, we focus on the absolute value of the eigenvalue $|w_j|$ because the eigendecomposition is non-unique;
indeed, due to the decomposition $\Wbf = \sum_{j=1}^{64}w_j\ubf_j\ubf_j^\top$ ($\ubf_j$ is the eigenvector), the eigenvalue signs are irrelevant to the norms of the eigenvectors.
Flipping the sign does not affect the orthonormality of the eigenvectors, keeping $\Ubf$ to be a orthogonal matrix.
After taking the absolute values, all eigenvalues are sorted in the descending order, where $j=1$ and $j=64$ correspond to the largest and smallest, respectively.

\Cref{figure:eigenvalue_histogram} illustrates time evolution of the eigenvalue values of the projection head.
Initially ($\mathrm{epoch} = 10$), the eigenvalue distribution mildly concentrates around the origin, which can be seen in the initialization of the eigenvalue distribution in \cref{figure:eigeninit} as well.
As time evolves, the distribution quickly concentrates at zero very sharply, whereas a few positive eigenvalues that are significantly larger than zero remains.

Next, we investigate time evolution of each eigenvalue individually.
\Cref{figure:each_eigenvalue} shows time evolution of the largest ($j=1$), second largest ($j=2$), third largest ($j=3$), and fourth largest ($j=4$), using the same illustration as \cref{figure:experiment}.
The top left figure ($j=1$) is the same one as in \cref{figure:experiment}.
As can be seen in this case, only $w_{1}$, $w_{2}$, and $w_{3}$ remain positive and all the other eigenvalues (including $5 \le j \le 64$ omitted from \cref{figure:each_eigenvalue}) converges to nearly zero.
In our theoretical analysis, we argued that there are only two stable equilibrium in the Acute regime ($w_j = 0$ and $w_j = w_\blacktriangledown^{(+)}$ in \cref{figure:regime}).
Given this, the convergences of $w_{\set{1,2,3}}$ to positive values (that even fall in the stable interval) and $\set{w_j}_{j=3}^{63}$ to zero are reasonable in terms of the dynamics.
Moreover, this convergence avoids the complete collapse $\Wbf \to \Obf$; the complete collapse is avoided if several (but not necessarily all) eigenvalues remain to be non-zero.

\subsection{Simulation with nonlinear encoder}
\label{section:experiments_nonlinear}

\begin{figure}[t]
  \centering
  \includegraphics[width=0.7\textwidth]{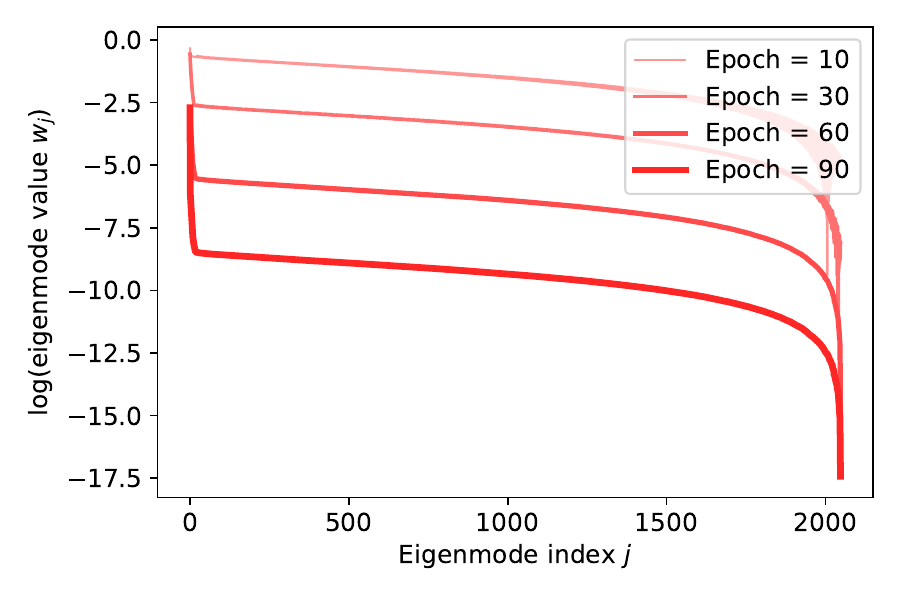}
  \caption{
    Time evolution of the eigenvalues (trained with the nonlinear encoder).
    At each epoch, the projection head eigenvalue $w_j$ for each $j \in \set{1, 2, \dots, 2048}$ is plotted.
  }
  \label{figure:eigenvalue_histogram_cifar}
\end{figure}

\begin{figure}[t]
  \centering
  \includegraphics[width=0.40\textwidth]{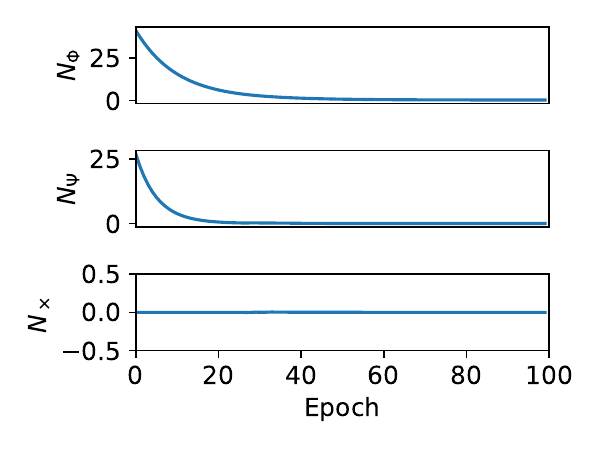}
  \includegraphics[width=0.40\textwidth]{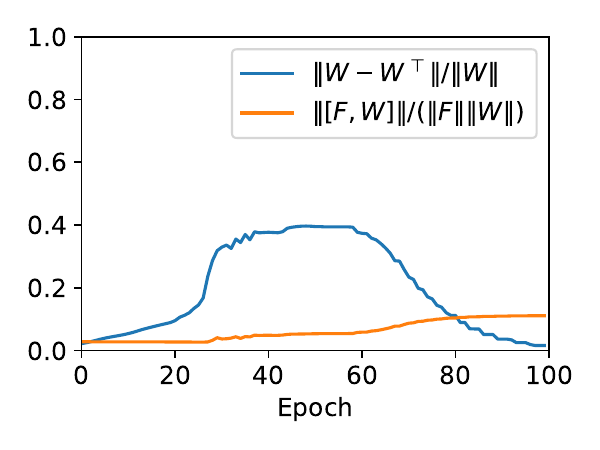}
  \caption{
    Numerical simulation of the SimSiam model with the nonlinear encoder.
    \textbf{(Left)} Time evolution of $N_\Phi$, $N_\Psi$, and $N_\times$.
    \textbf{(Right)} Asymmetry of the projection head $\Wbf$ (measured by the relative error of $\Wbf - \Wbf^\top$) and non-commutativity of $\Fbf$ and $\Wbf$ (measured by the relative error of the commutator $[\Fbf, \Wbf]$).
  }
  \label{figure:symmetry_comm_cifar}
\end{figure}

\begin{figure}[t]
  \centering
  \includegraphics[width=0.40\textwidth]{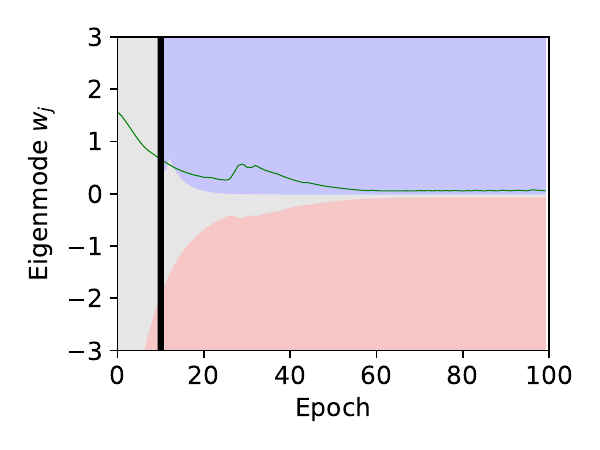}
  \includegraphics[width=0.40\textwidth]{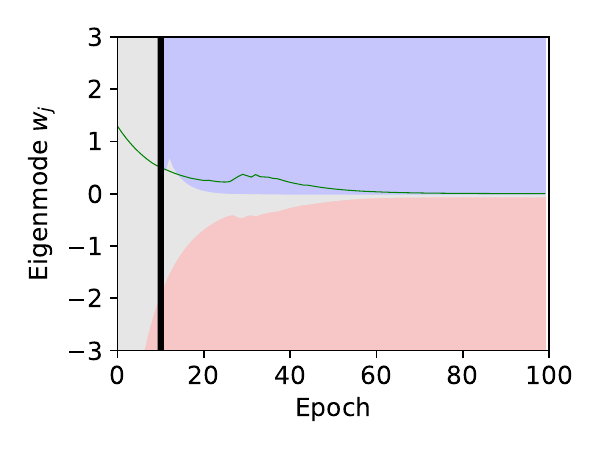}
  \includegraphics[width=0.40\textwidth]{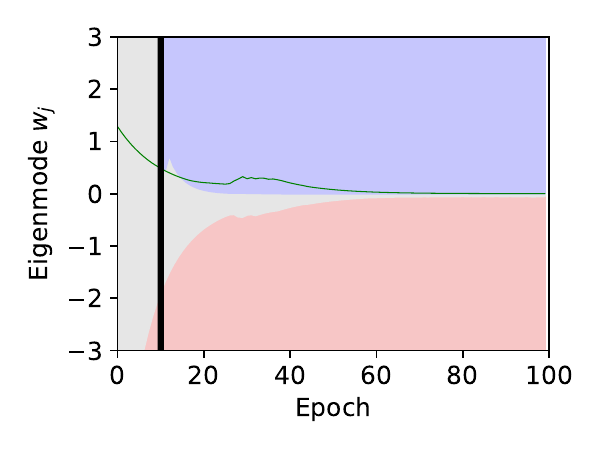}
  \includegraphics[width=0.40\textwidth]{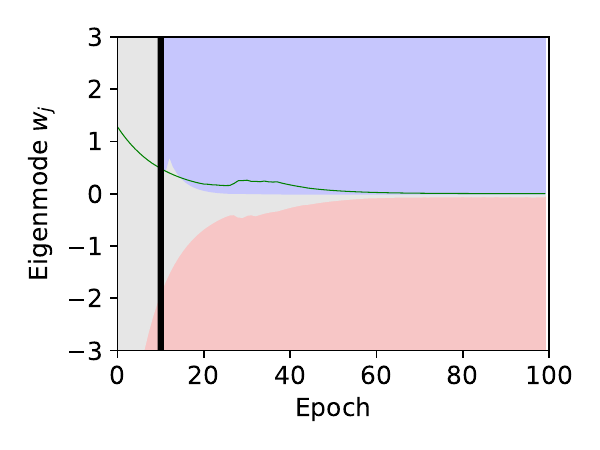}
  \caption{
    The eigenvalues of the projection head $w_j$ are plotted (trained with the nonlinear encoder), with background colors illustrating three intervals where \colorbox{red!20}{\raisebox{0pt}[0.8ex]{$w_j$ diverges}}, \colorbox{black!20}{\raisebox{0pt}[0.8ex]{$w_j$ collapses}}, and \colorbox{blue!20}{\raisebox{0pt}[0.8ex]{$w_j$ stably converges}} at each epoch. Each color corresponds to those in \cref{figure:regime}. The vertical black line indicates the bifurcates from Collapse ($\mathrm{epoch} < 10$) to Acute ($\mathrm{epoch} > 10$).
    \textbf{(Top left)} $j=1$ (the largest eigenvalue);
    \textbf{(Top center)} $j=2$ (the second largest eigenvalue);
    \textbf{(Top center)} $j=3$ (the third largest eigenvalue);
    \textbf{(Bottom left)} $j=4$ (the fourth largest eigenvalue);
  }
  \label{figure:each_eigenvalue_cifar}
\end{figure}

Here, we complement our analysis by conducting the numerical simulation of the SimSiam model using a nonlinear encoder.
As in \cref{section:experiments}, we use the official implementation of SimSiam.
The implementation differences from the official code are listed below:
\begin{itemize}
  \item Dataset: CIFAR-10
  \item The feature encoder: ResNet-18, but the last fully-connected layers being replaced with linear $\Phibf$
  \item The projection head: linear $\Wbf \in \Rbb^{2048 \times 2048}$ without bias ($h = 2048$)
  \item Parameter initialization: following \cref{assumption:initialization} and $\Wbf$ are symmetrized by $(\Wbf + \Wbf^\top)/2$
  \item Optimizer: the momentum SGD with the initial learning rate $0.005$
  \item Regularization strength: $\rho = 0.008$
  \item Epochs: $100$
\end{itemize}
We used the same data augmentation applied to the ImageNet dataset in the official implementation.
The other details remain to be the same as the official implementation.

To see how the nonlinear setup aligns with \cref{assumption:w_symmetric} (symmetry of $\Wbf$), and \cref{assumption:equilibrium_commutator} (commutativity of $\Wbf$ and $\Fbf$), we show them in \cref{figure:symmetry_comm_cifar}.
The norm parameters $N_\Phi$, $N_\Psi$, and $N_\times$ gradually shrink.
During the training epochs, $\Wbf$ becomes relatively asymmetric, but converges to a symmetric matrix.
This point needs to be carefully addressed in future work.
We can suppose that $\Wbf$ and $\Fbf$ remain to be commutative.

The time evolution of the eigenvalues of the linear projection head $\Wbf$ is shown in \cref{figure:eigenvalue_histogram_cifar}, and each eigenvalue ($j = 1, 2, 3, 4$) is shown in \cref{figure:each_eigenvalue_cifar}.
Each background color in \cref{figure:each_eigenvalue_cifar} indicates whether $w_j$ diverges (red), collapses (gray), and stably converges (blue).
The boundaries of these intervals are computed by numerical root finding of the $w_j$-dynamics \eqref{equation:dynamics_invariant_parabola}.
We observe that only a few number of eigenvalues remain to be non-zero while most of them degenerate to zero; general trend observed in the synthetic case using the linear encoder (\cref{section:regime_shift}).
Moreover, we can see that the initial Collapse regime ($\mathrm{epoch} < 10$) is lifted to the Acute regime ($\mathrm{epoch} > 10$) in \cref{figure:each_eigenvalue_cifar}.
The (non-zero) eigenvalues eventually converge to the values in the (blue) stable interval.

\end{document}